\newcounter{fakedefinition} 
\newcommand{\fakedefinition}[2]{%
    \refstepcounter{fakedefinition}
    \noindent\textbf{Definition \thefakedefinition} (\textit{#2})\textbf{.}\par
    \setcounter{definition}{\value{fakedefinition}} 
}
\theoremstyle{definition} 
\newtheorem{definition}{Definition}
\newtheorem{fact}{Fact}
\theoremstyle{plain} 
\newtheorem{theorem}{Theorem}
\newtheorem{lemma}{Lemma}
\newtheorem{note}{Note}
\newcommand{\N}{\mathbbm{N}}
\newcommand{\e}{\epsilon}
\newcommand{\eps}{\epsilon}
\newcommand{\nbits}{\{0,1\}^n}
\newcommand{\dist}{\mathcal{D}}
\newcommand{\adv}{\mathcal{A}}
\newcommand{\x}{\mathbf{x}}
\newcommand{\Prob}{\mathbb{P}}
\newcommand{\ver}{\mathbf{A}}
\newcommand{\prov}{\mathbf{B}}
\newcommand{\learn}{\mathbf{L}}
\newcommand{\game}{\mathcal{G}}
\newcommand{\tv}{\triangle}
\newcommand{\err}{\text{err}}
\newcommand{\xs}{\mathbf{x}}
\newcommand{\ys}{\mathbf{y}}
\newcommand{\Key}{\textsc{FHE.KeyGen}}
\newcommand{\Enc}{\textsc{FHE.Enc}}
\newcommand{\Dec}{\textsc{FHE.Dec}}
\newcommand{\Eval}{\textsc{FHE.Eval}}
\newcommand{\inspace}{{\nbits}}
\newcommand{\inspaceg}{{\mathcal{X}_n}}
\newcommand{\outspace}{{\{0,1\}}}
\newcommand{\outspaceg}{{\mathcal{Y}_n}}
\newcommand{\binoutspace}{{\{0,1\}}}
\newcommand{\rclass}{{\mathcal R}}
\newcommand{\cclass}{{\mathcal C}}
\newcommand{\exoracle}{{\text{Ex}(\dist_n,h_n)}}
\newcommand{\lcrypto}{\task^{\text{crypto}}}
\newcommand{\hclass}{\mathcal{H}}
\newcommand{\fclass}{\mathcal{F}}
\newcommand{\probmeasure}{\Delta}
\newcommand{\task}{\mathbb{L}}
\newcommand{\messpace}{\mathcal{M}}
\newcommand*{\rej}{{\ooalign{\lower.3ex\hbox{$\sqcup$}\cr\raise.4ex\hbox{$\sqcap$}}}}
\def\eqref#1{equation~(\ref{#1})}
\def\1{\bm{1}}
\def\eps{{\epsilon}}
\DeclareMathAlphabet{\mathsfit}{\encodingdefault}{\sfdefault}{m}{sl}
\SetMathAlphabet{\mathsfit}{bold}{\encodingdefault}{\sfdefault}{bx}{n}
\newcounter{mycomment}
\tikzset{triangle 45/.tip={Triangle[angle=45:3pt]}}
\tikzset{
    ncbar angle/.initial=90,
    ncbar/.style={
        to path=(\tikztostart)
        -- ($(\tikztostart)!#1!\pgfkeysvalueof{/tikz/ncbar angle}:(\tikztotarget)$)
        -- ($(\tikztotarget)!($(\tikztostart)!#1!\pgfkeysvalueof{/tikz/ncbar angle}:(\tikztotarget)$)!\pgfkeysvalueof{/tikz/ncbar angle}:(\tikztostart)$)
        -- (\tikztotarget)
    },
    ncbar/.default=0.5cm,
}
\tikzset{square left brace/.style={ncbar=0.35cm}}
\tikzset{square right brace/.style={ncbar=-0.35cm}}
\tikzset{round left paren/.style={ncbar=0.5cm,out=120,in=-120}}
\tikzset{round right paren/.style={ncbar=0.5cm,out=60,in=-60}}
\theoremstyle{plain}
\title{The Good, the Bad and the Ugly: Meta-Analysis of Watermarks, Transferable Attacks and Adversarial Defenses}
\author{
  Grzegorz~Głuch \\
  UC Berkeley \\ 
  \texttt{gluch@berkeley.edu} \\
  \And
  Berkant~Turan \\
  Zuse Institute Berlin\\
  \texttt{turan@zib.de} \\ 
  \AND
  Sai~Ganesh~Nagarajan \\
  University of Southern Denmark\\
  \texttt{sgnagarajan@imada.sdu.dk} \\
  \And
  Sebastian~Pokutta\\
  Zuse Institute Berlin and TU Berlin\\
  \texttt{pokutta@zib.de}
}
\begin{document}

\maketitle

\begin{abstract}
We formalize and analyze the trade-off between backdoor-based watermarks and adversarial defenses, framing it as an interactive protocol between a verifier and a prover. While previous works have primarily focused on this trade-off, our analysis extends it by identifying transferable attacks as a third, counterintuitive, but necessary option. Our main result shows that for all learning tasks, at least one of the three exists: a \textit{watermark}, an \textit{adversarial defense}, or a \textit{transferable attack}. By transferable attack, we refer to an efficient algorithm that generates queries indistinguishable from the data distribution and capable of fooling \textit{all} efficient defenders. Using cryptographic techniques, specifically fully homomorphic encryption, we construct a transferable attack and prove its necessity in this trade-off. Finally, we show that tasks of bounded VC-dimension allow adversarial defenses against all attackers, while a subclass allows watermarks secure against fast adversaries.

\end{abstract}

\section{Introduction}
\label{sec:introduction}

Backdoor attacks and adversarial robustness are closely related: the former embeds hidden behaviors via subtle input changes, while the latter seeks to ensure stable predictions against worst-case input modifications. Recent works \citep{NEURIPS2020_8b406655, sun2020poisoned, 10506988, fowl2021adversarial, 10.5555/3540261.3541501} have empirically explored the trade-offs between adversarial robustness and backdoor attacks. Their general observation is that ``models that are made to be robust against certain types of adversarial attacks may become more vulnerable to backdoor attacks.''

Outside classic methods such as adversarial training \cite{adversarialtraining}, which apply generally, provable defenses against general adversaries are known only for restricted function classes—particularly when the defense focuses on detecting, rather than classifying, attacks. Provided the attacks are not \emph{indistinguishable} from the data distribution, rejection-based methods can effectively defend against arbitrarily crafted adversarial examples (beyond $\ell_p$-norm perturbations) \cite{arbitraryexamples}. Other studies show that backdoors can be planted using cryptographic schemes \cite{plantingbackdoors}, making their detection computationally intractable. Conversely, standard post-hoc defenses such as randomized smoothing may inadvertently remove such backdoors \cite{kolterrandomizedsmoothing}.

Formally analyzing these trade-offs must account for the full spectrum of strategies available to attackers and defenders—each with potentially different computational capacities and resources \cite{bubeck2019adversarial, garg2020adversarially}—which presents a significant challenge. 
Recently, \cite{christiano2024backdoor} attempted to characterize function classes for which one can provably defend against backdoor attacks: they demonstrate that classes with bounded VC-dimension admit defenses, and they construct classes for which designing a defense is computationally infeasible.

Finally, studying this trade-off is crucial because backdoor attacks can also serve as watermarks in black-box settings \cite{adi2018turning, zhangwatermark, Namba2019RobustWO}. Understanding their interplay informs us of the limitations and applicability of both defenses and backdoor-based watermarks across different learning tasks. This paper formalizes these notions and undertakes a meta‐analysis of them. In doing so, it led us to identify a third scheme—the \emph{transferable attack}, which is an attack that is \emph{indistinguishable} from the data distribution and can fool all models trainable within given resource constraints.

\begin{figure*}[tb]
    \centering
    \begin{subfigure}[b]{0.32\textwidth}
        \centering 
        \begin{tikzpicture}[>=triangle 45]
            \node[alice, sword, minimum size=1cm] (alice) {\parbox{4cm}{\hspace{1em}\centering \small \textbf{Alice verifies} \\ \hspace{1em}\textbf{if $f$ was stolen}}};   
            \node[bob, shield, mirrored, minimum size=1cm, xshift=3cm] (bob) {\parbox{4cm}{ \hspace{-0.3em}\centering \small \textbf{Bob proves} \\ \hspace{0em}\textbf{innocence}}};
            \draw[-{Triangle[angle=45:6pt]}, thick] (alice) -- node[above] {$\textbf{x}$} (bob);    
            \draw[-{Triangle[angle=45:6pt]}, thick] (alice) to[bend left] node[above] {$f$} (bob); 
            \draw[-{Triangle[angle=45:6pt]}, thick] (bob) to[bend left] node[below] {$\textbf{y}$} (alice);
            \draw[-{Triangle[angle=45:6pt]}, thick] ([xshift=3pt]alice.north west) to[out=110, in=80, looseness=3] node[above] {learns $f$} ([xshift=-5pt]alice.north east);
        \end{tikzpicture}
        \vfill
        \vspace{1.5em} 
        \caption*{\centering (a)} 
        \caption*{A \textbf{Watermark} is an efficient algorithm that computes a low-error classifier $f$ and a set of queries $\xs$ such that (fast) defenders are unable to find low-error answers $\ys$ nor distinguish $\xs$ from the data distribution.}
        \vspace{2.4em}
    \end{subfigure}
        \hfill
    \begin{tikzpicture}
        \draw[-, thick] (0,0) -- (0,-7.1); 
    \end{tikzpicture}
    \hfill
    \begin{subfigure}[b]{0.32\textwidth}
        \centering
        \begin{tikzpicture}
            \node[alice, sword, minimum size=1cm] {\parbox{5cm}{\centering  \small \hspace{1em}  \textbf{Alice verifies} \\ \hspace{1em} \textbf{robustness}  
            }};
            \node[bob, mirrored, shield, minimum size=1cm, xshift=3cm] {\parbox{5cm}{\centering \small \textbf{Bob proves} \\ \textbf{defense}}};    
            \draw[-{Triangle[angle=45:6pt]}, thick] (alice) -- node[above] {$\textbf{x}$} (bob);    
            \draw[-{Triangle[angle=45:6pt]}, thick] (bob) to[bend right] node[above] {$f$} (alice);
            \draw[-{Triangle[angle=45:6pt]}, thick] (bob) to[bend left] node[below] {$b$} (alice);
            \draw[-{Triangle[angle=45:6pt]}, thick] ([xshift=3pt]bob.north west) to[out=110, in=80, looseness=3] node[above] {learns $f$} ([xshift=-5pt]bob.north east);
        \end{tikzpicture}
        \vspace{1.4em} 
        \vfill
        \caption*{\centering (b)} 
        \caption*{An \textbf{Adversarial Defense} is an efficient algorithm that computes a low-error classifier $f$ and a detection bit $b$, such that (fast) adversaries are unable to find queries $\xs$, which look indistinguishable from the data distribution and where $f$ is incorrect.}
        \vspace{1.5em}
    \end{subfigure}
            \hfill
    \begin{tikzpicture}
        \draw[-, thick] (0,0) -- (0,-7.1); 
    \end{tikzpicture}
    \hfill
     \begin{subfigure}[b]{0.33\textwidth}
        \centering
        \hspace{1em}\begin{tikzpicture}
            \node[alice, sword, minimum size=1cm] {\parbox{5cm}{\centering \small \textbf{Alice verifies} \\ \textbf{transferability}}};
            \node[bob, mirrored, shield, minimum size=1cm, xshift=3cm] {\parbox{5cm}{\centering \small \textbf{Bob proves} \\ \textbf{defendability}}};
            \draw[-{Triangle[angle=45:6pt]}, thick] (alice) to[bend left] node[above] {$\textbf{x}$} (bob);
            \draw[-{Triangle[angle=45:6pt]}, thick] (bob) to[bend left] node[below] {$\textbf{y}$} (alice);
            \draw[-{Triangle[angle=45:6pt]}, thick] ([xshift=3pt]bob.north west) to[out=110, in=80, looseness=3] node[above] {learns $f$} ([xshift=-5pt]bob.north east);
        \end{tikzpicture}
        \vspace{0.5em} 
        \caption*{\centering (c)} 
        \caption*{A \textbf{Transferable Attack} is an efficient algorithm that computes queries $\xs$ that look indistinguishable from the data distribution, and that fool all efficient defenders.}  
        \vspace{0.em}
        \raggedleft
        \includegraphics[scale=0.065]{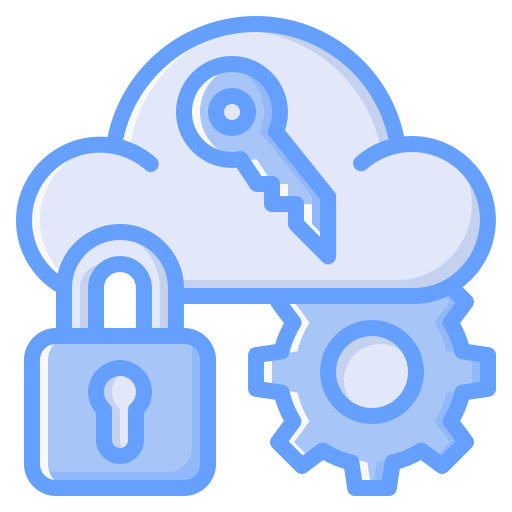} 
        
    \end{subfigure}
    \caption{Schematic overview of the interaction structure, along with short, informal versions of our definitions of (a) Watermark (Definition~\ref{def:watermarkfull}), (b) Adversarial Defense (Definition~\ref{def:defense3}), and (c) Transferable Attack (Definition~\ref{def:transferableadvexp}), with (c) tied to cryptography (see Section~\ref{sec:crypto}).}
    \label{fig:defoverview}
\end{figure*}

\subsection{Our Contributions}

We study classifcation learning tasks and our main result shows that:

\begin{center}
    \textit{For every learning task, at least one of the three must exist: \\ A Watermark, an Adversarial Defense, or a Transferable Attack.}
\end{center}

To prove this, we formalize and extend existing definitions of watermarks and adversarial defenses as an interactive protocol between two players—Alice and Bob, (see Figure \ref{fig:defoverview}) \citep{goldwasserproofs}. This protocol always has at least one winner---either Alice can embed an unremovable watermark, Bob can construct a strong adversarial defense, or a third option emerges: a transferable attack.

\paragraph{Transferable Attack.} To understand transferable attacks, consider the following game. Alice interacts with a player who claims to have a secure model for a learning task $\mathcal{D}, h$, where $\mathcal{D}$ is the data distribution and $h$ is the ground truth. 
Alice sends queries and observes the responses. She wins if she can generate queries that (i) cause significant errors and (ii) remain indistinguishable from samples drawn from $\mathcal{D}$.\footnote{We note that what we consider a Transferable Attack is slightly nonstandard - there is no explicit model the attacks on which we consider transferability of. However, we can think that Alice first trains a model, then tries to find adversarial examples for it, and sends those as the queries in the game.} 
Whether she succeeds depends on the computational and data resources available to her and the other player. 
If Alice can defeat \emph{any} equally-resourced player, we call her queries a \textit{Transferable Attack}. 
Intuitively, the more challenging a query becomes, the easier it should be to detect---but surprisingly, we show that transferable attacks do exist. Specifically, we prove:
\begin{itemize}
    \item The existence of a \textbf{Transferable Attacks} as defined above. Our construction uses cryptographic techniques, particularly Fully Homomorphic Encryption (FHE) \citep{Gen09}. This establishes that Transferable Attacks form the third fundamental option in the trade-off.
    \item That any learning task supporting a Transferable Attack must be computationally complex. More precisely, Transferable Attacks imply the existence of a \textit{cryptographic primitive}. 
\end{itemize}

Notably, \cite{tramer2017space}, suggest a conjecture for the transferability of adversarial attacks:
\emph{If two models achieve low error for some task while also exhibiting low robustness to
adversarial examples, adversarial examples crafted on one model transfer to the other.} 
However, they qualify their hypothesis by showing that it is not true in general, but only when the models are of the same class- thus complicating the picture.
Our meta-analysis shows that whether the conjecture holds depends crucially on the \textbf{computational-resources} available to the attacker and defender.
We argue that foregrounding computational resources in the problem of robustness clarifies the landscape considerably.

\paragraph{Constructions.} We show that the existence of these properties does not depend on any particular algorithm or a model that is used. It depends on the learning task at hand and the computational resources for Alice and Bob. We give examples of learning tasks that provably support Watermarks, Adversarial Defenses and Transferable Attacks thereby justifying our framework.
Concretely: (1) The construction of a learning task with a \textbf{Transferable Attack}, where the attacker needs strictly \emph{fewer} resources than the defender. (2) We show that learning tasks with bounded VC dimension allow \textbf{Adversarial Defenses} against all (even computationally unbounded) attackers, ruling out Transferable Attacks in these settings. (3) We construct a \textbf{Watermark} for a class of learning tasks with bounded VC-dimension. Interestingly, in this case, both a Watermark and an Adversarial Defense coexist. Overall, these examples reiterate that the dependence on resources and the learning task are crucial.

\noindent\textbf{Resource Allocation Implications.}
Our theorem can provide a rule of thumb for defenders. If an adversary has computational budget \(T\) (e.g., time), then allocating \(T^{2}\) computation on the defender's side suffices (up to constant factors) to construct a defense whenever one exists under our assumptions. Conversely, if a \(T^{2}\)-budgeted procedure fails, this provides evidence that the instance admits \emph{transferable attacks}, which in-turn precludes watermarks in our framework. To our knowledge, this is among the first quantitative attacker--defender resource trade-offs stated in a model-agnostic setting, i.e., beyond capacity-bounded regimes (e.g., finite VC dimension).

\section{Related Work}
While most trade-offs between backdoor-based attacks and adversarial defenses have been studied empirically, \citet{pal2020game} show (theoretically) that \emph{Fast Gradient Methods} (attacks) and \emph{Randomized Smoothing} (defenses) can form a Nash equilibrium under a restricted additive-noise model. They also provide experiments confirming this on datasets such as MNIST.\footnote{Pal and Vidal (2020) consider a game with a slightly different utility than ours.} Our theoretical results generalize their findings to a broader class of attacks and defenses.

\subsection{Adversarial Robustness} 

Adversarial robustness research includes techniques like adversarial training \citep{adversarialtraining}, which improves resilience via adversarial examples, and certified defenses \citep{certifieddefense}, which provide provable guarantees within perturbation bounds. Methods such as randomized smoothing \citep{kolterrandomizedsmoothing} extend these guarantees, but mainly as a defense against $\ell_p$ norm perturbations. Moving beyond this, the work of \citet{arbitraryexamples} establish provable and computationally efficient defenses against arbitrary adversarial examples by detection-based defense mechanisms, but on bounded VC-dimension classes as well.

\subsection{Backdoor-Based Watermarks} In black-box settings, where model auditors lack access to internal parameters, watermarking methods often involve embedding backdoors during training. Techniques by \cite{adi2018turning} and \mbox{\cite{zhangwatermark}} use crafted input patterns as triggers linked to specific outputs, enabling ownership verification by querying the model with these specific inputs. Advanced methods by \cite{watermarkwithadvexp} utilize adversarial examples, which are perturbed inputs that yield predefined outputs. Further enhancements by \cite{Namba2019RobustWO} focus on the robustness of watermarks, ensuring the watermark remains detectable despite model alterations or attacks. In the domain of Natural Language Processing (NLP), backdoor-based watermarks have been studied for Pre-trained Language Models (PLMs)\footnote{We refer readers to Appendix~\ref{apx:futureBC}, where we discuss potential avenues for generalizing our framework to generative tasks. We explore the differences between generation and verification.
}, as exemplified by works such as \citep{gu2022watermarking, peng2023you} and \citep{li2023plmmark}. These approaches embed backdoors using rare or common word triggers, ensuring watermark robustness across downstream tasks and resistance to removal techniques like fine-tuning or pruning.

\subsection{Undetectable Backdoors}
A key related work by \citet{plantingbackdoors} shows how a learner can plant undetectable backdoors in any classifier. The authors propose two frameworks: one employing digital signature schemes \citep{10.1145/22145.22178} to make backdoored models indistinguishable from the original to any computationally-bounded observer, and another using Random Fourier Features (RFF) \citep{NIPS2007_013a006f}, which remains undetectable even with full visibility of the model and training data.

In a very recent work, \cite{christiano2024backdoor} introduce a defendability framework that formalizes the interaction between an attacker planting a backdoor and a defender tasked with detecting it. 

A major difference from our work, is that in their approach, the attacker chooses the distribution, whereas we keep the distribution fixed. This makes defendability in their model harder since the attacker has more control. However, in their framework, the backdoor trigger \( x^* \) is sampled from \(\mathcal{D}\), so the attacker does not influence it. In contrast, our model allows the attacker to choose specific \( x \)'s, making defendability in their model easier in this regard. 
Thus, the definitions are a priori incomparable. However, there are many interesting connections.
They show that computationally unbounded defendability is equivalent to PAC learnability, while we, in a similar spirit, show an Adversarial Defense for all tasks with bounded VC-dimension.
Using cryptographic tools, they show that the class of polynomial-size circuits is not efficiently defendable, while we use different cryptographic tools to give a Transferable Attack, which rules out a Defense.

\section{Modeling}\label{sec:model}

A key aspect of our formalization is modeling Alice and Bob while accounting for computational resources. We do so by representing them as families of circuits indexed by a size parameter~$n$, a standard approach in complexity theory. Families of Boolean circuits—as used here—are Turing complete and can simulate any algorithm, making them a natural abstraction for studying learning tasks independent of implementation details. Although circuits are less common in computational learning theory than more loosely specified algorithms, this finer granularity is essential for our results.

\subsection{Learning}

\begin{definition}[\textit{Learning Task (Informal)}]\label{def:inftask}
Let \(\inspace\) be an input space\footnote{We work over $\mathbb{F}_2$ (i.e., inputs in $\{0,1\}^n$) for analytic convenience. Any ML pipeline—processing images, tokens, or graphs—executes a finite sequence of arithmetic and logical operations that can be compiled into polynomial-size Boolean circuits. 
}
A \emph{learning task} $\task$ is defined as a sequence $\{\task_n\}_{n \in\N}$, where each $\task_n$ is a \emph{fixed} distribution over pairs $(\dist_n,h_n)$. 
Concretely, for each $n$, we draw $(\dist_n,h_n) \sim \task_n$, where $\dist_n$ is a distribution with domain $\inspace$, and \( h_n: \inspace \to \{0,1\} \) is a \emph{ground truth} labeling function.
\end{definition}

To every model \(f \colon \inspace \rightarrow \outspace\), we associate $\err(f) := \mathbb{E}_{x \sim \dist_n}[f(x) \neq h_n(x)]$. 

And for \(q \in \N, \xs \in (\inspace)^q,\) and predictions \(\ys \in \outspace^q\), we define the empirical error to be: 
$\err(\xs,\ys) := \frac{1}{q} \ \sum_{i \in [q]} \mathbbm{1}_{\{ h_n(x_i) \neq y_i\}}$.

\begin{definition}[\textit{Computationally Bounded Learnability (Informal)}]\label{def:inflearncirc}
Let $\eps, \delta : \mathbb{N} \rightarrow (0,1)$ be functions that specify the allowable error and confidence levels for each input size $n$, respectively. A learning task $\task = \{\task_n\}_{n \in \mathbb{N}}$ is said to be \emph{learnable} to error $\eps(n)$ with confidence $1 - \delta(n)$ and circuit complexity $S(n)$ if there exists a family of circuits $\{C_n\}_{n \in \mathbb{N}}$, where each circuit $C_n$ has size at most $S(n)$, such that for every sufficiently large $n$,  the following condition holds:    
\[
\Prob_{\substack{(\dist_n, h_n) \sim \task_n}} \Big[ \err_{\dist_n, h_n}(f_n) \leq \eps(n) \Big] \geq 1 - \delta(n),
\]
where $f_n: \inspace \rightarrow \outspace$ is the hypothesis computed by the circuit $C_n$ when given sample access to $(\dist_n, h_n)$, i.e., $f_n \leftarrow C_n$. In other words, with probability at least $1 - \delta(n)$ over the choice of $(\dist_n, h_n)$ drawn from $\task_n$, the circuit $C_n$ successfully computes a function $f_n$ that achieves an error rate of at most $\eps(n)$.
\end{definition}

Definition~\ref{def:inflearncirc} is very similar to the standard definition of efficient PAC learnability \cite{kearnsclt}.
The main difference is that instead of defining `efficient' as polynomial in $n$ (and $1/\eps,1/\delta$) we define it as implementable by a circuit of size given by a fixed function $S(n)$.
The reason for this increased generality is that we need finer control over sizes than, e.g., polynomial or exponential (see Theorem~\ref{thm:maininformal} where the separation between two circuit families is $S(n)$ \textit{versus} $\sqrt{S(n)}$).
A second difference is that compared to the standard definition we bound the size of circuits \cite{arorabarak}, not the running time.
Assuming a processing unit without parallel execution the two notions can be thought equivalent.
Formal definitions and additional details can be found in Appendix~\ref{app:prelim}.
In the rest of the main part of the paper, we will often omit the parameter $n$ when it is clear context.

\paragraph{Connections to Existing Models of Learning}
Definition~\ref{def:inftask} represents a learner's prior knowledge as a distribution over pairs $(\dist_n, h_n)$, where $\dist_n$ is a distribution on the domain $\inspace$ and $h_n: \inspace \rightarrow \outspace$ is the ground truth. This models a learning task as a distribution over both input distributions and hypotheses, assuming a realizable scenario with a fixed ground truth.
Unlike distribution-specific or restricted family settings \cite{kalai2008agnostically, feldman2006pac}, our definition does not limit the underlying support. While standard PAC learning requires generalization across all domain distributions, it often fails to explain the performance of complex models like DNNs, as their rich hypothesis classes make standard PAC bounds ineffective \cite{zhang2021understanding, nagarajan2019uniform}. Our definition aims to bridge this gap by providing a formal framework that aligns with contemporary practical learning scenarios.

\subsection{Interaction}

Alice and Bob will engage in interaction.
To measure their computational resources, we require a specification of how the model $f_n$ is transmitted between them.
We assume that before the interaction starts they agree on a family of function classes $\fclass = \{\fclass_n\}_n$ as well as an encoding of them into messages of some length.
This modeling implies that $f_n$ are sent \emph{white-box}.
One example of such a family is the family of neural networks of a given architecture.
See Appendix~\ref{app:prelim} for details.

\subsection{Computational Indistinguishability}
A crucial property of interest will be the indistinguishability of distributions.
For a pair of distributions $\dist^0,\dist^1$ consider the following game between a sender and the distinguisher $C$: (1) The sender samples a bit $b \sim U(\{0,1\})$ and then draws a random sample $x \sim \mathcal{D}^b$,
(2) $C$ receives $x$ and outputs $\hat{b} := C(x) \in \{0,1\}$. $C$ wins if $\hat{b} = b$.
We define the \emph{advantage} of $C$ for \emph{distinguishing} $\dist^0$ from $\dist^1$
\[
\Prob_{b \sim U(\{0,1\}), x \sim \mathcal{D}^b}[C(x) = b] = \frac{1}{2} + \gamma. 
\]
For a pair of families of distributions $\dist^0 = \{\dist^0_n\}_n,\dist^1 = \{\dist^1_n\}_n$, a function $\gamma : \N \rightarrow (0,\tfrac{1}{2})$, and a size bound $S : \N \rightarrow \N$ we say $\dist^0,\dist^1$ are $\gamma$-\emph{indistinguishable} for circuits of size $S$ if for every $n$, every circuit $C$ (also known as the distinguisher) of size $S(n)$ the \emph{advantage} of $C$ for \emph{distinguishing} $\dist_n^0$ from $\dist_n^1$ is at most $\gamma(n)$.

\section{Watermarks, Adversarial Defenses and Transferable Attacks}\label{sec: MainDef}

In our protocols, Alice (\(\ver\), verifier) and Bob (\(\prov\), prover) engage in interactive communication, with distinct roles depending on the specific task. Each protocol is defined with respect to a learning task \(\task\), an error parameter \(\varepsilon \in \left(0, \frac{1}{2}\right)\), and circuit size bounds \(S_{\ver}\) and \(S_{\prov}\), which are functions of $n$. A scheme is successful if the conditions of the protocols are satisfied. We denote the set of such circuits by \(\textsc{Scheme}(\task, \varepsilon, S_{\ver}(n), S_{\prov}(n))\), where \(\textsc{Scheme}\) refers to \(\textsc{Watermark}\), \(\textsc{Defense}\), or \(\textsc{TransfAttack}\) (see Appendix~\ref{apx:formal_def} for the formal versions of all the definitions).

\fakedefinition{3}{Watermark, informal}\label{def:watermarkfull} 

A family of circuits \(\{\ver^{\textsc{Watermark}}_n\}_n\) of sizes \(\{S_{\ver}(n)\}_n\), implements a \textit{backdoor-based watermarking scheme} for the learning task \(\task\) with error parameter $\eps>0$ if, for every sufficiently large $n$, an interactive protocol in which first $(\dist_n,h_n) \sim \task_n$ and then \(\ver^{\textsc{Watermark}}_n\) computes a classifier \(f \colon \inspace \rightarrow \outspace\) and a sequence of queries \(\mathbf{x} \in (\inspace)^q\), and a prover \(\prov_n\) outputs \(\mathbf{y} = \prov_n(f, \mathbf{x}) \in \outspace^q\), satisfies the following properties:
\noindent
\begin{enumerate}
    \item{\textbf{Correctness:}} $f$ has low error, i.e.,
    $\err(f) \leq \eps$.
    \item{\textbf{Uniqueness:}}
    There exists a prover \(\prov_n\),  of size   \(S_{\ver}(n)\), which provides low-error answers, such that $\err(\xs,\ys) \leq 2 \eps$.
    \item{\textbf{Unremovability:}}   
    For every prover $\prov_n$ of size $S_{\prov}(n)$, it holds that $\err(\xs,\ys) > 2\eps$.
    \item{\textbf{Undetectability:}}   
    For every prover $\prov_n$ of size $S_{\prov}(n)$, the advantage of $\prov_n$ in distinguishing the queries \(\mathbf{x}\) generated by \(\ver^{\textsc{Watermark}}_n\) from random queries sampled from \(\dist_n^q\) is small.
\end{enumerate}
\begin{table}[ht]
\centering
\caption{Backdoor–based Watermarks (Definition~\ref{def:watermarkfull}).}
\label{tab:watermarks}
\resizebox{\textwidth}{!}{%
\begin{tabular}{p{3.5cm}p{4.5cm}p{6.5cm}}
\toprule
\textbf{Property in Def.~3} & \textbf{Classical analogue} & \textbf{Why it is needed?} \\
\midrule
Correctness & Standard accuracy requirement (e.g., \citep{adi2018turning}) & Ensures watermarking does not degrade task performance. \\
Uniqueness & Verifiability in black-box watermarking & Prevents false positives on independently-trained models. \\
Unremovability & Robustness to pruning / fine-tuning (e.g., \citep{Namba2019RobustWO}) & Captures the usual “cannot be scrubbed” criterion. \\
Undetectability & Stealth requirement (e.g., \citep{watermarkwithadvexp}) & Guarantees watermark triggers look like in-distribution data. \\
\bottomrule
\end{tabular}%
}
\end{table}

As summarized in \autoref{tab:watermarks}, uniqueness ensures that watermark verification cannot be triggered by independently trained models. Formally, we require that any $\prov_n$ (Bob), who did not use $f$ and trained a model $f_\text{Scratch}$ using the specified procedure, must be accepted as distinct. This reflects realistic settings where multiple models could emerge independently.

\fakedefinition{4}{Adversarial Defense, informal}\label{def:defense3}

A family of circuits\(\{\prov^{\textsc{Defense}}_n\}_n\) of sizes \(\{S_{\prov}(n)\}_n\), implements an \textit{adversarial defense} for the learning task \(\task\) with error parameter $\eps > 0$, if for every sufficiently large $n$, an interactive protocol in which first $(\dist_n,h_n) \sim \task_n$ and then \(\prov^{\textsc{Defense}}_n\) computes a classifier \(f \colon \inspace \rightarrow \outspace\), while \(\ver_n\) replies with \(\mathbf{x} = \ver_n(f)\), where \(\mathbf{x} \in (\inspace)^q\), and \(\prov^{\textsc{Defense}}_n\) outputs \(b = \prov^{\textsc{Defense}}_n(f, \mathbf{x}) \in \{0,1\}\), satisfies the following properties:
\noindent
\begin{enumerate}
    \item{\textbf{Correctness:}}
    $f$ has low error, i.e., $\err(f) \leq \eps$.
    
    \item{\textbf{Completeness:}} 
    When $\xs \sim \dist_n^q$, then $b = 0$.
    \item{\textbf{Soundness:}} For every $\ver_n$ of size $S_{\ver}(n)$, we have
    $\err(\xs,f(\xs)) \leq 7\eps \ \text{ or } \ b = 1$. 
    
\end{enumerate}

\begin{table}[ht]
\centering
\caption{Adversarial Defenses (Definition~\ref{def:defense3}).}
\label{tab:defenses}
\resizebox{\textwidth}{!}{%
\begin{tabular}{p{3.5cm}p{4.5cm}p{6.5cm}}
\toprule
\textbf{Property in Def.~4} & \textbf{Classical analogue} & \textbf{Why it is needed?} \\
\midrule
Correctness & Baseline test-error requirement in certified / detection-based defences & Ensures the defended model remains useful. \\
Completeness & ``No false positive'' guarantee in detection frameworks (\citep{arbitraryexamples}) & Prevents trivial defences that reject everything. \\
Soundness & Detection + robustness guarantee (rejection-based defenses) & Formalises that attacks must both fool and stay indistinguishable. \\
\bottomrule
\end{tabular}%
}
\end{table}

The key requirement for a successful defense is the ability to detect when it is being tested (see the soundness and completeness properties in \autoref{tab:defenses}). To bypass the defense, an $\ver_n$ (Alice) must provide samples that are both adversarial, causing the classifier to err, and indistinguishable from samples of $\mathcal{D}_n$.

\fakedefinition{5}{Transferable Attack, informal}\label{def:transferableadvexp}

A family of circuits \(\{\ver^{\textsc{TransfAttack}}_n\}_n\) of sizes \(\{S_\ver(n)\}_n\), implements a \textit{transferable attack} for the learning task \(\task\) with error parameter $\eps > 0$, if for every sufficiently large $n$, an interactive protocol in which first $(\dist_n,h_n) \sim \task_n$ and then \(\ver^{\textsc{TransfAttack}}_n\) computes \(\mathbf{x} \in (\inspace)^q\) and \(\prov_n\) outputs \(\mathbf{y} = \prov_n(\mathbf{x}) \in \outspace^q\) satisfies the following properties: 

\noindent
\begin{enumerate}
    \item{\textbf{Correctness:}}
    Size $S_\prov(n)$ is sufficient to learn a classifier of low-error, $\err(f) \leq \eps$.

    \item{\textbf{Transferability:}}
    For every prover $\prov_n$ of size $S_\ver(n)$, we have 
    $\err(\xs,\ys) > 2\eps$.
    
    \item{\textbf{Undetectability:}}   
    For every prover \(\prov_n\) of size \(S_\prov(n)\), the advantage of \(\prov_n\) in distinguishing the queries \(\mathbf{x}\) generated by \(\ver^{\textsc{TransfAttack}}_n\) from random queries sampled from \(\dist_n^q\) is small.
\end{enumerate}

\begin{table}[ht]
\vspace{-5mm}
\centering
\caption{Transferable Attacks (Definition~\ref{def:transferableadvexp}).}
\label{tab:transferable}
\resizebox{\textwidth}{!}{%
\begin{tabular}{p{3.5cm}p{4.5cm}p{6.5cm}}
\toprule
\textbf{Property in Def.~5} & \textbf{Classical analogue} & \textbf{Why it is needed?} \\
\midrule
Correctness & Baseline learnability precondition & Ensures a meaningful low-error model exists for the attacker to exploit. \\
Transferability & Cross-model adversarial transfer (\citep{tramer2017space}) & Captures worst-case attacks that succeed regardless of defender architecture or training procedure. \\
Undetectability & Stealth / indistinguishability (\citep{arbitraryexamples}) & Guarantees defenders cannot filter the adversarial queries, aligning with cryptographic indistinguishability. \\
\bottomrule
\end{tabular}%
}
\end{table}

\section{Main Result}\label{sec:mainresult}

We are ready to state an informal version of our main theorem (see Appendix~\ref{apx:mainthm}, for the full version). The key idea is to define a \textit{zero-sum game} between $\ver_n$ (Alice) and $\prov_n$ (Bob), for every $n$, where the actions of each player are all possible circuits that can be realized with size $S_{\ver}(n)$ and $S_{\prov}(n)$. Notably, this game is finite, but there are exponentially many such actions for each player. We rely on some key properties of such large zero-sum games \citep{lipton1994simple} to argue about our main result. 

\begin{theorem}[\textit{Main Theorem, informal}]\label{thm:maininformal} 
For every $\eps \in \big(0,\frac12\big), S : \N \rightarrow \N$ and learning task $\task$ learnable to error $\eps$ with high confidence with circuit complexity $S(n)$, at least one of these three exists\footnote{We remark that formally the existence does not hold for all sufficiently large $n$ but only with some `frequency'. See Theorem~\ref{thm:main} for a formal statement.}:
\begin{align*}
\textsc{Watermark}&\left( \task,\eps,S(n),o\left(\frac{\sqrt{S(n)}}{\log(S(n))}\right) \right), \\
\textsc{Defense}&\left( \task,\eps,o\left(\frac{\sqrt{S(n)}}{\log(S(n))}\right),O(S(n)) \right), \\
\textsc{TransfAttack} & \Big( \task,\eps,S(n),S(n) \Big).
\end{align*}
\end{theorem}

\begin{proof}[Proof (Sketch).]
The intuition of the proof relies on the complementary nature of Definitions~\ref{def:watermarkfull} and~\ref{def:defense3}. 
Specifically, every attempt to remove a fixed Watermark can be transformed to a potential Adversarial Defense, and vice versa. 
We define a zero-sum game $\mathcal{G}$ between circuits for watermarking $\ver_n$ and circuits attempting to remove a watermark $\prov_n$. The set of (pure) strategies of each player are all possible circuits that can be realized with size $S_{\ver}(n)$ and $S_{\prov}(n)$, and the payoff is determined by the probability that the errors and rejections meet specific requirements. 
It is well known that this two-player zero-sum game admits a Nash equilibrium (NE) and the value of the game is unique \cite{v1928theorie}. Let $\{\ver_n^{\textsc{Nash}}\}_n$ and $\{\prov_n^{\textsc{Nash}}\}_n$ be the NE strategies of Alice and Bob respectively. 
For each $n \in \N$, a careful analysis shows that depending on the value of the game, we have a Watermark, an Adversarial Defense, or a Transferable Attack. In the first case, where the expected payoff at the NE is greater than a threshold, we show there is an Adversarial Defense. As an illustration, consider some $n \in \N$, for which
we define $\prov_n^{\textsc{Defense}}$ as follows.
$\prov_n^{\textsc{Defense}}$ first learns a low-error classifier $f$, then sends $f$ to the party that is attacking the Defense, then receives queries $\xs$, and simulates $(\ys,b) = \prov_n^{\textsc{Nash}}(f,\xs)$.
The bit $b = 1$ if $\prov_n^{\textsc{Nash}}$ thinks it is attacked. 
Finally, $\prov_n^{\textsc{Defense}}$ replies with $b' = 1$ if $b = 1$, and if $b = 0$ it replies with $b'=1$ if the fraction of queries on which $f(\xs)$ and $\ys$ differ is high.
Careful analysis shows $\prov_n^{\textsc{Defense}}$ is an Adversarial Defense.
In the second case, where the expected payoff at the NE is below the threshold, we have either a Watermark or a Transferable Attack. The full proof can be found in Appendix~\ref{apx:mainthm}.
\end{proof}

\section{Transferable Attacks and Cryptography}\label{sec:crypto} 
In this section, we show that tasks with Transferable Attacks exist.
To construct such examples, we use cryptographic tools.
But importantly, the fact that we use cryptography is not coincidental.
As a second result of this section, we show that every learning task with a Transferable Attack \textit{implies} a certain cryptographic primitive.
One can interpret this as showing that Transferable Attacks exist only for \emph{complex learning tasks}, in the sense of computational complexity theory.

\subsection{A Cryptography-based Task with a Transferable Attack}\label{sec:transfattack}

Next, we give an example of a cryptography-based learning task with a Transferable Attack.
The following is an informal statement of the formal version (Theorem~\ref{thm:transfattackformal}) given in Appendix~\ref{apx:transfattack}.

\begin{theorem}[\textit{Transferable Attack for a Cryptography-based Learning Task, informal}]\label{thm:transfattackinformal}
There exists a learning task $\lcrypto$ and $\ver$ such that for all sufficiently small $\eps$
\[
\ver \in \textsc{TransfAttack} \left( \lcrypto, \eps, S_{\ver} \approx \frac{1}{\eps},S_{\prov} = \Omega \left(\frac{1}{\eps^2} \right) \right).
\]
Moreover, $\lcrypto$ is such that for every $\eps$, $\approx \frac{1}{\eps}$ time (and $O \left( \frac{1}{\eps} \right)$ samples) is enough, and $\Omega \left( \frac{1}{\eps} \right)$ samples (and in particular time) is necessary to learn a classifier of error $\eps$.
\end{theorem}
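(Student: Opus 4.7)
The plan is to construct a cryptography-based learning task $\lcrypto$ and an attacker $\ver$ with the required properties, then verify the learnability and transferable-attack statements separately. The key cryptographic primitive will be Fully Homomorphic Encryption (FHE): samples will reveal only a classical ``easy'' parameter, so that PAC learning succeeds with $\Theta(1/\eps)$ samples, while the attacker uses FHE's semantic security to hide an adversarial bit inside each query. Fix an FHE scheme at security parameter $\lambda = \Theta(\log^2(1/\eps))$ and generate $(sk, pk)$; include $pk$ in the public description of $\lcrypto$. Let $\inspace = [0,1] \times \mathcal{C}$ with $\mathcal{C}$ the ciphertext space, and take $\hclass = \{h_{sk,\theta}\}_{\theta \in [0,1]}$ with
\[
h_{sk,\theta}(r, c) \;:=\; \mathbbm{1}[r \geq \theta] \,\oplus\, \Dec_{sk}(c).
\]
The distribution $\dist$ draws $r \sim \mathrm{Unif}([0,1])$ and outputs $(r, \Enc_{pk}(0))$; on its support the task degenerates to plain threshold learning on $r$, whose sample complexity is $\Theta(1/\eps)$ in both directions, giving the learnability claim immediately.

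For the attacker, $\ver$ will emit $q = \tilde\Theta(1/\eps)$ queries, each of the form $(r_j, \Enc_{pk}(b_j))$ with $r_j \sim \mathrm{Unif}([0,1])$ and $b_j$ an independent uniform bit. Each encryption takes $\mathrm{poly}(\lambda) = \mathrm{polylog}(1/\eps)$ time, so $\ver$ runs in $\tilde O(1/\eps)$ time overall. For \emph{undetectability}, I would apply a standard hybrid over the $q$ coordinates to reduce distinguishing the attack queries from $\dist^q$ (against any $T_\prov = 1/\eps^2$-time prover) to per-ciphertext distinguishing of $\Enc_{pk}(0)$ from $\Enc_{pk}(b_j)$, which FHE semantic security at parameter $\lambda$ rules out. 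For \emph{transferability}, the true label on the $j$-th query is $\mathbbm{1}[r_j \geq \theta] \oplus b_j$; replacing $\Enc_{pk}(b_j)$ by $\Enc_{pk}(0)$ in every coordinate (another hybrid) shows that the prover's output $y_j$ is computationally independent of $b_j$, so the per-query accuracy is at most $1/2$ plus a quantity negligible in $\lambda$. A Hoeffding-style concentration over the $q$ queries, union-bounded over the at most $2^{\tilde O(1/\eps)}$ provers runnable in time $T_\ver = 1/\eps$, pushes the empirical error above $2\eps$ with overwhelming probability once $q = \tilde\Theta(1/\eps)$.

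The main obstacle will be the simultaneous calibration of $\lambda$ so that three constraints coexist: (a) each FHE encryption costs $\mathrm{poly}(\lambda)$ small enough that the attacker's $q$ encryptions fit in $T_\ver \approx 1/\eps$; (b) FHE semantic security at parameter $\lambda$ holds against $T_\prov = 1/\eps^2$-time distinguishers, so both hybrid arguments go through; and (c) $q$ is large enough for the empirical error to concentrate above $2\eps$ uniformly over all $T_\ver$-time provers. Choosing $\lambda = \Theta(\log^2(1/\eps))$ under a subexponential-hardness variant of the underlying FHE assumption (e.g., subexponential LWE) gives security against roughly $2^{\sqrt{\lambda}} = \mathrm{poly}(1/\eps)$-time distinguishers while keeping per-encryption cost $\mathrm{polylog}(1/\eps)$; this threads the needle between (a) and (b). A secondary subtlety will be that the per-query error events are not strictly independent, which I would address by first passing to the all-zero hybrid world (where each $y_j$ is a fixed function of the $r_j$'s), and then re-randomizing the $b_j$'s to couple the error count to a sum of independent Bernoullis, enabling the Hoeffding bound.
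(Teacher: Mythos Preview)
Your construction is correct and takes a genuinely different route from the paper's. The paper builds $\lcrypto$ from a ``lines on a circle'' task and makes the distribution an equal mix of cleartext samples and FHE-encrypted samples; the attacker learns an approximate boundary $h^{\ver}_{w'}$ from the cleartext half and sends encryptions of points concentrated in an $O(\eps)$-arc around that boundary. Transferability there comes from a sample-complexity lower bound (Lemma~\ref{lem:learninglwrbnd}): a prover with $1/\eps^2$ time learns only to error $\Omega(\eps^2)$, and that residual error, restricted to the $O(\eps)$-arc, inflates to $\Theta(\eps) > 2\eps$. The homomorphic property of FHE is genuinely used so that the learner can classify the encrypted half of the distribution.

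Your approach instead bakes the secret into the hypothesis via $h_{sk,\theta}(r,c) = \mathbbm{1}[r \ge \theta] \oplus \Dec_{sk}(c)$ and keeps the distribution supported on $\Enc_{pk}(0)$, so on-distribution the task collapses to plain threshold learning. This is simpler in two respects: you need only IND-CPA security (the homomorphic evaluation is never invoked), and transferability holds against \emph{any} polynomial-time prover rather than being calibrated to $T_\prov = 1/\eps^2$, since your per-query error is $\approx 1/2$ rather than $\Theta(\eps)$. The price is that your hypothesis class is more artificial---it hard-codes a decryption key---whereas the paper's hypotheses are ordinary linear separators and the cryptography lives only in the distribution; the paper also exhibits the more delicate phenomenon that a transferable attack can exist even when the prover \emph{can} learn a low-error classifier on the attacker's queries.

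Two small points. First, the union bound over all $T_\ver$-time provers is unnecessary: transferability is ``for every fixed $\prov$, with high probability over $\ver$'s coins,'' so you argue per prover, and your all-zeros hybrid followed by Hoeffding over the fresh $b_j$'s already delivers this. Second, as written, $\hclass = \{h_{sk,\theta}\}_{\theta}$ with a single fixed $sk$ is dangerous: if the class description itself is public then $sk$ is recoverable (evaluate any member at $(0,c)$ to obtain $\Dec_{sk}(c)$). You should index the class by $(sk,\theta)$ jointly and sample both, revealing $pk$ through the samples---this is exactly how the paper handles it with its index $k = (\text{pk},\text{sk},w)$ and distribution $\dist_\ltask$.
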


Notably, the parameters are set so that $\ver$ (the party computing $\xs$) has a \emph{smaller} circuit size than $\prov$ (the party computing $\ys$), specifically $\approx 1/\eps$ compared to $\Omega(1/\eps^2)$. Furthermore, because of the cryptographic tools used, this is a setting where a single input maps to multiple outputs, which deviates away from the setting of classification learning tasks considered in Theorem~\ref{thm:maininformal}.\vspace{-1em}
\begin{proof}[Proof (Sketch).] 
We start with a definition of a learning task that will be later augmented with a cryptographic tool to produce $\lcrypto$.

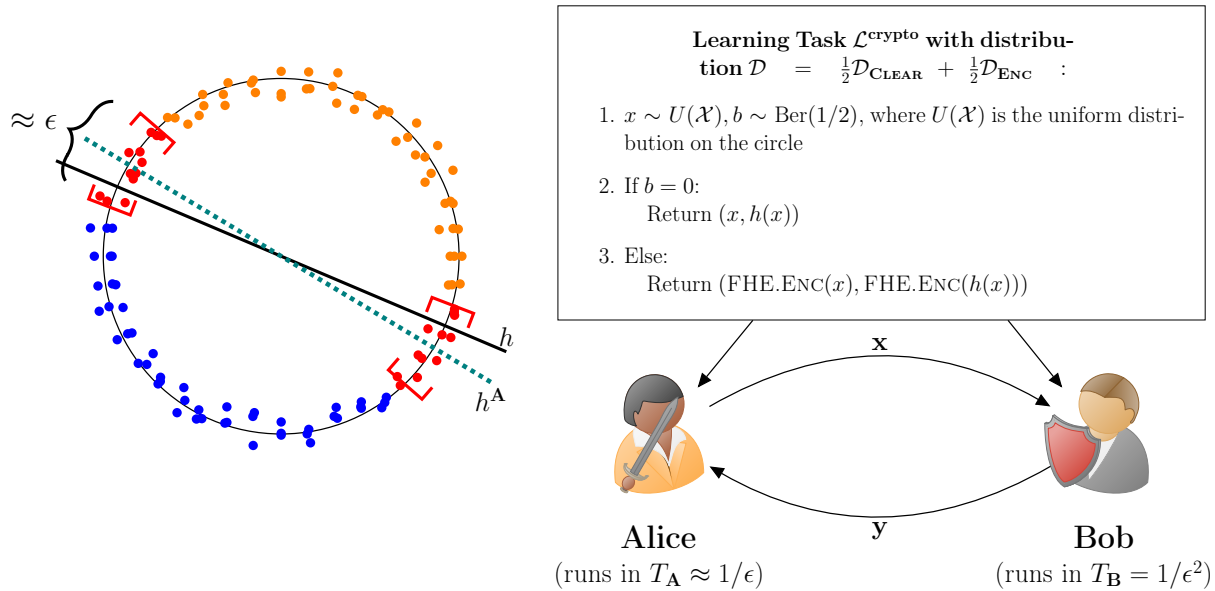
\begin{figure*}[t]
        \centering
        \resizebox{0.45\width}{!}{%
    \begin{tikzpicture}
        \begin{scope}[yshift=0cm]
        \def\radius{4}
        \def\n{40}
        \def\step{360/\n}
        \def\noise{0.3}
    
        \draw[thick] (0,0) circle (\radius);
    
        \foreach \i in {0,...,39} {
            \pgfmathsetmacro\angle{\i*\step}
            \pgfmathsetmacro\noisex{\noise*(rand+0.3)}
            \pgfmathsetmacro\noisey{\noise*(rand+0.3)}
            \pgfmathsetmacro\x{(\radius + \noisex)*cos(\angle-18)}
            \pgfmathsetmacro\y{(\radius + \noisey)*sin(\angle-18)}
    
            \ifdim\angle pt<185pt
                \ifdim\angle pt<5pt
                    \fill[red] (\x,\y) circle (3pt);
                \else
                    \ifdim\angle pt<153pt
                        \fill[orange] (\x,\y) circle (3pt);
                    \else
                        \fill[red] (\x,\y) circle (3pt);
                    \fi
                \fi
            \else
                \ifdim\angle pt >331 pt 
                    \fill[red] (\x,\y) circle (3pt);
                \else
                    \fill[blue] (\x,\y) circle (3pt);
                \fi
            \fi
        }
        \foreach \i in {0,...,39} {
            \pgfmathsetmacro\angle{\i*\step}
            \pgfmathsetmacro\noisex{\noise*(rand-0.5)}
            \pgfmathsetmacro\noisey{\noise*(rand-0.5)}
            \pgfmathsetmacro\x{(\radius + \noisex)*cos(\angle-18)}
            \pgfmathsetmacro\y{(\radius + \noisey)*sin(\angle-18)}
    
            \ifdim\angle pt<185pt
                \ifdim\angle pt<5pt
                    \fill[red] (\x,\y) circle (3pt);
                \else
                    \ifdim\angle pt<153pt
                        \fill[orange] (\x,\y) circle (3pt);
                    \else
                        \fill[red] (\x,\y) circle (3pt);
                    \fi
                \fi
            \else
                \ifdim\angle pt >331 pt 
                    \fill[red] (\x,\y) circle (3pt);
                \else
                    \fill[blue] (\x,\y) circle (3pt);
                \fi
            \fi
        }
        \foreach \i in {0,...,39} {
            \pgfmathsetmacro\angle{\i*\step}
            \pgfmathsetmacro\noisex{\noise*(rand-0.5)}
            \pgfmathsetmacro\noisey{\noise*(rand-0.5)}
            \pgfmathsetmacro\x{(\radius + \noisex)*cos(\angle-18)}
            \pgfmathsetmacro\y{(\radius + \noisey)*sin(\angle-18)}
    
            \ifdim\angle pt<185pt
                \ifdim\angle pt<5pt
                    \fill[red] (\x,\y) circle (3pt);
                \else
                    \ifdim\angle pt<153pt
                        \fill[orange] (\x,\y) circle (3pt);
                    \else
                        \fill[red] (\x,\y) circle (3pt);
                    \fi
                \fi
            \else
                \ifdim\angle pt >331 pt 
                    \fill[red] (\x,\y) circle (3pt);
                \else
                    \fill[blue] (\x,\y) circle (3pt);
                \fi
            \fi
        }

        \def\lineAngle{-23} 
        \def\lineAngleV{-31} 
    
        \pgfmathsetmacro\endX{(1.5+\radius)*cos(\lineAngle)}
        \pgfmathsetmacro\endY{(1.5+\radius)*sin(\lineAngle)}
        \pgfmathsetmacro\negEndX{(1.5+\radius)*cos(\lineAngle+180)}
        \pgfmathsetmacro\negEndY{(1.5+\radius)*sin(\lineAngle+180)}

        \pgfmathsetmacro\endXV{(1.5+\radius)*cos(\lineAngleV)}
        \pgfmathsetmacro\endYV{(1.5+\radius)*sin(\lineAngleV)}
        \pgfmathsetmacro\negEndXV{(1.15+\radius)*cos(\lineAngleV+180)}
        \pgfmathsetmacro\negEndYV{(1.15+\radius)*sin(\lineAngleV+180)}

        \draw[thick, line width=2pt] (\negEndX,\negEndY) -- (\endX,\endY);

        \draw[thick, dashed, line width=3pt, color=teal] (\negEndXV,\negEndYV) -- (\endXV,\endYV);

        \node at ({(\endX)}, {(\endY)}) [above] {\LARGE $h_w$};

        \node at ({(\endXV)}, {(\endYV)}) [below] {\LARGE $h^{\ver}$};
    
        \pgfmathsetmacro\upperRightEndX{(0.5+\radius)*cos(\lineAngleV-10)}
        \pgfmathsetmacro\upperRightEndY{(0.5+\radius)*sin(\lineAngleV-10)}
        \pgfmathsetmacro\upperRightStartX{(0.5-\radius)*cos(\lineAngleV+170)}
        \pgfmathsetmacro\upperRightStartY{(0.5-\radius)*sin(\lineAngleV+170)}
    
        \draw [red, thick, line width=2pt] (\upperRightEndX,\upperRightEndY) to [square left brace ] (\upperRightStartX,\upperRightStartY);
        
        \pgfmathsetmacro\lowerRightEndX{(0.5+\radius)*cos(\lineAngleV+10)}
        \pgfmathsetmacro\lowerRightEndY{(0.5+\radius)*sin(\lineAngleV+10)}
        \pgfmathsetmacro\lowerRightStartX{(0.5-\radius)*cos(\lineAngleV+190)}
        \pgfmathsetmacro\lowerRightStartY{(0.5-\radius)*sin(\lineAngleV+190)}
    
        \draw [red, thick, line width=2pt] (\lowerRightEndX,\lowerRightEndY) to [square right brace ] (\lowerRightStartX,\lowerRightStartY);

        \pgfmathsetmacro\upperLeftEndX{(-0.5-\radius)*cos(\lineAngleV-10)}
        \pgfmathsetmacro\upperLeftEndY{(-0.5-\radius)*sin(\lineAngleV-10)}
        \pgfmathsetmacro\upperLeftStartX{(-0.5+\radius)*cos(\lineAngleV+170)}
        \pgfmathsetmacro\upperLeftStartY{(-0.5+\radius)*sin(\lineAngleV+170)}
    
        \draw [red, thick, line width=2pt] (\upperLeftEndX,\upperLeftEndY) to [square left brace ] (\upperLeftStartX,\upperLeftStartY);
        
        \pgfmathsetmacro\lowerLeftEndX{(-0.5-\radius)*cos(\lineAngleV+10)}
        \pgfmathsetmacro\lowerLeftEndY{(-0.5-\radius)*sin(\lineAngleV+10)}
        \pgfmathsetmacro\lowerLeftStartX{(-0.5+\radius)*cos(\lineAngleV+190)}
        \pgfmathsetmacro\lowerLeftStartY{(-0.5+\radius)*sin(\lineAngleV+190)}
    
        \draw [red, thick, line width=2pt] (\lowerLeftEndX,\lowerLeftEndY) to [square right brace ] (\lowerLeftStartX,\lowerLeftStartY);

        \pgfmathsetmacro\curlyBracketLowerEndX{(-2.-\radius)*cos(\lineAngleV+10)}
        \pgfmathsetmacro\curlyBracketLowerEndY{(-2.-\radius)*sin(\lineAngleV+10)}
        \pgfmathsetmacro\curlyBracketUpperEndX{(-2.-\radius)*cos(\lineAngleV-10)}
        \pgfmathsetmacro\curlyBracketUpperEndY{(-2.-\radius)*sin(\lineAngleV-10)}
        
        \draw [decorate, decoration={brace, amplitude=15pt, raise=-25pt}, thick, line width=2pt] 
            (\curlyBracketLowerEndX,\curlyBracketLowerEndY) -- (\curlyBracketUpperEndX,\curlyBracketUpperEndY) 
            node[midway, xshift=-0.5cm] {\huge $\approx \epsilon$};

        \node[draw, text width=18.5cm, align=center, inner sep=15pt] (dist) at (15.5, 2.1) {\LARGE \vspace{-1em}
            \begin{enumerate}
                \setcounter{enumi}{-1} 
                \item Learning Task $\lcrypto$ samples a distribution  $\dist^w = \frac{1}{2} \dist^w_\textsc{Clear} + \frac{1}{2} \dist^w_\textsc{Enc}$. 
                \item $x \sim U(\inspace)$, $b \sim \text{Ber}(1/2)$, where $U(\inspace)$ is uniform on the circle.
                \item If $b = 0$, return $(x, h_w(x))$.
                \item Else, return $(\Enc(x), \Enc(h_w(x)))$.  
            \end{enumerate}
        };
        
        \node[name=a, shape=alice, sword,  monogramtext=test, minimum size=2cm] (alice) at (10.5, -4.5){\parbox{10cm}{\huge \centering \textbf{Alice}  \\ \LARGE (size $S_\ver \approx 1/\eps$)}};
        
        \node[name=b, shape=bob, mirrored, shield, minimum size=2cm, xshift=3cm] (bob) at (17.5, -4.5) {\parbox{8cm}{\huge \centering \textbf{Bob}  \\ \LARGE (size $S_\prov \approx 1/\eps^2$)}};
        
        \draw[-{Triangle[angle=45:10pt]}, thick] (dist) to (alice);
        \draw[-{Triangle[angle=45:10pt]}, thick] (dist) to (bob);

        \draw[-{Triangle[angle=45:6pt]}, thick] ([xshift=10pt]alice.north west) to[out=110, in=80, looseness=2.5] node[above] {\LARGE learns $h^\ver$} ([xshift=-10pt]alice.north east);
        \draw[-{Triangle[angle=45:6pt]}, thick] ([xshift=10pt]bob.north west) to[out=110, in=80, looseness=2.5] node[above] {\LARGE learns $h^\prov$} ([xshift=-10pt]bob.north east);

        \draw[-{Triangle[angle=45:10pt]}, thick] (alice) to[bend left] node[above] {\LARGE $\mathbf{x}$} (bob);
         \draw[-{Triangle[angle=45:10pt]}, thick] (bob) to[bend left] node[below] {\LARGE $\mathbf{y}$} (alice);   
        \end{scope}

            \end{tikzpicture}}
\vspace{1em}
\caption{
The left part of the figure represents a \textit{Lines on Circle Learning Task} $\task^\circ$ with a ground truth function denoted by $h_w$.
On the right, we define a \textit{cryptography-augmented} learning task derived from $\task^\circ$.
In its distribution, a ``clear'' or an ``encrypted'' sample is observed with equal probability. 
Given their respective times, both $\ver$ and $\prov$ are able to learn a low-error classifier $h^\ver$, $h^\prov$ respectively, by learning only on the \emph{clear samples}. 
$\ver$ is able to compute a Transferable Attack by computing an encryption of a point close to the decision boundary of her classifier $h^\ver$.
}\label{fig:transfattack}
\end{figure*}

\paragraph{Lines on Circle Learning Task $\task^\circ$ (Figure~\ref{fig:transfattack}).}

We associate the input space $\inspace$ with vertices of a $2^n$ regular polygon inscribed in $\{x \in \mathbb{R}^2 \ | \ \|x\|_2 = 1\}$.
Let $\hclass := \{ h_w \ | \ w \in \mathbb{R}^2, \|w\|_2 = 1\}$, where $h_w(x) := \text{sgn}(\langle w, x \rangle)$. 
Let $\task^\circ$ be a distribution corresponding to the following process: sample $h_w \sim U(\hclass)$, return $(U(\inspace),h_w)$.
Additionally, let $B_w(\alpha) := \{ x \in \inspace \ | \ |\measuredangle(x,w)| \leq \alpha \}$ denote the set of points within an angular distance up to $\alpha$ to $w$.

\paragraph{Fully Homomorphic Encryption (FHE) (Appendix~\ref{apx:FHE}).}
FHE \citep{Gen09} allows for computation on encrypted data \textit{without} decrypting it.
An FHE scheme allows to encrypt $x$ via an efficient procedure $e_x = \Enc(x)$, so that later, for any algorithm $C$, it is possible to run $C$ on $x$ \emph{homomorphically}.
More concretely, it is possible to produce an encryption of the result of running $C$ on $x$, i.e., $e_{C,x} := \Eval(C,e_x)$.
Finally, there is a procedure $\Dec$ that, when given a \textit{secret key} sk, can decrypt $e_{C,x}$, i.e., $y := \Dec(\text{sk},e_{C,x})$, where $y$ is the result of running $C$ on $x$.
Crucially, encryptions of any two messages are indistinguishable for all efficient adversaries.

\paragraph{Cryptography-based Learning Task $\lcrypto$ (Figure~\ref{fig:transfattack}).}
$\lcrypto$ is derived from \textit{Lines on Circle Learning Task} $\task^\circ$.
$\lcrypto$ corresponds to the following process: $w \sim U(\{w \in \mathbb{R}^2 \ | \ \|w\|_2 = 1\})$, return the distribution $\dist^w$, which is an equal mixture of two parts $\dist^w = \frac12 \dist^w_\textsc{Clear} + \frac12 \dist^w_\textsc{Enc}$.
The first part, i.e., $\dist^w_\textsc{Clear}$, is equal to $x \sim U(\inspace)$ with the correct label $y = h_w(x)$.
The second part, i.e., $\dist^w_\textsc{Enc}$, is equal to $x' \sim U(\inspace), y' = h_w(x'), (x,y) = (\Enc(x'),\Enc(y'))$,\footnote{Note that because FHE encryption is probabilistic there are many valid answers for a given $x$.} which can be thought of as $\dist^w_\textsc{Clear}$ under an encryption.
See Figure~\ref{fig:transfattack} for a visual representation. 
Note that we omitted the size parameter $n$ for simplicity.

\paragraph{Transferable Attack (Figure~\ref{fig:transfattack}).}
Consider the following attack strategy $\ver$. 
First, $\ver$ collects $O(1/\eps)$ samples from the distribution $\dist^w_\textsc{Clear}$ and learns a classifier $h^\ver_{w'} \in \hclass$ that is consistent with these samples. 
Since the VC-dimension of $\hclass$ is 2, the hypothesis $h^\ver_{w'}$ has error at most $\eps$ with high probability.\footnote{$\ver$ can also evaluate $h^\ver_{w'}$ homomorphically (i.e., run $\Eval$) on $\Enc(x)$ to obtain $\Enc(y)$ of error $\eps$ on $\dist^w_\textsc{Enc}$ also.
This means that $\ver$ is able to learn a low-error classifier on $\dist^w$.} 
Next, $\ver$ samples a point $x_\textsc{Bnd}$ uniformly at random from a region close to the decision boundary of $h^\ver_{w'}$, i.e., $x_\textsc{Bnd} \sim U(B_{w'}(\eps))$. Finally, with equal probability, $\ver$ sets as an attack $\xs$  either $\Enc(x_\textsc{Bnd})$ or a uniformly random point $\dist^w_\textsc{Clear} = U(\inspace)$.
We claim\footnote{In this proof sketch, we set $q = 1$, i.e., $\ver$ sends only one $x$ to $\prov$. This is not true for the formal scheme.} that $\xs$ satisfies the properties of a Transferable Attack.

Since $h^\ver_{w'}$ has a low error with high probability, $x_\textsc{Bnd}$ is a uniformly random point from an arc containing the boundary of $h_w$ (see Figure~\ref{fig:transfattack}). 
The circuit size of $\prov$ is upper-bounded by $\Omega(1/\eps^2)$, meaning it can only learn a classifier with error $\gtrapprox 10\eps^2$ (see Lemma~\ref{lem:learninglwrbnd} for details). 
$\prov$'s can only learn (Lemma~\ref{lem:learninglwrbnd}) a classifier of error, $\gtrapprox 10\eps^2$.
Taking these two facts together, we expect $\prov$ to misclassify $x'$ with probability 
$
\approx \frac12 \cdot \frac{10\eps^2}{\eps} = 5 \eps > 2\eps,
$
where the factor $\frac12$ takes into account that we send an encrypted sample only half of the time.
This implies \emph{transferability}.
    
Note that $\xs$ is encrypted with the same probability as in the original distribution because we send $\Enc(x_\textsc{Bnd})$ and a uniformly random $\xs \sim \dist^w_\textsc{Clear} = U(\inspace)$ with probability $\frac12$.
Crucially, $\Enc(x_\textsc{Bnd})$ is indistinguishable, for efficient adversaries, from $\Enc(x)$ for any other $x \in \inspace$.
This follows from the security of the $\textsc{FHE}$.
Consequently, \emph{undetectability} holds. \end{proof}

\subsection{Tasks with Transferable Attacks Imply Cryptography}

In this section, we show that a Transferable Attack for any task implies a \textit{cryptographic primitive}.

\paragraph{EFID Pairs.}
In cryptography, an \textit{EFID pair} \citep{goldreich90} is a pair of ensembles of distributions $\dist^0,\dist^1$, that are \textbf{E}fficiently samplable, statistically \textbf{F}ar, and computationally \textbf{I}ndistinguishable.
By a seminal result \citep{goldreich90}, we know that the existence of EFID pairs is equivalent to the existence of \textit{Pseudorandom Generators} (PRG), which can be used for tasks including encryption and key generation \citep{goldreich90}, which makes EFID pairs a useful primitive.

We consider a slight modification of the standard definition of EFID pairs, where instead of defining security to hold against polynomial time adversaries we do it for a fixed size bound function. 
More concretely, for two size bounds $S, S' : \N \rightarrow \N$ we call a pair of ensembles of distributions $(\dist^0,\dist^1)$ an $(S,S')$-EFID pair if for every $n$ (i) $\dist_n^0,\dist_n^1$ are samplable by circuits of size $S(n)$, (ii) $\dist_n^0,\dist_n^1$ are statistically far, (iii) $\dist_n^0,\dist_n^1$ are indistinguishable for circuits of size $S'(n)$.

\paragraph{Tasks with Transferable Attacks imply EFID Pairs.}
The second result 
shows that any task with a Transferable Attack implies the existence of a type of EFID pair. 
This guarantees that any learning task with a Transferable Attack has to be computationally complex.
The proof is in Appendix~\ref{apx:taimpliescrypto}.

\begin{theorem}[\textit{Transferable Attacks imply EFID pairs, informal}]\label{thm:transfattackimpliescryptoinformal}
For every $\e \in (0,1), S_\ver,S_\prov : \N \rightarrow \N, S_\ver \leq S_\prov$, every learning task $\task$ learnable to error $\eps$ with high confidence and circuit complexity $S_\ver$ if there exists $\textsc{TransAttack}(\task,\eps,S_\ver,S_\prov)$ then there exists an $(S_\ver,S_\prov)$-EFID pair. 
\end{theorem}

We note that it is unclear if the existence of EFID-pairs guaranteed by Theorem~\ref{thm:transfattackimpliescryptoinformal} implies PRGs because the sampling of $\dist^0,\dist^1$ requires oracle access to $\task$. Therefore, the standard construction of PRGs from EFID pairs does not automatically transfer.

\section{Tasks with Watermarks and Adversarial Defenses}\label{sec:examples}

As the final pair of results, we present tasks exhibiting Watermarks and Adversarial Defenses. In the first, hypothesis classes with polynomially bounded VC-dimension admit polynomial-size Adversarial Defenses against all attackers. In the second, a learning task of polynomially bounded VC-dimension admits a Watermark secure against fast adversaries. These lemmas highlight the importance of bounding the sizes of $\ver$ and $\prov$. See Figure~\ref{fig:domains} for a visual summary; formal statements and proofs appear in Appendices~\ref{apx:adversarialdefense} and~\ref{apx:watermarks}.

\begin{figure*}[t]
    \centering
    \includegraphics[width=0.85\textwidth]{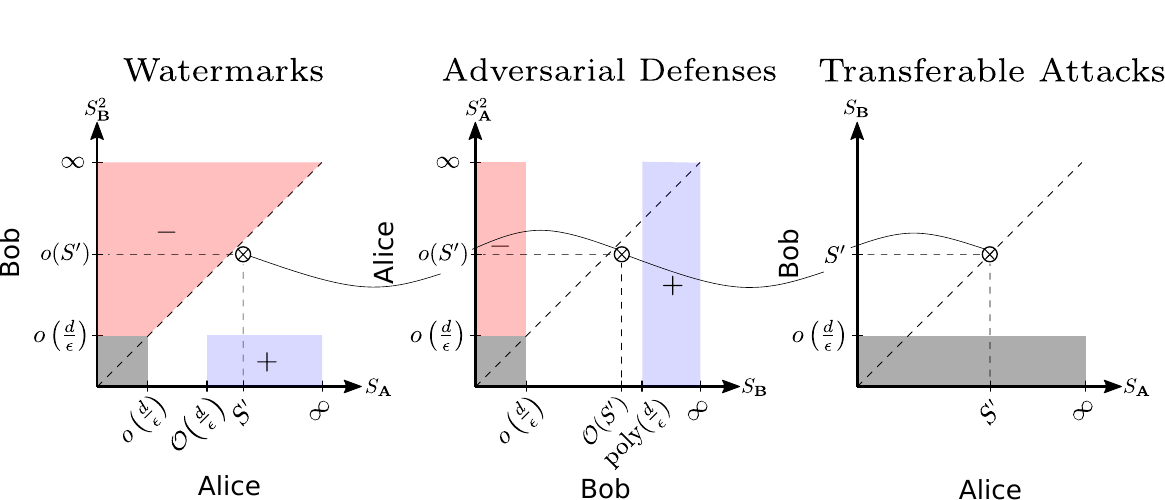}
    \caption{Overview of the taxonomy of learning tasks, illustrating the presence of Watermarks, Adversarial Defenses, and Transferable Attacks for learning tasks of bounded VC dimension.  
    The axes represent the size bound for the parties in the corresponding schemes.
    The blue regions depict positive results, the red negative, and the gray regimes of parameters which are not of interest.
    See Lemma~\ref{lem:VCdefense} and~\ref{lem:VCwatermark} for details about blue regions.
    The curved line represents a potential application of Theorem~\ref{thm:maininformal}, which says that at least one of the three points should be blue.
    }
    \label{fig:domains}
\end{figure*}

\begin{lemma}[\textit{Adversarial Defense for bounded VC-dimension, informal}]\label{lem:VCdefenseinformal}
There exists $\prov$ such that for every $n$, every learning task $\task$ of VC-dimension $n$\footnote{It means that the ground truth sampled from $\task$ belongs to a class of VC-dimension $n$.}, every sufficiently small $\eps$,
\begin{align*}
\prov \in \textsc{Defense}\left( 
\task,\eps, S_\ver = \infty, S_\prov = \poly \left(\tfrac{n}{\e} \right)\right).
\end{align*}
\end{lemma}

\begin{lemma}[\textit{Watermark for bounded VC-dimension against fast adversaries, informal}]\label{lem:VCwatermarkinformal}
For every $d$, there exists a learning task $\task$ of VC-dimension $d$ and $\ver$ such that for every sufficiently small $\e$, 
\[
\ver \in \textsc{Watermark}\big(\task,\; \e,\; q = O\!\left(\tfrac{1}{\eps}\right),\;
S_\ver = O\!\left(\tfrac{d}{\eps}\right),\;
S_\prov = \tfrac{d}{100}\big).
\]
\end{lemma}

 \section*{Acknowledgement}
 This research was partially supported by the Deutsche Forschungsgemeinschaft (DFG, German Research Foundation) under Germany's Excellence Strategy – The Berlin Mathematics Research Center MATH+ (EXC-2046/1, project ID: 390685689). We thank the anonymous reviewers for their thoughtful comments and suggestions, which improved the paper.

\bibliography{icml_bib}
\bibliographystyle{plainnat}


\newpage

\appendix
\onecolumn

\section{Additional Methods in Related Work}\label{sec:relatedwork}
This section provides an overview of the main areas relevant to our work: Watermarking techniques, adversarial defenses, and transferable attacks on Deep Neural Networks (DNNs). Each subsection outlines important contributions and the current state of research in these areas, offering additional context and details beyond those covered in the main body
\subsection{Watermarking}
Watermarking techniques are crucial for protecting the intellectual property of machine learning models. These techniques can be broadly categorized based on the type of model they target. We review watermarking schemes for both classification and generative models, with a primary focus on classification models, as our work builds upon these methods. 

\subsubsection{Watermarking Schemes for classification Models}
classification models, which are designed to categorize input data into predefined classes, have been a major focus of watermarking research. The key approaches in this domain can be divided into black-box and white-box approaches.

\paragraph{Black-Box Setting.} 
In the black-box setting, the model owner does not have access to the internal parameters or architecture of the model, but can query the model to observe its outputs. This setting has seen the development of several watermarking techniques, primarily through backdoor-like methods.

\citet{adi2018turning} and \citet{zhangwatermark} proposed frameworks that embed watermarks using specifically crafted input data (e.g., unique patterns) with predefined outcomes. These watermarks can be verified by feeding these special inputs into the model and checking for the expected outputs, thereby confirming ownership.

Another significant contribution in this domain is by \citet{watermarkwithadvexp}, who introduced a method that employs adversarial examples to embed the backdoor. Adversarial examples are perturbed inputs that cause the model to produce specific outputs, thus serving as a watermark.

\citet{Namba2019RobustWO} further enhanced the robustness of black-box watermarking schemes by developing techniques that withstand various model modifications and attacks. These methods ensure that the watermark remains intact and detectable even when the model undergoes transformations.

Provable undetectability of backdoors was achieved in the context of classification tasks by \citet{plantingbackdoors}. Unfortunately, it is known (\citep{plantingbackdoors}) that some undetectable watermarks are easily removed by simple mechanisms similar to randomized smoothing \citep{kolterrandomizedsmoothing}.

The popularity of black-box watermarking is due to its practical applicability, as it does not require access to the model's internal workings. This makes it suitable for scenarios where models are deployed as APIs or services. Our framework builds upon these black-box watermarking techniques.

\paragraph{White-Box Setting.}
In contrast, the white-box setting assumes that the model owner has full access to the model's parameters and architecture, allowing for direct examination to confirm ownership. The initial methodologies for embedding watermarks into the weights of DNNs were introduced by \citet{uchida2017embedding} and \citet{nagai2018digital}. \citet{uchida2017embedding} presented a framework for embedding watermarks into the model weights, which can be examined to confirm ownership.

An advancement in white-box watermarking is provided by \citet{darvish2019deepsigns}, who developed a technique to embed an $N$-bit ($N \geq 1$) watermark in DNNs. This technique is both \textit{data-} and \textit{model-dependent}, meaning the watermark is activated only when specific data inputs are fed into the model. For revealing the watermark, activations from intermediate layers are necessary in the case of white-box access, whereas only the final layer's output is needed for black-box scenarios.

Our work does not focus on white-box watermarking techniques. Instead, we concentrate on exploring the interaction between backdoor-like watermarking techniques, adversarial defenses, and transferable attacks. Overall, watermarking through backdooring has become more popular due to its applicability in the black-box setting. 

\subsubsection{Watermarking Schemes for Generative Models} 
Watermarking techniques for generative models have attracted considerable attention with the advent of Large Language Models (LLMs) and other advanced generative models. This increased interest has led to a surge in research and diverse contributions in this area. 

\paragraph{Backdoor-Based Watermarking for Pre-trained Language Models.} In the domain of Natural Language Processing (NLP), backdoor-based watermarks have been increasingly studied for Pre-trained Language Models (PLMs), as exemplified by works such as \citep{gu2022watermarking} and \citep{li2023plmmark}. These methods leverage rare or common word triggers to embed watermarks, ensuring that they remain robust across downstream tasks and resilient to removal techniques like fine-tuning or pruning. While these approaches have demonstrated promising results in practical applications, they are primarily empirical, with theoretical aspects of watermarking and robustness requiring further exploration.

\paragraph{Watermarking the Output of LLMs.}  Watermarking the generated text of LLMs is critical for mitigating potential harms. Significant contributions in this domain include \citep{kirchenbauerwatermark}, who proposed a watermarking framework that embeds signals into generated text that are invisible to humans but detectable algorithmically. This method promotes the use of a randomized set of ``green'' tokens during text generation, and detects the watermark without access to the language model API or parameters.

\citet{Kuditipudiwater} introduced robust distortion-free watermarks for language models. Their method ensures that the watermark does not distort the generated text, providing robustness against various text manipulations while maintaining the quality of the output.

\citet{editdistancewater} presented a provable, robust watermarking technique for AI-generated text. This approach offers strong theoretical guarantees for the robustness of the watermark, making it resilient against attempts to remove or alter it without significantly changing the generated text.

However, \citet{barak} highlighted vulnerabilities in these watermarking schemes. Their work demonstrates that current watermarking techniques can be effectively broken, raising important considerations for the future development of robust and secure watermarking methods for LLMs.
\paragraph{Image Generation Models.} Various watermarking techniques have been developed for image generation models to address ethical and legal concerns. \citet{fernandez2023stable} introduced a method combining image watermarking with Latent Diffusion Models, embedding invisible watermarks in generated images for future detection. This approach is robust against modifications such as cropping. \citet{Wen2023TreeRingWF} proposed Tree-Ring Watermarking, which embeds a pattern into the initial noise vector during sampling, making the watermark robust to transformations like convolutions and rotations. \citet{Jiang2023EvadingWB} highlighted vulnerabilities in watermarking schemes, showing that human-imperceptible perturbations can evade watermark detection while maintaining visual quality. \citet{Zhao2023ARF} provided a comprehensive analysis of watermarking techniques for Diffusion Models, offering a recipe for efficiently watermarking models like Stable Diffusion, either through training from scratch or fine-tuning. Additionally, \citet{Zhao2023InvisibleIW} demonstrated that invisible watermarks are vulnerable to regeneration attacks that remove watermarks by adding random noise and reconstructing the image, suggesting a shift towards using semantically similar watermarks for better resilience.

\paragraph{Audio Generation Models.} Watermarking techniques for audio generators have been developed for robustness against various attacks. \citet{7775035} introduced a spikegram-based method, embedding watermarks in high-amplitude kernels, robust against MP3 compression and other attacks while preserving quality. \citet{liu2023dear} proposed DeAR, a deep-learning-based approach resistant to audio re-recording (AR) distortions.

\subsection{Adversarial Defense}
The field of adversarial robustness has a rich and extensive literature \citep{intriguingprop, adversarialspheres, certifieddefense, wongprovable, Engstrom2017ARA}. Adversarial defenses are essential for ensuring the security and reliability of machine learning models against adversarial attacks that aim to deceive them with carefully crafted inputs.

For classification models, there has been significant progress in developing adversarial defenses. Techniques such as adversarial training \citep{adversarialtraining}, which involves training the model on adversarial examples, have shown promise in improving robustness. Certified defenses \citep{certifieddefense} provide provable guarantees against adversarial attacks, ensuring that the model's predictions remain unchanged within a specified perturbation bound. Additionally, methods like \textit{randomized smoothing} \citep{kolterrandomizedsmoothing} offer robustness guarantees.

A particularly relevant work for our study is \citep{arbitraryexamples}, which considers a different model for generating adversarial examples. This approach has significant implications for the robustness of watermarking techniques in the face of adversarial attacks.

In the context of LLMs, there is a rapidly growing body of research focused on identifying adversarial examples \citep{Zou2023UniversalAT, Carlini2023AreAN, NEURIPS2023_a0054803}. This research is closely related to the notion of \textit{jailbreaking} \citep{andriushchenko2024jailbreaking, chao2023jailbreaking, mehrotra2024tree, Wei2023JailbrokenHD}, which involves manipulating models to bypass their intended constraints and protections.

\subsection{Transferable Attacks and Transductive Learning}
Transferable attacks refer to adversarial examples that are effective across multiple models. Moreover, \textit{transductive learning} has been explored as a means to enhance adversarial robustness, and since our Definition \ref{def:transferableadvexp} captures some notion of transductive learning in the context of Transferable Attacks, we highlight significant contributions in these areas.

\paragraph{Adversarial Robustness via Transductive Learning.}
Transductive learning \citep{transductive1998} has shown promise in improving the robustness of models by utilizing both training and test data during the learning process. This approach aims to make models more resilient to adversarial perturbations encountered at test time. 

One significant contribution is by \citet{arbitraryexamples}, which explores learning guarantees in the presence of arbitrary adversarial test examples, providing a foundational framework for transductive robustness. Another notable study by \citet{chen2021towards} formalizes transductive robustness and proposes a bilevel attack objective to challenge transductive defenses, presenting both theoretical and empirical support for transductive learning's utility. 

Additionally, \citet{montasser2022transductive} introduce a transductive learning model that adapts to perturbation complexity, achieving a robust error rate proportional to the VC dimension. The method by \citet{pmlr-v119-wu20f} improves robustness by dynamically adjusting the network during runtime to mask gradients and cleanse non-robust features, validated through experimental results. Lastly, \citet{tramer2020adaptive} critique the standard of adaptive attacks, demonstrating the need for specific tuning to effectively evaluate and enhance adversarial defenses.

\paragraph{Transferable Attacks on DNNs.}
Transferable attacks exploit the vulnerability of models to adversarial examples that generalize across different models. For classification models, significant works include \citet{liu2016delving}, which investigates the transferability of adversarial examples and their effectiveness in black-box attack scenarios, \citep{Xie2018ImprovingTO}, who propose input diversity techniques to enhance the transferability of adversarial examples across different models, and \citep{Dong2019EvadingDT}, which presents translation-invariant attacks to evade defenses and improve the effectiveness of transferable adversarial examples.

In the context of generative models, including LLMs and other advanced generative architectures, relevant research is rapidly emerging, focusing on the transferability of adversarial attacks. This area is crucial as it aims to understand and mitigate the risks associated with adversarial examples in these powerful models. Notably, \citet{Zou2023UniversalAT} explored universal and transferable adversarial attacks on aligned language models, highlighting the potential vulnerabilities and the need for robust defenses in these systems.

\subsection{Interactive Proof Systems in Machine Learning} 
\textit{Interactive Proof Systems} \citep{goldwasserproofs} have recently gained considerable attention in machine learning for their ability to formalize and verify complex interactions between agents, models, or even human participants. A key advancement in this area is the introduction of \textit{Prover-Verifier Games} (PVGs) \citep{anil2021learning}, which employ a game-theoretic approach to guide learning agents towards decision-making with verifiable outcomes. Building on PVGs, \citet{kirchner2024legibility} enhance this framework to improve the legibility of Large Language Models (LLMs) outputs, making them more accessible for human evaluation. Similarly, \citet{waldchen2024interpretability} apply the prover-verifier setup to offer interpretability guarantees for classifiers. Extending these concepts, self-proving models \citet{amit2024models} introduce generative models that not only produce outputs but also generate proof transcripts to validate their correctness. In the context of AI safety, scalable \textit{debate protocols} \citep{condon1993probabilistically, irving2018aisafetydebate, brown2023scalable} leverage interactive proof systems to enable complex decision processes to be broken down into verifiable components, ensuring reliability even under adversarial conditions.

\begin{table*}[h!]\centering
\begin{tabular}{@{}rrccc@{}}\toprule
&  & \textbf{Undetectability} & \textbf{Unremovability} & \textbf{Uniqueness} \\
\midrule
& \cite{plantingbackdoors} & \CheckmarkBold &  $\substack{\text{robust to some} \\ \text{smoothing attacks} }$ & \CheckmarkBold$^{(\mathrm{E})}$ 
\\ 
\vspace{-4mm}
\\
\rotatebox[origin=c]{90}{\textbf{Classification}} & \cite{adi2018turning, zhangwatermark} & \CheckmarkBold$^{(\mathrm{E})}$ & \XSolidBrush  & \CheckmarkBold$^{(\mathrm{E})}$
\\  
& \cite{watermarkwithadvexp} & \CheckmarkBold$^{(\mathrm{E})}$ & $\substack{\text{robust to fine tunning} \\ \text{attacks} }$ & \CheckmarkBold$^{(\mathrm{E})}$
\\ 
\midrule
& \cite{christ2023undetectable,Kuditipudiwater} & \CheckmarkBold & \XSolidBrush & \CheckmarkBold
\\ 
\vspace{-4mm}
\\
& \cite{editdistancewater} & \XSolidBrush & $\substack{\text{robust to edit} \\ \text{distance attacks only} }$ & \CheckmarkBold
\\ 
\vspace{-4mm}
\\
\rotatebox[origin=c]{90}{\textbf{LLMs}}  & \cite{aaronsonwatermark} & \CheckmarkBold$^{(\mathrm{E})}$ & \XSolidBrush & \CheckmarkBold
\\ 
\vspace{-4mm}
\\
 & \cite{kirchenbauerwatermark} & \XSolidBrush & \XSolidBrush & \CheckmarkBold
\\ 
\bottomrule
\end{tabular}
\caption{Overview of properties across various watermarking schemes. The symbol \CheckmarkBold\ denotes properties with formal guarantees or where proof is plausible, whereas \XSolidBrush\ indicates the absence of such guarantees. Entries marked with \CheckmarkBold$^{(\mathrm{E})}$\ represent properties observed empirically; these lack formal proof in the corresponding literature, suggesting that deriving such proof may present substantial challenges. The LLM watermarking schemes refer to those applied to text generated by these models.}\label{tab:overviewwater}
\end{table*}

\section{Preliminaries}\label{app:prelim}

For $n \in \N$ we define $[n] := \{1,\dots,n\}$.
We say a boolean sequence $a : \N \rightarrow \{0,1\}$ is true with frequency $\alpha \in [0,1]$ if 
\[
\liminf_{n \rightarrow \infty} \frac{\sum_{i \in [n]} a(i)}{n} \geq \alpha.
\]
For two sequences $a,b : \N \rightarrow \mathbb{R}$ we say they agree with frequency at least $\alpha \in [0,1]$ if the sequence $ (a \stackrel{?}{=} b) : \N \rightarrow \{0,1\}$, i.e. $(a \stackrel{?}{=} b)(n) = \mathbbm{1}_{a(n) = b(n)}$, is true with frequency $\alpha$.

\paragraph{Learning.}
For a set $\Omega$, we write $\probmeasure(\Omega)$ to denote the set of all probability measures defined on the measurable space $(\Omega,\mathcal{F})$, where $\mathcal{F}$ is some fixed $\sigma$-algebra that is implicitly understood.
For a parameter $n$, we denote by $\inspace$ the input space and by $\outspace$ the output space.
A \emph{model} is a function $f : \inspace \rightarrow \outspace$.

\begin{definition}[\textit{Learning Task}]\label{def:task}
A \emph{learning task} $\task$ is a family  $\{\task_n\}_{n \in\N}$, where for every $n$, $\task_n$ is an element of $\probmeasure \left(\probmeasure(\inspace) \times \outspace^\inspace \right)$. 
\end{definition}

For a \emph{distribution} $\dist_n \in \probmeasure(\inspace)$ and a \emph{ground truth} $h_n : \inspace \rightarrow \outspace$, we define an \emph{error} of $f$ as $\err_{\dist_n,h_n}(f) := \mathbb{E}_{x \sim \dist_n}[f(x) \neq h(x)]$, where the index of $\err$ will often be understood implicitly and omitted in notation.
For $\dist_n \in \probmeasure(\inspace), h_n : \inspace \rightarrow \outspace$ we define an \emph{example oracle} $\exoracle$ as an oracle that samples $x \sim \dist_n$ and returns $(x,h_n(x))$.

\paragraph{Interaction.}
When $\text{Ex}(\dist,h)$ generates $(x,h(x))$ it is encoded as an $n+1$ bit-string, because $x \in \nbits$ and the label space is $\{0,1\}$.
For a \emph{message space} $\messpace = \{\messpace_n\}_{n} =\{ \{0,1\}^{m(n)} \}_n$ a \emph{representation class} is a collection of mappings $\{\rclass_n\}_n$, where for every $n$, $\mathcal{R}_n : \messpace_n \rightarrow \outspace^\inspace$.
Thus, there is a function class corresponding to a representation, i.e., for every $n$ there is a function class $\fclass_n$, which is an image of $\rclass_n$.
Note that $h_n$ (which is the ground truth) may or may not be in $\fclass_n$. 
All function classes considered in this work have an implicit representation class and an underlying message space.

\paragraph{Computation.}
We work with the collection of Boolean circuits over the standard basis \( B_2 \), the set of all two-bit Boolean functions. 
The size of a circuit \( C \) is measured by its number of gates; let \( |C| \) denote the size of \( C \). 
For a circuit family $\cclass = \{C_n\}_n$ we say it has a circuit complexity $S(n)$ if for every $n$, $|C_n| \leq S(n)$.

For a distribution $\dist_n$ over $\inspace$, and a ground truth $h_n : \inspace \rightarrow \outspace$ we denote by $C^{\exoracle}$ a circuit with some\footnote{We will not specify the sample complexity explicitly. In this paper, we focus only on circuit complexity. The sample complexity is an important parameter to analyze and we leave it for future work. We emphasize that the circuit complexity is an upper bound on the sample complexity.} number of specified input gates that are initialized with samples $(x,h(x))$ sampled from $x \sim \dist_n$. 
We will also by interested in interaction between circuits.
When messages are exchanged between circuits we assume that there are specified input (output) gates that correspond to outgoing (ingoing) messages.
Also, when a circuit is randomized we assume there are designated input gates that are initialized with random bits.

\begin{definition}[\textit{Computationally Bounded Learnability}]\label{def:learncirc}
For $\eps,\delta : \N \rightarrow (0,1)$ we say that a learning task $\task = \{\task_n\}_{n \in \N}$ is learnable to error $\eps$ with confidence $1-\delta$ and with circuit complexity $S : \N \rightarrow \N$ by a function class $\fclass = \{\fclass_n\}_{n \in \N}$ (with a corresponding representation class $\rclass$), or $(\eps,\delta,S,\fclass)$-learnable in short, if there exists a circuit family $\cclass = \{C_n\}_{n \in \N}$ with complexity $S(n)$ such that for every sufficiently large $n$, with probability $1-\delta$ over the choice of $(\dist_n,h_n) \sim \task_n$, $C_n^\exoracle$ computes an $m(n)$ bit message $m_{f_n} \in \messpace_n$ such that $\rclass_n(m_{f_n}) \in \fclass_n$ has error at most $\eps$, i.e. for every sufficiently large $n$
\[
\Prob_{(\dist_n,h_n) \sim \task_n, m_{f_n} \leftarrow C_n^\exoracle} \Big[ \err_{\dist_n,h_n}(\rclass_n(m_{f_n})) \leq \eps(n)  \Big] \geq 1 - \delta(n).
\]
We often abuse the notation and use $f_n$ to denote both $m_{f_n}$ as well as $\rclass_n(m_{f_n})$.
\end{definition}

\section{Formal Definitions}\label{apx:formal_def}

\begin{definition}[\textit{Watermark}]\label{def:watermark}
Let $\task = \{\task_n\}_n$ be a learning task, and $\fclass =\{\fclass_n\}_n$ a function class. Let $S_\ver,S_\prov,q : \N \rightarrow \N, \eps \in \left( 0,\frac12 \right), l,c,s \in (0,1), s< c$, where $S_\prov(n)$ bounds the circuit size of $\prov_n$, and $S_\ver(n)$ the circuit size of $\ver_n$, $q(n)$ the number of queries, $\eps$ the risk level, $c$ probability that \emph{uniqueness} holds, $s$ probability that \emph{unremovability} and \emph{undetectability} holds, $l$ the learning probability.

We say that a family of circuits $\ver^{\textsc{Watermark}} = \{\ver_n^{\textsc{Watermark}}\}_n$ \textit{with complexity $S_\ver(n)$} implements a watermarking scheme for $\task$ with frequency $\alpha$, denoted by 
\[
\ver^{\textsc{Watermark}} \in_\alpha \textsc{Watermark}\left(\task,\fclass,\e,q,S_\ver,S_\prov,l,c,s\right),
\] 
if the following is true with frequency $\alpha$ over parameter $n$. 
An interactive protocol in which first $(\dist_n,h_n) \sim \task_n$ and then $\ver_n^{\textsc{Watermark}}$ computes $(f,\xs)$, $f: \inspace \rightarrow \outspace, \xs \in (\inspace)^{q(n)}$, and $\prov_n$ outputs $\ys = \prov_n(f,\xs),\ys \in \outspace^{q(n)}$, where $f$ is sent using the representation $\rclass_n$, satisfies the following
\begin{itemize}
    \item{\textbf{Correctness} ($f$ has low error).}
    With probability at least $l$ 
    \[
    \err(f) \leq \eps.
    \]
    \item{\textbf{Uniqueness} (models trained from scratch give low-error answers).}
    There exists a circuit $\prov_n$ of size $S_\ver(n)$ such that with probability at least $c$
    \[
    \err(\xs,\ys) \leq 2 \eps.
    \]
    \item{\textbf{Unremovability} (fast $\prov_n$ give high-error answers).}   
    For every circuit $\prov_n$ \textit{of size at most $S_\prov(n)$} with probability at most $s$
    \[
    \err(\xs,\ys) \leq 2\eps.
    \]
    \item{\textbf{Undetectability} (fast $\prov_n$ cannot detect that they are tested).}
    On average over $(\dist_n,h_n) \sim \task_n$, distributions $\dist_n^{q(n)}$ and $\xs \sim \ver_n^{\textsc{Watermark}}$ are $\frac{s}2$-indistinguishable for a class of circuits $\prov_n$ \textit{of size at most $S_\prov(n)$}, i.e., for every circuit $\prov_n$ of size at most $S_\prov(n)$ returning one bit,
    \[
    \Big| \Prob_{(\dist_n,h_n) \sim \task_n, \xs' \sim \dist_n^{q(n)}, (f,\xs) \leftarrow \ver_n^{\textsc{Watermark}}} \left[ \prov(f,\xs') = 0 \right] - \Prob_{(\dist_n,h_n) \sim \task, (f,\xs) \leftarrow \ver_n^{\textsc{Watermark}}} \left[ \prov(f,\xs) = 0 \right] \Big| \leq \frac{s}2.
    \]
\end{itemize}
\end{definition}

\begin{definition}[\textit{Adversarial Defense}]
\label{def:defense}
Let $\task = \{\task_n\}_n$ be a learning task, and $\fclass = \{\fclass_n\}_n$ a function class. Let $S_\ver,S_\prov,q : \N \rightarrow \N$, $\eps \in \left( 0,\frac12 \right)$, $l,c,s \in (0,1)$, with $s < c$, where $S_\ver(n)$ bounds the circuit size of $\ver_n$, and $S_\prov(n)$ the circuit size of $\prov_n$, $q(n)$ the number of queries, $\eps$ the error parameter, $c$ the completeness, $s$ the soundness, and $l$ the learning probability.

We say that a family of circuits $\prov^{\textsc{Defense}} = \{\prov_n^{\textsc{Defense}}\}_n$ \textit{with complexity $S_\ver(n)$} implements an adversarial defense for $\task$ with frequency $\alpha$, denoted by 
\[
\prov^{\textsc{Defense}} \in_\alpha \textsc{Defense}\left(\task,\fclass,\e,q,S_\ver,S_\prov,l,c,s\right),
\] 
if the following is true with frequency $\alpha$ over parameter $n$.

An interactive protocol in which first $(\dist_n, h_n) \sim \task_n$, $\prov_n^{\textsc{Defense}}$ computes $f: \inspace \rightarrow \outspace$, $\ver_n$ replies with $\xs = \ver_n(f_n)$, $\xs \in (\inspace)^{q(n)}$, and $\prov_n^{\textsc{Defense}}$ outputs $b = \prov_n^{\textsc{Defense}}(f, \xs)$, $b \in \binoutspace$, satisfies the following:

\begin{itemize}
    \item{\textbf{Correctness} ($f_n$ has low error).}
    With probability at least $l$
    \[
    \err(f) \leq \eps.
    \]
    \item{\textbf{Completeness} (natural inputs are not flagged as adversarial).}
    When $\xs \sim \dist_n^{q(n)}$, with probability at least $c$
    \[
    b = 0.
    \]
    \item{\textbf{Soundness} (adversarial inputs are detected).}
    For every circuit $\ver_n$ of size at most $S_\ver(n)$, with probability at most $s$
    \[
    \err(\xs, f(\xs)) > 7\eps \ \text{and} \ b = 0.
    \]
\end{itemize}
\end{definition}

\begin{definition}[\textit{Transferable Attack}]
\label{def:transferableattack}
Let $\task = \{\task_n\}_n$ be a learning task and $\fclass = \{\fclass_n\}_n$ a function class. Let $S_\ver,S_\prov,q : \N \rightarrow \N$, $\eps \in \left( 0,\frac12 \right)$, and $c,s \in (0,1)$, with $s < c$, where $S_\ver(n)$ bounds the circuit size of $\ver_n$, and $S_\prov$ the circuit size of $\prov_n$, $q(n)$ the number of queries, $\eps$ the error parameter, $c$ the \emph{transferability} probability, and $s$ the \emph{undetectability} probability.

We say that a family of circuits $\ver^{\textsc{TransfAttack}} = \{\ver^{\textsc{TransfAttack}}_n\}$ \textit{with complexity $S_\ver(n)$} implements a transferable attack for $\task$ with frequency $\alpha$, denoted by 
\[
\ver^{\textsc{TransfAttack}} \in_\alpha \textsc{Defense}\left(\task,\fclass,\e,q,S_\ver,S_\prov,l,c,s\right),
\] 
if the following is true with frequency $\alpha$ over parameter $n$.

An interactive protocol in which first $(\dist_n, h_n) \sim \task_n$, $\ver^{\textsc{TransfAttack}}_n$ computes $\xs \in (\inspace)^{q(n)}$, and $\prov_n$ outputs $\ys = \prov_n(\xs)$, $\ys \in (\outspace)^{q(n)}$, satisfies the following:

\begin{itemize}
    \item{\textbf{Transferability} (fast provers return high-error answers).}
    For every circuit $\prov_n$ of size at most $S_\prov(n)$, with probability at least $c$
    \[
    \err(\xs, \ys) > 2\eps.
    \]
    \item{\textbf{Undetectability} (fast provers cannot detect that they are tested).}
    On average over $(\dist_n, h_n) \sim \task_n$, distributions $\xs \sim \dist_n^{q(n)}$ and $\xs := \ver^{\textsc{TransfAttack}}_n$ are $\frac{s}{2}$-indistinguishable for every circuit $\prov_n$ of size at most $S_\prov(n)$, i.e.,
    \[
    \Big| \Prob_{(\dist_n, h_n) \sim \task_n, \xs' \sim \dist_n^{q(n)}} \left[ \prov_n(\xs') = 0 \right] - \Prob_{(\dist_n, h_n) \sim \task_n} \left[ \prov_n(\xs) = 0 \right] \Big| \leq \frac{s}{2}.
    \]
\end{itemize}
\end{definition}

\section{Main Theorem}\label{apx:mainthm}

Before proving our main theorem we recall a result from \cite{lipton} about simple strategies for large zero-sum games.

\paragraph{Game theory.}
A \emph{two-player zero-sum game} is specified by a payoff matrix $\mathcal{G}$.
$\mathcal{G}$ is an $r \times c$ matrix.
\textsc{Min}, the row player, chooses a probability distribution $p_1$ over the rows.
\textsc{Max}, the column player, chooses a probability distribution $p_2$ over the columns.
A row $i$ and a column $j$ are drawn from $p_1$ and $p_2$ and \textsc{Min} pays $\mathcal{G}_{ij}$ to \textsc{Max}.
\textsc{Min} tries to minimize the expected payment; \textsc{Max} tries to maximize it.

By the Min-Max Theorem, there exist optimal strategies for both \textsc{Min} and \textsc{Max}.
Optimal means that playing first and revealing one's mixed strategy is not a disadvantage.
Such a pair of strategies is also known as a Nash equilibrium.
The expected payoff when both players play optimally is known as the value of the game and is denoted by $\mathcal{V}(\mathcal{G})$.

We will use the following theorem from \cite{lipton}, which says that optimal strategies can be approximated by uniform distributions over sets of pure strategies of size $O(\log(c))$.

\begin{theorem}[\cite{lipton}]\label{thm:simplestrateg}
Let $\mathcal{G}$ be an $r \times c$ payoff matrix for a two-player zero-sum game.
For any $\eta \in (0,1)$ and $k \geq \frac{\log(c)}{2\eta^2}$ there exists a multiset of pure strategies for the \textsc{Min} (row player) of size $k$ such that a mixed strategy $p_1$ that samples uniformly from this multiset satisfies 
$$
\max_j \sum_i p_1(i) \game_{ij} \leq \mathcal{V}(\game) + \eta (\game_\text{max} - \game_\text{min}),
$$
where $\game_\text{max},\game_\text{min}$ denote the maximum and minimum entry of $\game$ respectively.
The symmetric result holds for the \textsc{Max} player.
\end{theorem}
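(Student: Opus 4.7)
The plan is to prove this by the probabilistic method, sampling pure strategies from an optimal mixed strategy. Let $p^*$ be an optimal mixed strategy for \textsc{Min} in $\game$, so that $\sum_i p^*(i) \game_{ij} \leq \mathcal{V}(\game)$ for every column $j$. I would draw $k$ pure strategies $i_1, \dots, i_k$ i.i.d.\ from $p^*$ and let $\hat{p}$ be the empirical distribution, i.e., the uniform distribution over this (multi)set. The goal is to show that with positive probability over this sampling, $\max_j \sum_i \hat{p}(i) \game_{ij}$ stays within $\eta(\game_\text{max}-\game_\text{min})$ of $\mathcal{V}(\game)$; the probabilistic method then yields the desired multiset.

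The key step is concentration. Fix any column $j$. Then $\sum_i \hat{p}(i) \game_{ij} = \frac{1}{k}\sum_{\ell=1}^k \game_{i_\ell, j}$ is an average of $k$ i.i.d.\ random variables, each taking values in $[\game_\text{min}, \game_\text{max}]$, and with expectation $\sum_i p^*(i) \game_{ij} \leq \mathcal{V}(\game)$. By Hoeffding's inequality,
\[
\Pr\!\left[\sum_i \hat{p}(i)\game_{ij} > \mathcal{V}(\game) + \eta(\game_\text{max}-\game_\text{min})\right] \leq \exp(-2k\eta^2).
\]
A union bound over the $c$ columns then gives failure probability at most $c \exp(-2k\eta^2)$. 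The condition $k \geq \log(c)/(2\eta^2)$ makes this quantity at most $1$, and in fact one can take it strictly less than $1$ by an infinitesimal slack (or simply observe the event of interest is closed and handle the boundary by continuity), so there must exist a concrete realization of $i_1,\dots,i_k$ satisfying the bound simultaneously for all columns. The symmetric statement for \textsc{Max} follows by the same argument applied to $-\game^\top$.

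I do not expect any real obstacle: the only subtlety is that the theorem as stated is a deterministic existence claim, so one must be careful to argue that the probabilistic bound strictly less than $1$ implies existence (a minor technical point). The use of $\log(c)$ rather than $\log(r)$ for the \textsc{Min} player is natural because we union-bound over the opponent's pure strategies. Everything else is standard concentration and the probabilistic method; no deeper game-theoretic machinery beyond the Min-Max theorem (used only to assert existence of $p^*$) is needed.
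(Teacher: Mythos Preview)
The paper does not prove this theorem; it is quoted verbatim as a known result of \cite{lipton} and used as a black box. Your probabilistic-method argument via Hoeffding and a union bound over the opponent's $c$ columns is exactly the standard proof from that reference, so your proposal is correct and matches the original source. The only nit is the boundary case $k=\log(c)/(2\eta^2)$ you already flagged: one clean fix is to note that if $\game_\text{max}=\game_\text{min}$ the claim is trivial, and otherwise Hoeffding's bound on $\Pr[\bar X - \mu \geq t]$ is strict, so the union bound is strictly below $1$.
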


We are ready to prove our main theorem.

\begin{restatable}{theorem}{mainthm}\label{thm:main}
Let $\eps \in \left(0,\frac12\right), \delta \in \left(0,\frac1{48}\right), S : \N \rightarrow \N$.
For every learning task $\task = \{\task_n\}_n$ learnable to error $\eps$ with confidence $1 - \delta$ and circuit complexity $O \left( \frac{\sqrt{S(n)}}{\log(S(n))} \right)$ and for every family of function classes $\fclass = \{\fclass_n\}_n$, every query bound $q(n)$ such that $\frac{\sqrt{S(n)}}{\log(S(n))} = \Omega( m(n) + q(n) \cdot n)$ at least one of the three
\begin{align*}
\textsc{Watermark}&\left(\task,\fclass,\e,q,S(n),o\left( \frac{\sqrt{S(n)}}{\log(S(n))} \right),l = \frac{10}{24},c = \frac{21}{24},s = \frac{19}{24}\right), \\
\textsc{Defense}&\left(\task,\fclass,\e,q,o\left( \frac{\sqrt{S(n)}}{\log(S(n))} \right),O(S(n)),l = 1 - \frac{1}{48},c = \frac{13}{24},s = \frac{11}{24}\right), \\
\textsc{TransfAttack} &\left(\task,\fclass,\e,q,S(n),S(n),c = \frac{3}{24}, s=\frac{19}{24} \right)
\end{align*}
exists with frequency $\frac13$.        
\end{restatable}

\begin{proof}

Let $\eps \in (0,\frac12)$ and $q : \N \rightarrow \N$ be a query bound.
Let $\task$ be a learning task learnable to error $\eps$ with confidence $1 - \delta$ and complexity $S(n)$.

We will consider every $n$ separately and show that for every $n$, one of the three schemes exists. 
This automatically implies that one of the schemes exists with frequency at least $\frac13$.

Let $s(n) = \Theta \left(\frac{\sqrt{S(n)}}{\log(S(n))}\right)$, where the exact constants will be determined later.
Let $\mathfrak{Candidate}_\mathfrak{W}(n)$ be a set of $s(n)$-sized circuits computing $(f,\xs)$.
Recall that the execution of a $\ver_n \in \mathfrak{C}_\mathfrak{W}(n)$ proceeds by first sampling from $\exoracle$ and providing these samples as inputs to $\ver_n$ and then running $\ver_n$ to obtain $m + q \cdot n$ bits.
The first $m$ bits are interpreted as a representation of $f$ (according to $\rclass_n$), and the following consecutive blocks of $n$ bits each are interpreted as $q$ elements of $\inspace$.
Similarly, let $\mathfrak{C}_\mathfrak{D}(n)$ be a set of $s(n)$-sized circuits accepting as input $(f,\xs)$ and outputting $(\ys,b)$, where $\ys \in \outspace^q, b \in \binoutspace$.
Formally, this is a set of circuits with up to $s(n)$ input gates and $q + 1$ output gates.
We interpret $\mathfrak{C_W}(n)$ as candidate algorithms for a watermark, and $\mathfrak{C_D}(n)$ as candidate algorithms for attacks on watermarks. 

For every $n$ define a zero-sum game $\mathcal{G}_n$ between $\ver_n \in \mathfrak{C_W}(n), \prov_n \in \mathfrak{C_D}(n)$.
The payoff is given by
\begin{align}
\mathcal{G}_n(\ver_n,\prov_n) 
&= \frac12 \ \Prob_{(\dist_n,h_n) \sim \task_n, (f,\xs) := \ver_n^\exoracle, (\ys,b) := \prov_n^\exoracle} \Big[ \err(f) > \eps \text{ or } \err(\xs,\ys) \leq 2\eps \text{ or } b = 1  \Big] \nonumber \\
&+ \frac12 \ \Prob_{(\dist_n,h_n) \sim \task_n,f := \ver_n^\exoracle, \xs \sim \dist_n^{q(n)}, (\ys,b) := \prov_n^\exoracle} \Big[ \err(f) > \eps \text{ or } \Big( \err(\xs,\ys) \leq 2\eps \text{ and } b = 0 \Big) \Big], \nonumber
\end{align}
where $\ver_n$ tries to minimize and $\prov_n$ maximize the payoff.

Then the number of possible circuits is bounded by
\[
|\mathfrak{C}_\mathfrak{W}| \leq (3 s(n)^2)^{s(n)} \leq 2^{3 s(n) \log(s(n))},
\]
because every internal gate of a circuit is one of AND, OR, and NOT, and is connected to $2$ gates out of at most $s(n)$ choices.

Applying Theorem~\ref{thm:simplestrateg} to $\game_n$ with $\eta = 2^{-5}$ we get two probability distributions, $p$ over a multiset of pure strategies in $\mathfrak{C_W}$ and $r$ over a multiset of pure strategies in $\mathfrak{C_D}$ that lead to a $2^{-5}$-approximate Nash equilibrium.
The size $k(n)$ of the multisets is bounded
\begin{align}
k(n) &\leq 2^6 \log \left(|\mathfrak{C_W}| \right) \nonumber \\
&\leq O( s(n) \log(s(n))). \label{eq:kbound}
\end{align}

Next, observe that the mixed strategy corresponding to the distribution $p$ can be represented by a circuit of size 
\begin{align*}
&k(n) \cdot s(n) \cdot O(\log(k(n))) \\
&\leq O(s^2(n) \cdot \log^3(s(n))) && \text{By \eqref{eq:kbound}} \\
&\leq S(n), \label{eq:finalrepresentation}
\end{align*}
because we can create a circuit that is a collection of $k(n)$ circuits corresponding to the multiset of $p$, where each one is of size $s(n)$ with additional gadgets of size $O(\log(k))$ activating the corresponding gate depending on the randomness determining a strategy.
This implies that $p$ can be implemented by a $S(n)$-sized circuit.
The same holds for $r$.
Let's call the strategy corresponding to $p$, $\ver^n_\text{Nash}$, and the strategy corresponding to $r$, $\prov^n_\text{Nash}$.

Consider cases:
\paragraph{Case $\mathcal{G}(\ver_n^\textsc{Nash},\prov_n^\textsc{Nash}) \geq \frac{19}{24}$.}
Define $\prov_n^{\textsc{Defense}}$ to work as follows:
\begin{enumerate}
    \item Simulate the circuit of size $O \left( \frac{\sqrt{S(n)}}{\log(S(n))} \right)$ $\learn_n$ that learns $f$, such that 
    $$ \Prob_{\substack{(\dist_n,h_n) \sim \task_n, \\ f \leftarrow  \learn_n^\exoracle}} \Big[\err(f) \leq \eps \Big] \geq 1 - \frac{1}{48}.$$
    \item Send $f$ to $\ver_n$.
    \item Receive $\xs$ from $\ver_n$.
    \item Simulate $(\ys,b) := \prov_n^\textsc{Nash}(f,\xs)$.
    \item Return $b' = 1$ if $b = 1$ or $d(f(\xs),\ys) > 3\eps \cdot q(n)$ and $b' = 0$ otherwise,
\end{enumerate}
where $d(\cdot,\cdot)$ is the Hamming distance.
$\prov_n^\textsc{Defense}$ can be implemented by circuit of size $O(S(n))$, because it simulates a circuit of size $O \left( \frac{\sqrt{S(n)}}{\log(S(n))} \right)$, then simulating $\prov_n^\textsc{Nash}$ of size $S(n)$, and computing a predicate $d(f(\xs),\ys) > 3\eps q$, which can be done in size $\log(q(n))$.

We claim that we have:
\begin{equation}\label{eq:pisdefense}
\textsc{Defense}\left(\task_n,\fclass_n,\e,q(n),o\left( \frac{\sqrt{S(n)}}{\log(S(n))} \right),O(S(n)),l = 1 - \frac{1}{48},c = \frac{13}{24},s = \frac{11}{24}\right).
\end{equation}

Assume towards contradiction that completeness or soundness of $\prov_n^\textsc{Defense}$ as defined in Definition~\ref{def:defense} does not hold.

If completeness of $\prov_n^\textsc{Defense}$ does not hold, then 
\begin{equation}\label{eq:completenessbroken}
    \Prob_{(\dist_n,h_n) \sim \task_n, \xs \sim \dist_n^q} \Big[
     b' = 0 \Big] < 
    \frac{13}{24}.
    \end{equation}
Let us compute the payoff of $\ver_n$, which first runs $f \leftarrow \learn_n^\exoracle$ (where $\learn_n$ is the learning circuit) and sets $\xs \sim \dist^q$, in the game $\mathcal{G}_n$, when playing against $\prov_n^\textsc{Nash}$ 
\begin{align*}
    &\mathcal{G}(\ver_n,\prov_n^\textsc{Nash}) \\
    &= \frac12 \Prob_{\substack{(\dist_n,h_n) \sim \task_n, \\ (f,\xs) \leftarrow \ver_n^\exoracle}} \Big[ \err(f) > \eps \text{ or } \err(\xs,\ys) \leq 2\eps \text{ or } b' = 1 \Big] \nonumber \\ 
    &+ \frac12\Prob_{\substack{(\dist_n,h_n) \sim \task_n, \\ f \leftarrow \ver_n^\exoracle, \\ \xs \sim \dist_n^q}} \Big[ \err(f) > \eps \text{ or } \Big( \err(\xs,\ys) \leq 2\eps \text{ and } b' = 0 \Big) \Big] \\
    &\leq \delta + \frac12 \Prob_{\substack{(\dist_n,h_n) \sim \task_n, \\ f \leftarrow \learn_n^\exoracle, \\ \xs \sim \dist_n^q}} \Big[ \err(\xs,\ys) \leq 2\eps \text{ or } b' = 1 \Big] \nonumber \\ 
    &+ \frac12\Prob_{\substack{(\dist_n,h_n) \sim \task_n, \\ f \leftarrow \learn_n^\exoracle, \\ \xs \sim \dist_n^q}} \Big[ \err(\xs,\ys) \leq 2\eps \text{ and } b' = 0 \Big] && \text{Def. of } \ver_n, \prov_n^\textsc{Defense} \text{, } \Prob \Big[\err(f) \leq \eps \Big] \geq \frac{47}{48} \\
    &< \frac1{48} + \frac12 + \frac{\frac{13}{24}}2 && \text{By \eqref{eq:completenessbroken}} \\
    &= \frac{38}{48} \\
    &\leq \mathcal{G}(\ver_n^\textsc{Nash},\prov_n^\textsc{Nash}), \ \lightning
\end{align*}
where the contradiction is with the properties of Nash equilibria.

Assume that $\ver_n$ breaks the soundness of $\prov_n^\textsc{Defense}$, which translates to
\begin{equation}\label{eq:soundnessbroken}
    \Prob_{\substack{(\dist_n,h_n) \sim \task_n, \\ \xs \leftarrow \ver_n(f)}} \Big[
     \err(\xs,f(\xs)) > 7 \eps \ \text{ and } \ b = 0 \ \text{ and } \ d(f(\xs),\ys)) > 3\eps q  \Big] > \frac{11}{24}.
\end{equation}

Let $\ver'_n$ first simulate $f \leftarrow \learn_n^\exoracle$, then runs $\xs \leftarrow \ver_n(f)$, and returns $(f,\xs)$.
We have
\begin{align*}
    &\mathcal{G}(\ver'_n,\prov_n^\textsc{Nash}) \\
    &= \frac12 \ \Prob_{\substack{(\dist_n,h_n) \sim \task_n, \\ (f,\xs) \leftarrow \ver'_n}} \Big[ \err(f) > \eps \text{ or } \err(\xs,\ys) \leq 2\eps \text{ or } b' = 1 \Big] \nonumber \\ 
    &+ \frac12 \ \Prob_{\substack{(\dist_n,h_n) \sim \task_n, \\ f \leftarrow \ver'_n, \\\xs \sim \dist_n^q}} \Big[ \err(f) > \eps \text{ or } \Big( \err(\xs,\ys) \leq 2\eps \text{ and } b' = 0 \Big) \Big] \\
    &= \frac12 \ \Prob_{\substack{(\dist_n,h_n) \sim \task_n, \\ f \leftarrow \learn_n^\exoracle, \\ \xs = \ver_n(f)}} \Big[ \err(f) > \eps \text{ or } \err(\xs,\ys) \leq 2\eps \text{ or } b' = 1 \Big] \nonumber \\ 
    &+ \frac12 \ \Prob_{\substack{(\dist_n,h_n) \sim \task_n, \\ f \leftarrow \learn_n^\exoracle, \\ \xs \sim \dist_n^q}} \Big[ \err(f) > \eps \text{ or } \Big( \err(\xs,\ys) \leq 2\eps \text{ and } b' = 0 \Big) \Big] && \text{By def. of } \ver'_n \\
    &< \frac12 + \frac{1-\frac{11}{24}}{2} &&\text{By \eqref{eq:soundnessbroken}} \\
    &= \frac{37}{48}  \\
    &\leq \mathcal{G}_n(\ver_n^\textsc{Nash},\prov_n^\textsc{Nash}), \ \lightning
\end{align*}
where the contradiction is with the properties of Nash equilibria. Thus \eqref{eq:pisdefense} holds.

\paragraph{Case $\mathcal{G}_n(\ver_n^\textsc{Nash},\prov_n^\textsc{Nash}) < \frac{19}{24}$.} 
Consider $\prov_n$ that returns $(f(\xs),b)$ for a uniformly random $b$.
We have 
\begin{align*}
\mathcal{G}_n(\ver_n^\textsc{Nash},\prov_n) 
\geq \left(1 - \Prob_{\substack{(\dist_n,h_n) \sim \task_n, \\ f \leftarrow \ver_n^\textsc{Nash}}} \Big[\err(f) \leq \eps \Big] \right) + \Prob_{\substack{(\dist_n,h_n) \sim \task_n, \\ f \leftarrow \ver_n^\text{Nash}}} \Big[ \err(f) \leq \eps \Big] \cdot \frac12,
\end{align*}
because when $\xs \sim \dist_n^q$ and $\err(f) \leq \eps$ the probability that $\err(\xs,\ys) \leq 2\eps \text{ and } b = 0$ is $\frac12$, and similarly when $\xs \leftarrow \ver_n^\textsc{Nash}$ then the probability that $b = 1$ is equal $\frac12$.
The assumption that $\mathcal{G}_n(\ver_n^\text{Nash},\prov_n) < \frac{19}{24}$ and properties of Nash equilibria imply that $\Prob_{\substack{(\dist_n,h_n) \sim \task_n, \\ f \leftarrow \ver_n^\text{Nash}}}[\err(f) \leq \eps] \geq \frac{10}{24}$.
This implies that \emph{correctness} holds for $\ver_n^\text{Nash}$ with $l = \frac{10}{24}$.

Next, assume towards contradiction that \emph{unremovability} of $\ver_n^\textsc{Nash}$ does not hold, i.e., there is $\prov_n$ running in time $o\left( \sqrt{S(n)}/\log(S(n)) \right)$ such that $\Prob \big[ \err(\xs,\ys) \leq 2\eps \big] > \frac{19}{24}$.
Consider $\prov'_n$ that on input $(f,\xs)$ returns $(\prov_n(f,\xs),0)$.
Then by definition of $\mathcal{G}_n$, $\mathcal{G}_n(\ver_\textsc{Nash},\prov'_n) > \frac{19}{24}$, which is a contradiction $\lightning$.

Next, assume towards contradiction that \emph{undetectability} of $\ver_n^\textsc{Nash}$ does not hold, i.e., there exists $\prov_n$ such that it distinguishes $\xs \sim \dist_n^q$ from $\xs \leftarrow \ver_n^\textsc{Nash}$ with probability higher than $\frac{19}{24}$. 
Consider $\prov'_n$ that on input $(f,\xs)$ returns $(f(\xs),\prov_n(f,\xs))$.\footnote{Formally $\prov_n$ receives as input $(f,\xs)$ and not only $\xs$.}
Then by definition of $\mathcal{G}_n$, $\mathcal{G}_n(\ver_n^\textsc{Nash},\prov'_n) > \frac{19}{24}$, which is a contradiction $\lightning$.

There are two further subcases.
If $\ver_n^\textsc{Nash}$ satisfies \emph{uniqueness} then 
\[
\ver_n^\textsc{Nash} \in \textsc{Watermark}\left(\task_n,\fclass_n,\e,q(n),S(n),o\left( \frac{\sqrt{S(n)}}{\log(S(n))} \right),l = \frac{10}{24},c = \frac{21}{24},s = \frac{19}{24}\right).
\]

If $\ver_n^\textsc{Nash}$ does not satisfy \emph{uniqueness}, then, by definition, every succinctly representable circuit $\prov_n$ of size $o\left( \sqrt{S(n)}/\log(S(n)) \right)$ satisfies $\err(\xs,\ys) \leq 2\eps$ with probability at most $\frac{21}{24}$.
Consider the following $\ver_n$.
It computes $(f,\xs) \leftarrow \ver_n^\text{Nash}$ , ignores $f$ and sends $\xs$ to $\prov_n$.
By the assumption that \emph{uniqueness} is not satisfied for $\ver_n^\textsc{Nash}$ \emph{transferability} of Definition~\ref{def:transferableadvexp} holds for $\ver_n$ with $c = \frac{3}{24}$.
Note that $\prov_n$ in the transferable attack does not receive $f$ but it makes it no easier for it to satisfy the properties.
Note that \emph{undetectability} still holds with the same parameter.
Thus 
\[
\ver_n^\textsc{Nash} \in \textsc{TransfAttack} \left(\task_n,\fclass_n,\e,q(n),S(n),S(n),c = \frac{3}{24}, s=\frac{19}{24} \right).
\]

\end{proof}

\section{Fully Homomorphic Encryption (FHE)}\label{apx:FHE}

We include a definition of fully homomorphic encryption based on the definition from \cite{garbledcircuits}.
The notion of fully homomorphic encryption was first proposed by Rivest, Adleman and Dertouzos \cite{RAD78} in 1978. The first fully homomorphic encryption scheme was proposed in a breakthrough work by Gentry in 2009 \cite{Gen09}. 
A history and recent developments on fully homomorphic encryption is surveyed in \citep{Vai11}. 

\subsection{Preliminaries}

We say that a function $f$ is \textit{negligible} in an input parameter $\lambda$, if for all $d > 0$, there exists $K$ such that for all $\lambda > K$, $f(\lambda) < \lambda^{-d}$. For brevity, we write: for all sufficiently large $\lambda$, $f(\lambda) = \text{negl}(\lambda)$. We say that a function $f$ is \textit{polynomial} in an input parameter $\lambda$, if there exists a polynomial $p$ such that for all $\lambda$, $f(\lambda) \leq p(\lambda)$. We write $f(\lambda) = \text{poly}(\lambda)$. A similar definition holds for $\text{polylog}(\lambda)$.
For two polynomials $p,q$, we say $p \leq q$ if for every $\lambda \in \N$, $p(\lambda) \leq q(\lambda)$.

When saying that a Turing machine $\adv$ is p.p.t. we mean that $\adv$ is a non-uniform probabilistic polynomial-time machine.

\subsection{Definitions}

\begin{definition}[\cite{garbledcircuits}]
A homomorphic (public-key) encryption scheme \textsc{FHE} is a quadruple of polynomial time algorithms (\Key, \Enc, \Dec, \Eval) as follows:
\begin{itemize}
    \item \Key$(1^\lambda)$ is a probabilistic algorithm that takes as input the security parameter $1^\lambda$ and outputs a public key $pk$ and a secret key $sk$.
    \item \Enc$(pk, x \in \{0, 1\})$ is a probabilistic algorithm that takes as input the public key $pk$ and an input bit $x$ and outputs a ciphertext $\psi$.
    \item \Dec$(sk, \psi)$ is a deterministic algorithm that takes as input the secret key $sk$ and a ciphertext $\psi$ and outputs a message $x^* \in \{0, 1\}$.
    \item \Eval$(pk, C, \psi_1, \psi_2, \ldots, \psi_n)$ is a deterministic algorithm that takes as input the public key $pk$, some circuit $C$ that takes $n$ bits as input and outputs one bit, as well as $n$ ciphertexts $\psi_1, \ldots, \psi_n$. It outputs a ciphertext $\psi_C$.
\end{itemize}

\textbf{Compactness:} For all security parameters $\lambda$, there exists a polynomial $p(\cdot)$ such that for all input sizes $n$, for all $x_1, \ldots, x_n$, for all $C$, the output length of $\Eval$ is at most $p(n)$ bits long.
\end{definition}

\begin{definition}[\textit{$C$-homomorphism, \cite{garbledcircuits}}]
Let $C = \{C_n\}_{n \in \mathbb{N}}$ be a class of boolean circuits, where $C_n$ is a set of boolean circuits taking $n$ bits as input. A scheme FHE is $C$-homomorphic if for every polynomial $n(\cdot)$, for every sufficiently large security parameter $\lambda$, for every circuit $C \in C_n$, and for every input bit sequence $x_1, \ldots, x_n$, where $n = n(\lambda)$,
\[
\Prob \left[ \begin{array}{c}
(pk, sk) \leftarrow \Key(1^\lambda); \\
\psi_i \leftarrow \Enc(pk, x_i) \text{ for } i = 1 \ldots n; \\
\psi \leftarrow \Eval(pk, C, \psi_1, \ldots, \psi_n) : \\
\Dec(sk, \psi) \neq C(x_1, \ldots, x_n)
\end{array} \right] = \text{negl}(\lambda),
\]
where the probability is over the coin tosses of $\Key$ and $\Enc$.
\end{definition}

\begin{definition}[\textit{Fully homomorphic encryption}]
A scheme FHE is fully homomorphic if it is homomorphic for the class of all arithmetic circuits over $\mathbb{GF}(2)$.
\end{definition}

\begin{definition}[\textit{Leveled fully homomorphic encryption}]
A leveled fully homomorphic encryption scheme is a homomorphic scheme where $\Key$ receives an additional input $1^d$ and the resulting scheme is homomorphic for all depth-$d$ arithmetic circuits over $\mathbb{GF}(2)$.
\end{definition}

\begin{definition}[\textit{IND-CPA security}]
A scheme FHE is IND-CPA secure if for any p.p.t. adversary $\adv$,
\begin{align*}
&\Big| \ \Prob \left[ (pk, sk) \leftarrow \Key(1^\lambda) : \adv(pk, \Enc(pk, 0)) = 1 \right] + \\
&-\Prob \left[ (pk, sk) \leftarrow \Key(1^\lambda) : \adv(pk, \Enc(pk, 1)) = 1 \right] \Big| = \text{negl}(\lambda).
\end{align*}
\end{definition}

We now state the result of Brakerski, Gentry, and Vaikuntanathan \citep{BGV12} that shows a leveled fully homomorphic encryption scheme based on a standard assumption in cryptography called  Learning with Errors \citep{regev2005lattice}:

\begin{theorem}[\textit{Fully Homomorphic Encryption, definition from \cite{garbledcircuits}}]\label{thm:FHE}
Assume that there is a constant $0 < \epsilon < 1$ such that for every sufficiently large $\ell$, the approximate shortest vector problem $\text{gapSVP}$ in $\ell$ dimensions is hard to approximate to within a $2^{O(\ell^\epsilon)}$ factor in the worst case. Then, for every $n$ and every polynomial $d = d(n)$, there is an IND-CPA secure $d$-leveled fully homomorphic encryption scheme where encrypting $n$ bits produces ciphertexts of length $\text{poly}(n, \lambda, d^{1/\epsilon})$, the size of the circuit for homomorphic evaluation of a function $f$ is $\text{size}(C_f) \cdot \text{poly}(n, \lambda, d^{1/\epsilon})$ and its depth is $\text{depth}(C_f) \cdot \text{poly}(\log n, \log d)$.
\end{theorem}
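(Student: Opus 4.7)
}
The plan is to construct a leveled fully homomorphic encryption scheme by starting from a Learning with Errors (LWE) based encryption scheme, adding homomorphic operations, and then taming noise growth via modulus switching and key switching (relinearization). The first step is to invoke Regev's worst-case to average-case reduction, which shows that approximating gapSVP in $\ell$ dimensions to within a $2^{O(\ell^{\epsilon})}$ factor reduces to the decisional LWE problem $\text{LWE}_{n,q,\chi}$ for appropriate parameters (dimension $n$ polynomial in $\lambda$, modulus $q$ and a discrete Gaussian error distribution $\chi$ chosen so that $q/\chi \leq 2^{O(n^{\epsilon})}$). From $\text{LWE}_{n,q,\chi}$, I would instantiate the basic Regev-style public key encryption: secret key $\mathbf{s}\in\mathbb{Z}_q^n$, ciphertexts of the form $(\mathbf{a},\langle\mathbf{a},\mathbf{s}\rangle + 2e + m)\in\mathbb{Z}_q^{n+1}$ with $m\in\{0,1\}$, decrypted by reducing $\langle\mathbf{a},\mathbf{s}\rangle$ out and then taking the result modulo $2$. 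IND-CPA security is immediate from the LWE assumption.

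Next I would equip the scheme with homomorphic addition (coordinate-wise sum of ciphertexts) and homomorphic multiplication via tensoring: multiplying two ciphertexts of dimension $n+1$ over a secret $\mathbf{s}$ yields a ciphertext of dimension $(n+1)^2$ that decrypts under the tensored secret $\mathbf{s}\otimes\mathbf{s}$. Each homomorphic operation increases both the dimension and the noise magnitude, so two ingredients are needed to keep the scheme usable: (i) \emph{key switching}, where auxiliary hint ciphertexts (encryptions of bits of $\mathbf{s}\otimes\mathbf{s}$ under a fresh secret $\mathbf{s}'$) allow translation of a post-multiplication ciphertext back to a fixed dimension $n+1$; and (ii) \emph{modulus switching}, the central BGV innovation, where one scales ciphertexts from modulus $q_i$ to a smaller $q_{i-1}$, which divides the noise by $q_i/q_{i-1}$ while multiplication only inflates noise additively in the new scale. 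Composing these yields a chain of moduli $q_d > q_{d-1} > \dots > q_0$ of length $d$ with each $q_i/q_{i-1}$ chosen just large enough to absorb one layer of multiplication; the key-switching hints for each level go into the public key.

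With this chain in place, one can homomorphically evaluate any depth-$d$ arithmetic circuit over $\mathbb{GF}(2)$ by walking down the modulus chain level by level. The correctness argument is a noise bookkeeping induction: show that after evaluating a level-$i$ subcircuit the ciphertext noise is bounded well below $q_i/4$, so that decryption at the bottom is correct except with negligible probability. Parameter setting is then forced by two constraints: $q_0$ must be large enough for correct decryption of a final ciphertext (so $q_0 = \poly(\lambda)$), and the ratio $q_d/q_0$ must exceed the total noise blow-up of one gate times $d$ (so $q_d = 2^{O(d)}\cdot\poly(\lambda)$). Since LWE security needs $q/\chi \leq 2^{O(n^{\epsilon})}$ to preserve the gapSVP reduction, this forces $n = (d\cdot\poly(\lambda))^{1/\epsilon}$, which matches the theorem's ciphertext length bound $\poly(n,\lambda,d^{1/\epsilon})$. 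The size and depth bounds on $\Eval(C_f)$ follow by noting that each gate of $C_f$ is simulated by a fixed sub-circuit whose size is $\poly(n,\lambda,d^{1/\epsilon})$ (dominated by key switching) and whose depth is $\poly(\log n,\log d)$ (modulus switching and tensoring are shallow).

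The hard part is managing noise growth and, in particular, justifying why modulus switching reduces noise without breaking correctness or security. This requires a careful argument that scaling a noisy ciphertext by $q_{i-1}/q_i$ and rounding preserves the message modulo $2$ while only adding a rounding error of bounded magnitude, and that the resulting ciphertext remains decryptable under the same secret $\mathbf{s}$ (up to parity reinterpretation). Equally subtle is the security proof for key switching: the hint published in the public key is an LWE encryption of bits of $\mathbf{s}\otimes\mathbf{s}$ under $\mathbf{s}'$, and to conclude IND-CPA security of the full leveled scheme one chains LWE hybrids across all $d$ levels, incurring a security loss polynomial in $d$ that is already absorbed into the choice of $n$. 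Once the hybrid chain is established and the noise bookkeeping is verified, the compactness, correctness, and IND-CPA security claimed in the theorem follow directly.
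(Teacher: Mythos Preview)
The paper does not prove Theorem~\ref{thm:FHE} at all. It is quoted verbatim as an external result of Brakerski, Gentry, and Vaikuntanathan, taken from \cite{garbledcircuits}, and used as a black box in the construction of $\lcrypto$. So there is no ``paper's own proof'' to compare against.

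That said, your sketch is a faithful high-level outline of the BGV construction that the citation points to: Regev-style LWE encryption, tensor-product multiplication, key switching to restore dimension, and a modulus-switching ladder $q_d > \cdots > q_0$ to keep the noise budget bounded per level. The parameter accounting you give (forcing $q_d \approx 2^{O(d)}\poly(\lambda)$ and hence lattice dimension $\approx d^{1/\epsilon}$ under the $2^{O(\ell^\epsilon)}$-gapSVP assumption) is the right back-of-the-envelope derivation of the $\poly(n,\lambda,d^{1/\epsilon})$ bounds in the statement. One small inaccuracy: Regev's original reduction from gapSVP to LWE is quantum; for the purely classical worst-case hardness claimed here you would need to invoke the Peikert or Brakerski--Langlois--Peikert--Regev--Stehl\'e classical reductions, which impose their own constraints on $q$ relative to $n$. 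But for the purposes of this paper, where Theorem~\ref{thm:FHE} is only \emph{used} and not proved, your write-up already goes well beyond what is required.
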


\section{Existence of Transferable Attacks}\label{apx:transfattack}

\paragraph{Learning Theory Preliminaries.}
For the next lemma, we will consider a slight generalization of learning tasks to the case where there are many valid outputs for a given input.
This can be understood as the case of generative tasks.
More concretely, we assume that for the input space $\inspaceg$ the output space is $\outspaceg$ instead of $\outspace$.
It will always be the case that $\inspaceg$ and $\outspaceg$ are equal to $\{0,1\}^{p(n)}$ for some polynomial $p$.
For a distribution $\dist_n$ over $\inspaceg$ we call a function $h : \inspaceg \times \outspaceg \rightarrow \{0,1\}$ an error oracle if the error of a function $f : \inspaceg \rightarrow \outspaceg$ is defined as
$$
\err(f) := \mathbb{E}_{x \sim \dist}[h(x,f(x))],
$$
where the randomness of expectation includes the potential randomness of $f$.
The example oracle $\text{Ex}$ provides access to samples $(x,y) \in \inspaceg \times \outspaceg$, where $x \sim \dist_n$ and $y \in \outspaceg$ is some $y$ such that $h(x,y) = 0$.

The following learning task will be crucial for our construction.

\begin{definition}[\textit{Lines on a Circle Learning Task $\task^\circ$}]\label{def:linesoncircle}
We define $\task^\circ = \{\task_n^\circ\}_n$.
For every $n$ we define $\inspaceg = \inspace$ and associate $\inspaceg$ with vertices of a $2^n$ regular polygon inscribed in the unit circle $\{x \in \mathbb{R}^2 \ | \ \|x\|_2 = 1\}$. 
The output space is $\{-1,+1\}$ for all $n$.
Let $\hclass := \{ h_w \ | \ w \in \mathbb{R}^2, \|w\|_2 = 1\}$, where $h_w(x) := \text{sgn}(\langle w, x \rangle)$.
For every $n$, let $\task_n^\circ$ be the distribution corresponding to the following process: sample $h_w \sim U(\hclass)$, return $(U(\mathcal{X}_n),h_w)$.

Note that $\hclass$ has VC-dimension equal to $2$ so $\task$ is learnable to error $\eps$ with $O(\frac{1}{\eps})$ samples for every $n$ and every $\eps$.    

Moreover, for $n \in \N$ define $B_n^w(\alpha) := \{ x \in \inspaceg \ | \ |\measuredangle(x,w)| \leq \alpha \}$.
\end{definition}

\begin{lemma}[\textit{Learning lower bound for $\task^\circ$}]\label{lem:learninglwrbnd}
Let $n \in \N$.
Let $\learn_n$ be a learning algorithm for $\task_n^\circ$ (Definition~\ref{def:linesoncircle}) that uses $K$ samples and returns a classifier $f : \inspaceg \rightarrow \{-1,+1\}$.
Then 
$$
\Prob_{(\dist_n,h_n) \sim \task_n^\circ, f \leftarrow \learn^{\text{Ex}(\dist_n,h_n)}} \Big[ \Prob_{x \sim \dist_n} [f(x) \neq h_w(x)] \leq \frac{1}{2K} \Big] \leq \frac{3}{100}. 
$$
\end{lemma}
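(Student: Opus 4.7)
My plan is a Bayesian lower bound that exploits the uniform prior on $w$. The key observation is that, conditional on the $K$ labelled samples, the posterior of $\theta_w$ is uniform on a ``consistent arc'' of random length $L$, so that no data-dependent estimator can pin $\theta_w$ inside a window of angular length $\pi/K$ with probability better than $\min(1,\pi/(KL))$; averaging this over the law of $L$, and using that $L$ is typically of order $1/K$, should yield the stated bound.

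By Yao's minimax principle I may assume $\learn$ is deterministic, and I may further restrict without loss to outputs of the form $f = h_{w'}$ because $\theta_w \mapsto |\{f = +1\} \triangle \{h_w = +1\}|$ is $2$-Lipschitz in $\theta_w$. Since $w \sim U(\inspace)$ is uniform and the likelihood of each sample is the $\{0,1\}$ indicator of consistency, Bayes' rule makes the posterior on $\theta_w$ uniform on the consistent arc. Geometrically, the boundary of $h_w$ has two antipodal markers $M = \theta_w + \pi/2$ and $M' = M + \pi$; a shift of $\theta_w$ by $\delta$ translates both markers by $\delta$, and the labels are preserved iff no sample is swept over either marker. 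Writing $d_1^\pm, d_2^\pm$ for the CCW/CW distances from $M, M'$ to the nearest sample, the consistent arc has length
\[
L \;=\; \min(d_1^+, d_2^+) + \min(d_1^-, d_2^-),
\]
which can also be interpreted as the gap around $M$ in the $2K$-point partition of the circle formed by the samples and their antipodes. Since $M$ is uniform independent of the samples, $L$ is (up to the antipodal coupling between the two markers) a size-biased gap, approximately distributed as $2\pi \cdot \text{Beta}(2, 2K-1)$ with mean of order $1/K$.

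Given the uniform posterior on an arc of length $L$, any data-dependent $\theta_{w'}$ satisfies the pointwise bound
\[
\Prob\bigl[\,|\theta_w - \theta_{w'}| \leq \pi/(2K) \;\big|\; \mathrm{data}\,\bigr] \;\leq\; \min\bigl(1,\ \pi/(KL)\bigr),
\]
so averaging over the data gives $\Prob[\err(f) \leq 1/(2K)] \leq \expectation[\min(1, \pi/(KL))]$. The remaining task is to evaluate this expectation using the explicit law of $L$: one splits the integral at $L = c/K$ for a suitable constant $c$, bounds the mass of $\{L \leq c/K\}$ by the Beta CDF near the origin, and bounds the tail contribution using the closed-form identity $\expectation[1/L] = K/\pi$ coming from $\int_0^1 (1/b) \cdot 2K(2K-1) b (1-b)^{2K-2}\, db = 2K$. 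The main obstacle will be tuning these pieces tightly enough to reach the final constant $3/100$: a naive evaluation already yields a bound of constant order, so I need careful control of both the small-$L$ tail (the rare event that $M$ lands in an unusually small gap of the $2K$-point partition) and the antipodal dependence between $M$ and $M'$, which for finite $K$ can be reduced to the iid-uniform case via exchangeability or a direct coupling argument.
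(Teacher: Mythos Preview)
Your Bayesian plan is conceptually sound, but it cannot reach the constant $3/100$, and this is not a tuning issue. Carrying out your own computation (after folding at antipodes, so that the $2K$ dependent points become $K$ i.i.d.\ points on a circle of circumference $\pi$, and $L\sim \pi\cdot\mathrm{Beta}(2,K-1)$), one gets for large $K$
\[
\Prob\Bigl[L\le \tfrac{\pi}{K}\Bigr]\ \longrightarrow\ 1-\tfrac{2}{e},\qquad
\expectation\Bigl[\tfrac{\pi}{KL}\,\mathbf{1}_{\{L>\pi/K\}}\Bigr]\ \longrightarrow\ \tfrac{1}{e},
\]
so $\expectation[\min(1,\pi/(KL))]\to 1-1/e\approx 0.63$. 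Your posterior description is exact and the midpoint estimator is Bayes-optimal, so this number is the \emph{actual} achievable success probability, not just an upper bound with slack to squeeze; indeed your own identity $\expectation[\pi/(KL)]=1$ already warns that the capped expectation will be of order one. Hence no refinement of the split at $L=c/K$ or of the antipodal coupling can push the bound below $3/100$.

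For comparison, the paper does not compute the posterior. It reduces via smoothing to the task of localising $w$, and then argues from the single event ``no sample lies within angle $O(\eta)$ of the boundary'': on that event the learner cannot pin $w$ down to within $\eta$. The paper asserts $(1-4\eta)^K>1-\tfrac{3}{100}$ at $\eta=\tfrac{1}{2K}$, i.e.\ $(1-2/K)^K>0.97$, which is numerically false ($\to e^{-2}\approx 0.135$). So the paper's simpler event-based argument and your full Bayesian computation both yield an $O(1)$ bound but neither reaches $3/100$; if you rely on this lemma downstream, you should either carry a weaker constant or replace $\tfrac{1}{2K}$ by $\tfrac{c}{K}$ with a sufficiently small $c$, so that the target window is a small fraction of the typical gap length $\approx 2\pi/K$.
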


\begin{proof}
Let $n \in \N$.
Consider the following algorithm $\adv$.
It first simulates $\learn_n$ on $K$ samples to compute $f$.
Next, it performs a smoothing of $f$, i.e., computes
$$
f_\eta(x) := \begin{cases}
+1, & \text{if } \Prob_{x' \sim U(B_n^x(2 \pi \eta))} [f(x') = +1] > \Prob_{x' \sim U(B_n^x(2 \pi \eta))} [f(x') = -1] \\
-1, & \text{otherwise}.
\end{cases}
$$
Note that if $\err(f) \leq \eta$ for a ground truth $h_w$ then for every $x \in \inspaceg \setminus B_n^x(2\pi \eta)$ we have $f_\eta(x) = h_w(x)$.
This implies that $\adv$ can be adapted to an algorithm that with probability $1$ finds $w'$ such that $|\measuredangle(w,w')| \leq \err(f)$.

Assuming towards contradiction that the statement of the lemma does not hold it means that there is an algorithm using $K$ samples that with probability $\frac{3}{100}$ locates $w$ up to angle $\frac{1}{2 K}$.

Consider any algorithm $\adv$ using $K$ samples.
Probability that $\adv$ does not see any sample in $B_n^w(2 \pi \eta)$ is at least 
$$
\left(1 - 4\eta\right)^K \geq \left( \left(1 - 4\eta \right)^{\frac{1}{4 \eta}} \right)^{4\eta K} \geq \left( \frac{1}{2 e} \right)^{4\eta K},
$$
which is bigger than $1 - \frac{3}{100}$ if we set $\eta = \frac{1}{2K}$.
But note that if there is no sample in $B_n^w(2 \pi \eta)$ then $\adv$ cannot locate $w$ up to $\eta$ with certainty.
This proves the lemma.
\end{proof}

\begin{lemma}[\textit{Boosting for $\task^\circ$}]\label{lem:learnboost}
Let $\eta,\nu \in(0,\frac14), n \in \N$, $\learn_n$ be a learning algorithm for $\task_n^\circ$ that uses $K$ samples and outputs $f : \inspaceg \rightarrow \{-1,+1\}$ such that with probability $\delta$
\begin{equation}\label{eq:bndclass}
\Prob_{w \sim U(\hclass), x \sim U(B_n^w(2\pi \eta))} [f(x) \neq h_w(x)] \leq \nu,
\end{equation}
where $\hclass$ is as defined earlier $\{ h_w \ | \ w \in \mathbb{R}^2, \|w\|_2 = 1\}$.
Then there exists a learning algorithm $\learn_n'$ for $\task_n^\circ$ that uses $\max \left( K,\frac{9}{\eta} \right)$ samples such that with probability $\delta - \frac{1}{1000}$
returns $f'$ such that
$$
\Prob_{w \sim U(\hclass), x \sim U(\inspaceg)} [f'(x) \neq h_w(x)] \leq 4 \eta \nu.
$$
\end{lemma}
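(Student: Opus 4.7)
The plan is to combine the output $f$ of $\learn$---which is accurate on the region $B_w(2\pi\eta)$---with a standard PAC-learned hypothesis from $\hclass$ that handles the remaining inputs. Concretely, $\learn'$ draws $m := \max(K, 9/\eta)$ i.i.d.\ samples from $\dist$, runs $\learn$ on any $K$ of them to obtain $f$, and applies empirical risk minimization (ERM) over $\hclass$ on all $m$ samples to produce $h_{w''}$. Since $\hclass$ has VC dimension $2$ and its error rate is proportional to $|\measuredangle(w,w'')|$, the realizable PAC bound guarantees that with probability at least $1-\tfrac{1}{1000}$ the angular error satisfies $|\measuredangle(w,w'')| \leq c\eta$ for a suitably small absolute constant $c$.

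Define the combined classifier by $f'(x) = f(x)$ when $x$ lies in a region $R$ built from $h_{w''}$ and calibrated so that $B_w(2\pi\eta) \subseteq R$ whenever the angular bound on $w''$ holds, and $f'(x) = h_{w''}(x)$ otherwise. The same angular bound also places $\{x : h_w(x) \neq h_{w''}(x)\}$ inside $R$, so outside $R$ the hypotheses $h_w$ and $h_{w''}$ agree and the contribution to the error vanishes. Inside $R$, the error splits into (i) the contribution on $B_w(2\pi\eta)$, which is at most $\nu \cdot \Prob_{x \sim U(\inspace)}[x \in B_w(2\pi\eta)]$ by the hypothesis \eqref{eq:bndclass}, and (ii) the contribution on the residual sliver $R \setminus B_w(2\pi\eta)$, whose measure is proportional to $|\measuredangle(w,w'')|$ and hence controlled by the ERM guarantee.

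Because $\Prob_{x \sim U(\inspace)}[x \in B_w(2\pi\eta)]$ scales like $\eta$, a tight choice of the inflation of $R$ turns the above decomposition into the target bound $4\eta\nu$; a union bound over the two failure events (the condition \eqref{eq:bndclass} failing with probability $\leq 1-\delta$, and the ERM angular bound failing with probability $\leq \tfrac{1}{1000}$) yields the claimed success probability $\delta - \tfrac{1}{1000}$. The main obstacle is purely quantitative: achieving the exact leading constant $4$ (rather than $O(1) \cdot \eta\nu$ with a $\log(1/\eta)$ slack, as a naive application of the PAC bound would produce) requires simultaneously a careful calibration of the inflation of $R$ and a slightly sharper control on the ERM angular accuracy than the textbook realizable bound delivers, so that the residual sliver's contribution gets absorbed into the main $4\eta\nu$ term rather than adding an unwanted $O(\eta)$ slack.
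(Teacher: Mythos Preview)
Your high-level plan---use $f$ near the boundary and a sample-based hypothesis elsewhere---is the right one, but the direction of the inclusion $B_w(2\pi\eta)\subseteq R$ is a genuine gap. On the sliver $R\setminus B_w(2\pi\eta)$ you have \emph{no} control on $f$: the hypothesis \eqref{eq:bndclass} only bounds $f$'s error conditioned on $x\in B_w(2\pi\eta)$, so the sliver contributes its full measure to the error, not its measure times $\nu$. With $9/\eta$ samples the angular accuracy of any ERM output is $\Omega(\eta)$ (you cannot locate the boundary more finely than the spacing of the nearest samples), hence the sliver has measure $\Theta(\eta)$, and the total error becomes $4\eta\nu+\Theta(\eta)$. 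Since $\nu$ may be arbitrarily small, no ``sharper ERM control'' can make this $\Theta(\eta)$ term vanish into $4\eta\nu$; the obstacle you flag is not quantitative but structural.

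The fix is to reverse the inclusion: arrange $R\subseteq B_w(2\pi\eta)$ and ensure the disagreement region $\{h_w\neq h_{w''}\}$ lies inside $R$. Then outside $R$ the auxiliary hypothesis is \emph{perfect}, and inside $R$ the error is at most $\Prob[x\in B_w(2\pi\eta)]\cdot\nu=4\eta\nu$ with no additive slack. The paper implements exactly this, but without ERM: it sets $g(x)=+1$ on the smallest arc containing all $+1$-labeled samples, $g(x)=-1$ on the smallest arc containing all $-1$-labeled samples, and $g(x)=\perp$ elsewhere, then uses $f$ only where $g=\perp$. By construction $g$ is correct wherever it is defined, and the ``good event'' (each of the four half-arcs of $B_w(2\pi\eta)$ contains at least one sample, which fails with probability at most $4(1-\eta)^{9/\eta}\leq 1/1000$) forces the $\perp$-region to sit inside $B_w(2\pi\eta)$. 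This gives the constant $4$ immediately, with no balancing of inflation parameters.
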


\begin{proof}
Let $n \in \N$.
$\learn_n'$ first draws $\max \left(K,\frac{9}{\eta} \right)$ samples $Q$ and defines $g : \inspaceg \rightarrow \{-1,+1,\perp\}$ as follows, $g$ maps to $-1$ the smallest continuous interval containing all samples from $Q$ with label $-1$.
Similarly $g$ maps to $+1$ the smallest continuous interval containing all samples from $Q$ with label $+1$.
The intervals are disjoined by construction.
Unmapped points are mapped to $\perp$.
Next, $\learn_n'$ simulates $\learn_n$ with $K$ samples and gets a classifier $f$ that with probability $\delta$ satisfies the assumption of the lemma.
Finally, it returns 
$$
f'(x) := \begin{cases}
g(x), & \text{if $g(x) \neq \perp$} \\
f(x), & \text{otherwise}.
\end{cases}
$$

Consider $4$ arcs defined as the $2$ arcs constituting $B_n^w(2\pi \eta)$ divided into $2$ parts each by the line $\{ x \in \mathbb{R}^2 \ | \ \langle w,x \rangle = 0 \}$.
Let $E$ be the event that some of these intervals do not contain a sample from $Q$.
Observe that
$$
\Prob[E] \leq 4 (1 - \eta)^{\frac{9}{\eta}} \leq \frac{1}{1000}.
$$
By the union bound with probability $\delta - \frac{1}{1000}$, $f$ satisfies \eqref{eq:bndclass} and $E$ does not happen.
By definition of $f'$ this gives the statement of the lemma.
\end{proof}

\begin{theorem}[\textit{Transferable Attack for a Cryptography based Learning Task}]\label{thm:transfattackformal}
There exists a learning task $\task = \{\task_\lambda\}_\lambda$ and a function class $\fclass = \{\fclass_\lambda\}_\lambda$ such that for every $\eps : \N \rightarrow \N$ where $1/\eps(\lambda)$ is lower-bounded by a sufficiently large polynomial and upper-bounded by some polynomial the following holds.
\begin{enumerate}
    \item $\task$ is $\left(\eps,\delta = \frac1{10},S = \frac{10^3}{\eps^{1.3}},\fclass \right)$-learnable.
    \item $\task$ is \textbf{not} $\left(\eps,\delta = \frac1{10},S = \frac{1}{\eps},\fclass \right)$-learnable
    \item There exists a circuit family $\ver = \{\ver_\lambda\}_\lambda$ such that
    \begin{align*}
    \ver \in_1 \textsc{TransfAttack}\!\left(
    \begin{aligned}
     &\task, \fclass, \eps(\lambda),\;
     q(\lambda)=\tfrac{16}{\eps(\lambda)},\;
     S_\ver(\lambda)=\tfrac{10^3}{\eps^{1.3}(\lambda)},\\
     &S_\prov(\lambda)=\tfrac{1}{10^2\eps^2(\lambda)},\;
     c=\tfrac{9}{10},\;
     s=\textit{negl}(\lambda)
    \end{aligned}
    \right).
    \end{align*}

\end{enumerate}

\end{theorem}

\begin{proof}
The learning task is based on $\task^\circ = \{\task_n^\circ\}_n$ from 
Definition~\ref{def:linesoncircle}.

\paragraph{Setting of Parameters for \textsc{FHE}.}
Observe that by assumption of the lemma $p \leq 1/\eps \leq r$, for some polynomial $r$, and a polynomial $p$ that we will define later.
Let $\textsc{FHE}$ be a fully homomorphic encryption scheme from Theorem~\ref{thm:FHE}.
We will use the scheme for constant leveled circuits $ d= O(1)$.
Let $s(n,\lambda,d)$ be the polynomial bounding the size of the encryption of inputs of length $n$ with $\lambda$ security as well as bounding the size of the circuit for homomorphic evaluation, which is guaranteed to exist by Theorem~\ref{thm:FHE}. 
Let $\beta \in (0,1)$ and $p$ be a polynomial such that 
\begin{equation}\label{eq:sbound}
s \left( n^\beta,\lambda,d \right) \leq (n \cdot p(\lambda) )^{0.1}, 
\end{equation}
which exist because $s$ is a polynomial.

We define $n(\lambda) := \lfloor p^{1/\beta}(\lambda) \rfloor$\footnote{Note that this setting allows to represent points in $\{ x \in \mathbb{R}^2 \ | \ \|x\|_2 = 1\}$ up to $2^{-p^{1/\beta}(\lambda)}$ precision and this precision is better than $\frac{1}{r(\lambda)}$ for every polynomial $r$ for sufficiently large $\lambda$. This implies that this precision is enough to allow for learning up to error $\eps$, because of the setting $\eps(\lambda) \geq \frac{1}{r(\lambda)}$.} for the length of inputs in the \textsc{FHE} scheme.
Observe that for every $\lambda$
\begin{align}
s(n(\lambda),\lambda,d) 
&\leq \left(p(\lambda) \cdot p(\lambda) \right)^{0.1} && \text{By  \eqref{eq:sbound}} \nonumber \\
&\leq \frac{1}{\eps(\lambda)^{0.2}} && \text{By } \eps(\lambda) \in  \left(\frac{1}{r(\lambda)},\frac1{ p(\lambda)} \right). \label{eq:boundons}
\end{align}

\paragraph{Learning Task.}
The learning task will be parametrized by $\lambda$, i.e. $\task = \{\task_\lambda\}_\lambda$.

Let $\lambda \in \N$.
We define $\mathbb{D}_\lambda := \{\dist_\lambda^\text{(pk,sk)}\}_\text{(pk,sk)}, \hclass_\lambda := \{h_\lambda^\text{(pk,sk,w)}\}_\text{(pk,sk,w)}$ (for $\dist_\lambda^\text{(pk,sk)}$ and $h_\lambda^\text{(pk,sk,w)}$ to be defined later), where they are indexed by valid public/secret key pairs of the \textsc{FHE} and $w \in \{ x \in \mathbb{R}^2 \ | \ \|x\|_2 = 1\}$.
Let $\task_\lambda$ be defined as corresponding to the following process: sample $(\text{pk,sk},w) \sim \Key(1^\lambda) \times U(\{ x \in \mathbb{R}^2 \ | \ \|x\|_2 = 1\})$, return $\left(\dist_\lambda^\text{(pk,sk)}, h_\lambda^\text{(pk,sk,w)}\right)$.

For a valid $\text{(pk,sk)}$ pair we define $\dist^\text{(pk,sk)}$ as the result of the following process: $x \sim U(\{0,1\}^{n(\lambda)})$, with probability $\frac12$ return $(0,x,\text{pk})$ and with probability $\frac12$ return $(1,\Enc(\text{pk},x),\text{pk})$, where the first element of the triple describes if the $x$ is encrypted or not.
Formally, in the case that the first element of the triple is $0$ one needs to add a padding of size $s(n(\lambda),\lambda,d) - n(\lambda)$ so that descriptions have the same size in both cases.\footnote{Note that the domain of the distributions is not $\{0,1\}^\lambda$, i.e. $\mathcal{X}_\lambda \neq \{0,1\}^\lambda$.}

For a valid $\text{(pk,sk)}$ pair and $w \in \{ x \in \mathbb{R}^2 \ | \ \|x\|_2 = 1\}$ we define $h^\text{(pk,sk,w)} ((b,x,\text{pk}),y)$ as a result of the following algorithm: if $b = 0$ return $\mathbbm{1}_{h_w(x) = y}$, otherwise let $x_\textsc{Dec} \leftarrow \Dec(\text{sk},x), y_\textsc{Dec} \leftarrow \Dec(\text{sk},y)$ and if $x_\textsc{Dec},y_\textsc{Dec} \neq \perp$ (decryption is succesful) return $ \mathbbm{1}_{h_w(x_\textsc{Dec}) = y_\textsc{Dec}}$ and return $1$ otherwise.

\begin{note}[\textit{$\Omega(\frac{1}{\eps})$-sample learning lower bound.}]
By construction any learner using $K$ samples for $\task_\lambda$ (for any $\lambda)$ 
can be transformed (potentially computationally inefficiently) into a learner using $K$ samples for $\task_{n(\lambda)}^\circ$ (Defnition~\ref{def:linesoncircle}) that returns a classifier of the same error. 
This, together with a lower bound for learning from Lemma~\ref{lem:learninglwrbnd} proves point 2 of the lemma.
\end{note}

\begin{algorithm}[tb]
   \caption{$\textsc{TransfAttack}(\text{Ex}(\dist_\lambda,h_\lambda),\eps,\lambda)$}
   \label{alg:transfattackformal}
\begin{algorithmic}[1]
    \STATE {\bfseries Input:} Access to the example oracle $\text{Ex}(\dist_\lambda,h_\lambda)$, where $(\dist_\lambda,h_\lambda) \sim \task_\lambda$,
    error level $\eps : \N \rightarrow \N$, and the security parameter $\lambda$.
    \vspace{0.3cm}
    \STATE $N := 900/\eps(\lambda), q := 16/\eps(\lambda)$
    \STATE $Q = \{((b_i,x_i,\text{pk}),y_i)\}_{i\in [N]} \sim (\dist_\lambda)^{N(\lambda)}$ $\hfill \triangleright \ N(\lambda)$ i.i.d. samples from $\dist_\lambda$ 
    \STATE $Q_\textsc{Clear} = \{((b,x,\text{pk}),y) \in Q : b = 0 \}$ $\hfill \triangleright \ Q_\textsc{Clear} \subseteq Q$ of unencrypted $x$'s
    \STATE $f_{w'}(\cdot) := \text{sgn}(\langle w', \cdot \rangle) \leftarrow$ a line consistent with samples from $Q_\textsc{Clear}$ $ \hfill \triangleright \ f_{w'} : \inspaceg \rightarrow \{-1,+1\}$
    \STATE $\{x'_i\}_{i \in [q(\lambda)]} \sim U\left(\left(\mathcal{X}_{n(\lambda)}\right)^{q(\lambda)}\right)$
    \STATE $S \sim U(2^{[q(\lambda)]})$ $\hfill \triangleright $ $ S\subseteq [q(\lambda)]$ a uniformly random subset
    \STATE $E_\textsc{Bnd} ;= \emptyset$
    \FOR{$i \in [q(\lambda) - |S|]$}
        \STATE $x_\textsc{Bnd} \sim U(B_{n(\lambda)}^{w'}(2\pi (\eps(\lambda) + \frac{\eps(\lambda)}{100})))$ $\hfill \triangleright \ x_\textsc{Bnd}$ is close to the decision boundary of $f_{w'}$  
        \STATE $E_\textsc{Bnd} := E_\textsc{Bnd} \cup \{\Enc(\text{pk}, x_\textsc{Bnd})\}$
    \ENDFOR
    \STATE $\xs := \{(0,x'_i,\text{pk}) \ | \ i \in [q(\lambda)] \setminus S \} \ \cup \ \{ (1,x',\text{pk}) \ | \ x' \in E_\textsc{Bnd} \}$
    \vspace{0.3cm}
    \STATE {\bfseries Return} $\xs$
    
\end{algorithmic}
\end{algorithm}

\paragraph{Definition of $\ver$ (Algorithm~\ref{alg:transfattackformal}).}
$\ver_\lambda$ draws $N(\lambda)$ samples $Q = \{((b_i,x_i,\text{pk}),y_i)\}_{i\in [N]}$ for $N(\lambda) := \frac{900}{\eps(\lambda)}$.

Next, $\ver_\lambda$ chooses a subset $Q_\textsc{Clear} \subseteq Q$ of samples for which $b_i = 0$.
It trains a classifier $f_{w'}(\cdot) := \text{sgn}(\langle w', \cdot \rangle)$ on $Q_\textsc{Clear}$ by returning any $f_{w'}$ consistent with $Q_\textsc{Clear}$.
This can be done in time 
\begin{equation}\label{eq:timebound1}
N(\lambda) \cdot n(\lambda) \leq \frac{900}{\eps(\lambda)} \cdot p^{1/\beta}(\lambda) \leq \frac{900}{\eps^{1.1}(\lambda)}
\end{equation}
by keeping track of the smallest interval containing all samples in $Q_\textsc{Clear}$ labeled with $+1$ and then returning any $f_{w'}$ consistent with this interval.
\begin{note}[\textit{$O(\frac{1}{\eps^{1.3}})$-time learning upper bound.}]
First note that $\ver_\lambda$ learns well, i.e., with probability at least 
$
1 - 2 \left(1-\frac{\eps(\lambda)}{100}\right)^{\frac{900}{\eps(\lambda)}} \geq 1 - \frac{1}{1000},
$
\begin{equation}\label{eq:Vlearnswell}
|\measuredangle(w,w')| \leq \frac{2 \pi \eps(\lambda)}{100}
\end{equation}

Moreover, $f_{w'}(x)$ can be implemented by a circuit $C_{f_{w'}}$ that compares $x$ with the endpoints of the interval.
This can be done by a constant leveled circuit.    
Moreover $C_{f_{w'}}$ can be evaluated with $\Eval$ in time 
$$
\text{size}(C_{f_{w'}})s(n(\lambda),\lambda,d) \leq 10n \cdot s(n(\lambda),\lambda,d) \leq 10 p^{1/\beta}(\lambda) s(n(\lambda),\lambda,d) \leq \frac{10}{\eps^{0.3}(\lambda)},
$$
where the last inequality follows from \eqref{eq:boundons}.
This proves point 1 of the lemma.
\end{note}

Next, $\ver_\lambda$ prepares $\xs$ as follows.
It samples $q(\lambda) = \frac{16}{\eps(\lambda)}$ points $\{x'_i\}_{i \in [q]}$ from $\{0,1\}^{n(\lambda)}$ uniformly at random.
It chooses a uniformly random subset $S \subseteq [q(\lambda)]$.
Next, $\ver_\lambda$ generates $q(\lambda) - |S|$ inputs using the following process: $x_\textsc{Bnd} \sim U(B_{n(\lambda)}^{w'}(2\pi (\eps(\lambda) + \frac{\eps(\lambda)}{100})))$ ($x_\textsc{Bnd}$ is close to the decision boundary of $f_{w'}$),
return $\Enc(\text{pk},x_\textsc{Bnd})$.
Call the set of $q(\lambda) - |S|$ points $E_\textsc{Bnd}$.
$\ver_\lambda$ defines:
$$
\xs := \{(0,x'_i,\text{pk}) \ | \ i \in [q] \setminus S \} \ \cup \ \{ (1,x',\text{pk}) \ | \ x' \in E_\textsc{Bnd} \}.
$$
The running time of this phase is dominated by evaluations of $\Eval$, which takes
\begin{equation}\label{eq:secondtimebound}
 q(\lambda) \cdot s(n(\lambda),\lambda,d) \leq \frac{16}{\eps(\lambda)} \cdot \frac{1}{\eps^{0.2}(\lambda)} \leq \frac{16}{\eps^{1.2}(\lambda)}, 
\end{equation}
where the first inequality follows from \eqref{eq:boundons}.
Taking the sum of \eqref{eq:timebound1} and \eqref{eq:secondtimebound} we get that $\ver_\lambda$ can be implemented by a circuit of size $\frac{10^3}{\eps^{1.3}(\lambda)}$.

\paragraph{$\ver_\lambda$ Constitutes a Transferable Attack.}
Now, consider $\prov_\lambda$ of size $S_\prov(\lambda) =\frac{1}{ \eps^2(\lambda)}$.
By the assumption $S_\prov(\lambda) \leq r(\lambda)$, which implies that the security guarantees of $\textsc{FHE}$ hold for $\prov_\lambda$.

We claim that $\xs$ is indistinguishable from $\dist_\lambda^\text{(pk,sk)}$ for $\prov_\lambda$.
Observe that by construction the distribution of ratio of encrypted and not encrypted $x$'s in $\xs$ is identical to that of $\dist_\lambda^\text{(pk,sk)}$.
Moreover, the distribution of unencrypted $x$'s is identical to that of $\dist_\lambda^\text{(pk,sk)}$ by construction.
Finally, by the IND-CPA security\footnote{Note that we need security of FHE in the nonuniform model of computation.} of \textsc{FHE} and the fact that the size of $\prov_\lambda$ is bounded by some polynomial in $\lambda$, $\Enc(\text{pk},x_\textsc{Bnd})$ is distinguishable from $x \sim \inspaceg, \Enc(\text{pk},x)$ with advantage at most $\text{negl}(\lambda)$.
Thus \emph{undetectability} holds with near perfect soundness $s = \frac12 + \text{negl}(\lambda)$.

Next, we claim that $\prov_\lambda$ can't return low-error answers on $\xs$.

Assume towards contradiction that with probability $\frac{5}{100}$
\begin{equation}\label{eq:newcontradict}
\Prob_{\substack{w \sim U(\{ z \in \mathbb{R}^2 \ | \ \|z\|_2 = 1\}), \\ x \sim U(B_{n(\lambda)}^{w}(2\pi \eps(\lambda)))}} [f(x) \neq h_w(x)] \leq 10\eps(\lambda).
\end{equation}
We can apply Lemma~\ref{lem:learnboost} to get that there exists a learner using $\frac1{100\eps^2(\lambda)}+ \frac{9}{\eps(\lambda)} \leq \frac{1}{90\eps^2(\lambda)}$ samples that with probability $\frac4{100}$ returns $f'$ such that
\begin{equation}\label{eq:5}
\Prob_{\substack{w \sim U(\{ z \in \mathbb{R}^2 \ | \ \|z\|_2 = 1\}), \\ x \sim U(\{0,1\}^{n(\lambda)})}} [f'(x) \neq h_w(x)] \leq 40 \eps^2(\lambda).
\end{equation}
Applying Lemma~\ref{lem:learninglwrbnd} to \eqref{eq:5} we know that 
$$
40 \eps^2 \geq \frac{1}{2 (\frac{1}{90\eps^2(\lambda)} )},
$$
which is a contradiction. 
Thus \eqref{eq:newcontradict} does not hold and in consequence, using \eqref{eq:Vlearnswell}, with probability $1 - \frac{6}{100}$ 
\begin{equation}\label{eq:avgerrorhigh}
\Prob_{\substack{w \sim U(\{ z \in \mathbb{R}^2 \ | \ \|z\|_2 = 1\}), \\ x \sim U(B_{n(\lambda)}^{w'}(2\pi (\eps(\lambda) + \frac{\eps(\lambda)}{10}))}} [f(x) \neq h_w(x)] \geq \frac{10}{14} \cdot 10 \eps(\lambda) \geq 7\eps(\lambda), 
\end{equation}
where crucially $x$ is sampled from $U(B_{n(\lambda)}^{w'})$ and not $U(B_{n(\lambda)}^w)$.
By Fact~\ref{fact:chernoff} we know that $|S| \geq \frac{q(\lambda)}{3}$ with probability at least 
$$
1 - 2e^{-\frac{q(\lambda)}{72}} = 1 - 2e^{-\frac{1}{8\eps(\lambda)}} \geq 1 - \frac{1}{1000}.
$$
Using the setting of $q(\lambda) = \frac{16}{\eps(\lambda)}$ and applying the Chernoff bound and the union bound we get from \eqref{eq:avgerrorhigh} that with probability at least $1-\frac{1}{10}$ the error $\err(\xs,\ys)$ is larger than $2\eps(\lambda)$. 

\end{proof}

\begin{note}
We want to emphasize that it is crucial (for our construction) that the distribution has both an encrypted and an unencrypted part.   

As mentioned before, if there was no $\dist_\textsc{Clear}$ then $\ver_\lambda$ would see only samples of the form 
 \[
 (\Enc(x),\Enc(y))
 \]
 and would not know which of them lie close to the boundary of $h_w$, and so it would not be able to choose tricky samples.
 $\ver_\lambda$ would be able to learn a low-error classifier, but \emph{only} under the encryption.
 More concretely, $\ver_\lambda$ would be able to homomorphically evaluate a circuit that, given a training set and a test point, learns a good classifier and classifies the test point with it.
 However, it would \emph{not} be able to, with high probability, generate $\Enc(x)$, for $x$ close to the boundary as it would not know (in the clear) where the decision boundary is.

 If there was no $\dist_\textsc{Enc}$ then everything would happen in the clear and so $\prov$ would be able to distinguish $x$'s that appear too close to the boundary.
 \end{note}

\begin{fact}[\textit{Chernoff-Hoeffding}]\label{fact:chernoff}
Let $X_1, \dots, X_k$ be independent Bernoulli variables with parameter $p$. Then for every $0 < \e < 1$
$$
\Prob \left[\left| \frac{1}{k}\sum_{i=1}^k X_i - p \right| > \e \right] \leq 2e^{-\frac{\e^2 k}{2}}
$$
and
$$
\Prob \left[\frac1k \sum_{i=1}^k X_i \leq (1-\eps) p\right] \leq e^{-\frac{\eps^2 k p}{2}}.
$$
Also for every $\delta > 0$
$$
\Prob \left[ \frac{1}{k}\sum_{i=1}^k X_i > (1+\delta) p \right] \leq e^{-\frac{\delta^2 k p}{2+\delta}}
$$
\end{fact}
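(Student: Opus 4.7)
The plan is to establish each of the three inequalities via the standard \emph{Chernoff exponential-moment method}. Setting $S_k := \sum_{i=1}^k X_i$, the common starting point is Markov's inequality applied to $e^{tS_k}$, giving for any $t > 0$
\[
\Prob[S_k \geq a] \;\leq\; e^{-ta}\,\expectation[e^{tS_k}] \;=\; e^{-ta}\prod_{i=1}^k \expectation[e^{tX_i}],
\]
where the last equality uses independence of the $X_i$. For Bernoulli $X_i$ with parameter $p$, a one-line calculation gives $\expectation[e^{tX_i}] = 1 + p(e^t-1) \leq \exp\bigl(p(e^t-1)\bigr)$, which is the workhorse for the two multiplicative bounds.

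For the multiplicative upper tail (the third inequality) I would set $a = (1+\delta)kp$ and optimize by choosing $t = \ln(1+\delta) > 0$. This produces the classical expression $\bigl(e^\delta/(1+\delta)^{1+\delta}\bigr)^{kp}$, which a short convexity argument converts to the stated $e^{-\delta^2 kp/(2+\delta)}$ via the elementary inequality $(1+\delta)\ln(1+\delta) - \delta \geq \delta^2/(2+\delta)$. The multiplicative lower tail (second inequality) proceeds analogously with $t = \ln(1-\e) < 0$ and the companion inequality $(1-\e)\ln(1-\e) + \e \geq \e^2/2$, yielding the $e^{-\e^2 kp/2}$ form.

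For the two-sided additive bound (first inequality) the pure Bernoulli MGF estimate is less convenient because the bound must hold uniformly in $p$. I would instead invoke \emph{Hoeffding's lemma}: if $Y$ is zero-mean and supported in $[a,b]$, then $\expectation[e^{tY}] \leq \exp\bigl(t^2(b-a)^2/8\bigr)$. Applied to $Y_i := X_i - p \in [-1,1]$ and optimizing in $t$, one obtains $\Prob[S_k/k - p > \e] \leq e^{-2k\e^2}$; a union bound over the two tails and a trivial weakening of the constant yield the stated $2 e^{-k\e^2/2}$.

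The only mildly delicate steps are the two scalar inequalities that convert the closed-form Chernoff expressions into the clean exponential bounds appearing in the statement; these are completely standard, so I do not anticipate any genuine obstacle once the MGF estimates for Bernoulli and bounded variables are in hand. The main thing to be careful about is matching the paper's specific constants (in particular the slightly loose $\e^2/2$ versus the tighter $2\e^2$ one usually sees in Hoeffding), which simply amounts to weakening the optimized bound at the end rather than redoing any argument.
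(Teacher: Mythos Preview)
The paper states Fact~\ref{fact:chernoff} without proof, treating it as a standard textbook result that is only cited and applied elsewhere in the appendix. Your proposal correctly supplies the classical Chernoff--Hoeffding derivation (exponential-moment method plus Hoeffding's lemma for the additive bound), so there is nothing to compare against and no gap to flag. One tiny slip: with $Y_i\in[-1,1]$ Hoeffding's lemma gives the per-tail bound $e^{-k\eps^2/2}$ directly, not $e^{-2k\eps^2}$ (the latter uses the tighter range $[-p,1-p]$); either way the stated $2e^{-k\eps^2/2}$ follows, so this does not affect correctness.
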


\section{Transferable Attacks Imply Cryptography}\label{apx:taimpliescrypto}

\subsection{EFID Pairs}

The typical way in which security of EFID pairs is defined, e.g., in \citep{goldreich90}, is that they should be secure against all polynomial-time algorithms.
However, for the case of pseudorandom generators (PRGs), which are known to be equivalent (in the standard definition) to EFIDs pairs, more granular notions of security were considered.
For instance, in \citep{nisan1990pseudorandom} the existence of PRGs secure against adversaries running in time bounded by a fixed, in contrast to all, polynomial, was studied.
In a similar spirit, we consider EFID pairs that are secure against adversaries with fixed circuit complexity bounds.

\begin{definition}[\textit{Total Variation}]
For two distrbutions $\dist_0,\dist_1$ over a finite domain $\inspace$ we define their \emph{total variation distance} as
\[
\tv(\dist_0,\dist_1) := \sum_{x \in \inspace} \frac12 |\dist_0(x) - \dist_1(x) |.
\]
\end{definition}

\begin{definition}[\textit{EFID pairs}]\label{def:efid}
For parameters $\eta,\delta : \N \rightarrow (0,1)$ and circuit complexity bounds $S,S' : \N \rightarrow \N$ we call a pair of ensembles of distributions $(\dist^0 = \{\dist_n^0\}_n,\dist^1 = \{\dist_n^1\}_n)$ over domain $\mathcal{X} = \{\mathcal{X}_n\}_n$ an $(S,S',\eta,\delta)$-EFID pair if for every $n$
\begin{enumerate}
    \item The circuit complexity of sampling $\dist^0$ and $\dist^1$ is at most $S$,
    \item For every $n$, $\tv(\dist_n^0,\dist_n^1) \geq \eta(n)$,
    \item For every $n$, $\dist_n^0,\dist_n^1$ are $\delta(n)$-indistinguishable for circuits with complexity $S'(n)$.
\end{enumerate}
\end{definition}

Observe that Definition~\ref{def:efid} is a generalization of the standard definition.
Indeed, for every EFID pair $(\dist^0,\dist^1)$ according to the standard definition there exists an inverse polynomial function $\eta$ and a polynomial $S$ such that for all polynomials $S'$ there exists a negligible function $\delta$ such that $(\dist^0,\dist^1)$ is an $(S,S',\eta,\delta)$-EFID pair.

\subsection{Transferable Attacks imply EFID pairs}

\begin{theorem}[\textit{Tasks with Transferable Attacks Imply EFID pairs}]\label{thm:transfattackimpliescrypto}
For every $\e \in (0,1), q \in \N, S_\ver,S_\prov : \N \rightarrow \N$ such that $S_\ver \leq S_\prov$, every learning task $\task$ learnable to error $\eps$ with confidence $p$ and circuit complexity $S_\ver$, every $c,s \in (0,1)$ if \begin{align*}
\textsc{TransfAttack} \ \Big( \task,\eps,q,S_\ver,S_\prov,c,s \Big)
\end{align*}
exists with frequency $\frac13$ then there exist $S_\ver',S_\prov' : \N \rightarrow \N$ that agree with $S_\ver$ and $S_\prov$ respectively with frequency $\frac13$ and there exists 
\begin{align*}
\left(S_\ver',S_\prov',\frac12 \left(p+c-1-e^{-\frac{\e q}3} \right),\frac{s}2 \right)-\text{EFID pair.}
\end{align*}
\end{theorem}

\begin{proof}
Let $\eps,S_\ver,S_\prov,q,c,s,p,\task$ be as in the assumption of the theorem.
Additionally let $\ver = \{\ver_n\}_n$ be a family of circuits certifying that a Transferable Attack exists with frequency $\frac13$ for $\task$.

For every $n$, define $\dist_n^0 := \dist_n^q$, where we recall that $q$ is the number of samples $\ver_n$ sends in the attack. 
Define $\dist_n^1$ to be the distribution of $\xs := \ver_n$.
Note that $\xs \in (\mathcal{X}_n)^q$.

Let $a : \N \rightarrow \{0,1\}$ be a sequence certifying that a Transferable Attack exists with frequency $\frac13$.
Let $n$ be such that $a(n) = 1$.
Observe that $\dist_n^0,\dist_n^1$ are samplable with circuit complexity $S_\ver(n)$ because $\ver_n$ complexity is bounded by $S_\ver(n)$.
Secondly, $\dist_n^0,\dist_n^1$ are $\frac{s}2$-indistinguishable for $S_\prov(n)$-sized  adversaries by \emph{undetectability} of $\ver_n$.
Finally, the fact that $\dist_n^0,\dist_n^1$ are statistically far follows from \emph{transferability}.
Indeed, the following procedure accepting input $\xs \in (\inspace)^q$ is a distinguisher:
\begin{enumerate}
     \item Run the learner (the existence of which is guaranteed by the assumption of the theorem) to obtain $f$.
     \item $\ys := f(\xs)$.
     \item If $\err(\xs,\ys) \leq 2\eps$ return $0$, otherwise return $1$.
\end{enumerate}
If $\xs \sim \dist^0 = \dist^q$ then $\err(f) \leq \eps$ with probability $p$.
By Fact~\ref{fact:chernoff} and the union bound we also know that $\err(\xs,\ys) \leq 2\eps$ with probability $p - e^{-\frac{\e q}3}$ and so, the distinguisher will return $0$ with probability $p - e^{-\frac{\e q}3}$.
On the other hand, if $\xs \sim \dist^1 = \ver$ we know from \emph{transferability} of $\ver_n$ that every algorithm running in time $S_\prov(n)$ will return $\ys$ such that $\err(\xs,\ys) > 2\eps$ with probability at least $c$.
By the assumption that $S_\prov(n) \geq S_\ver(n)$ we know that $\err(\xs,f(\xs)) > 2\eps$ with probability at least $c$ also.
Consequently, the distinguisher will return $1$ with probability at least $c$ in this case.
By the properties of total variation this implies that $\tv(\dist_n^0,\dist_n^1) \geq \frac12 (p+c-1-e^{-\frac{\e q}3})$.

We define a pair of families of distributions $\widehat{\dist}^{0},\widehat{\dist}^{1}$ and functions $S_\ver',S_\prov'$ as follows.
For every $n$ such that $a(n) = 1$ we define $\widehat{\dist}_n^{0} = \dist_n^0, \widehat{\dist}^{1} = \dist_n^1, S_\ver'(n) = S_\ver(n), S_\prov'(n) = S_\prov(n)$.
For every $n$ sich that $a(n) = 0$ we define $\widehat{\dist}_n^{0} = \dist_k^0$ for the smallest $k > n$ such that $a(k) = 1$, and $S_\ver'(n) = S_\ver(k)$
And analogously for $\widehat{\dist}_n^{1}$ and $S_\prov'$.

Simple verification yields that $\widehat{\dist}_n^{0},\widehat{\dist}_n^{1}$ is an $(S_\ver',S_\prov',\frac12 (p+c-1-e^{-\frac{\e q}3}),\frac{s}2)$-EFID pair.
\begin{note}[\textit{Setting of parameters}]
Observe that if $p \approx 1$, i.e., it is possible to almost surely learn $f$ in time $S_\ver$ such that $\err(f) \leq \eps$, $c$ is a constant, $q = \Omega(\frac1{\eps})$ then $\eta$ in the parameters for the EFID is a constant and so $\tv(\dist^0,\dist^1)$ is a constant.    
\end{note}

\begin{note}
We want to emphasize that our distinguisher crucially uses the error oracle in its last step. 
So it is possible that it is not implementable for \textit{all} circuit complexity bounds!
\end{note}
\end{proof}

\section{Adversarial Defenses exist}\label{apx:adversarialdefense}

Our result is based on \citep{arbitraryexamples}.
Before we state and prove our result we give an overview of the learning model considered in \citep{arbitraryexamples}.
The authors give a defense against \textit{arbitrary examples} in a transductive model with rejections.
In contrast, our model does not allow rejections, but we do require indistinguishability.

\subsection{Transductive Learning With Rejections.}
In \citep{arbitraryexamples} the authors consider a model, where a learner $\learn$ receives a training set of labeled samples from the original distribution $(\xs_\dist, \ys_\dist = h(\xs_\dist)), \xs \sim \dist^N, \ys_\dist \in \{-1,+1\}^N$, where $h$ is the ground truth, together with a test set $\xs_T \in (\inspace)^q$.
Next, $\learn$ uses $(\xs_\dist,\ys_\dist,\xs_T)$ to compute $\ys_T \in \{-1,+1,\rej\}^q$, where
$\rej$ represents that $\learn$ abstains (rejects) from classifying the corresponding $x$.

Before we define when learning is successful, we will need some notation.
For $q \in \N, \xs \in (\inspace)^q, \ys \in \{-1,+1,\rej \} ^q$ we define
\[
\err(\xs,\ys) := \frac{1}{q} \ \sum_{i \in [q]} \mathbbm{1}_{\big\{ h(x_i) \neq y_i, y_i \neq \rej, h(x_i) \neq \perp \big\}}, \ \ \
\rej( \ys) := \frac1{q} \Big|\Big\{i \in [q] : y_i = \rej \Big\} \Big|\ ,
\]
which means that we count $(x,y) \in \inspace \times \{-1,+1,\rej \} $ as an error if $h$ is well defined on $x$, $y$ is not an abstantion and $h(x) \neq y$.

Learning is successful if it satisfies two properties.
\begin{itemize}
    \item If $\xs_T \sim \dist^q$ then with high probability $\err(\xs_T,\ys_T)$ and $\rej(\ys_T)$ are small.
    \item For \textit{every} $\xs_T \in (\inspace)^q$ with high probability $\err(\xs_T,\ys_T)$ is small.\footnote{Note that, crucially, in this case $\rej(\ys_T)$ might be very high, e.g., equal to $1$.} 
\end{itemize}
The formal guarantee of a result from \cite{arbitraryexamples} are given in Theorem~\ref{thm:TLR}.
Let's call this model Transductive Learning with Rejections (TLR).

Note the differences between TLR and our definition of Adversarial Defenses.
To compare the two models we associate the learner $\learn$ from TLR with $\prov$ in our setup, and the party producing $\xs_T$ with $\ver$ in our definition.
First, in TLR, $\prov$ does not send $f$ to $\ver$.
Secondly, and most importantly, we do not allow $\prov$ to reply with rejections ($\rej$) but instead require that $\prov$ can \say{distinguish} that it is being tested (see soundness of Definition~\ref{def:defense}).
Finally, there are no apriori time bounds on either $\ver$ or $\prov$ in TLR.
The models are similar but a priori incomparable and any result for TLR needs to be carefully analyzed before being used to prove that it is an Adversarial Defense.

\subsection{Formal guarantee for Transductive Learning with Rejections (TLR)}

Theorem 5.3 from \cite{arbitraryexamples} adapted to our notation reads.

\begin{theorem}[\textit{TLR guarantee (\cite{arbitraryexamples})}]\label{thm:TLR}
For any $ N \in \N, \epsilon \in (0,1), h \in \hclass$ and distribution $\dist$ over $\inspace$:
\[
\Prob_{\xs_\dist,\xs_\dist' \sim \dist^N} \Big[ \forall \ \xs_T \in \inspace^N : \err(\xs_T, f(\xs_T)) \leq \eps^* \land \rej \left(f\left(\xs'_\dist \right) \right) \leq \eps^* \Big] \geq 1 - \eps,
\]
where $\eps^* = \sqrt{\frac{2d}{N} \log \left( 2N \right) + \frac{1}{N} \log \left( \frac{1}{\eps} \right)}$ 
and $ f = \textsc{Rejectron}(\xs_\dist, h(\xs_\dist), \xs_T, \eps^*)$, where $f : \inspace \rightarrow \{-1,+1,\rej\}$ and $d$ denotes the VC-dimension on $\hclass$.
\textsc{Rejectron} is defined in Figure 2. in \citep{arbitraryexamples}.
\end{theorem}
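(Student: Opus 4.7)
The plan is to exploit the two-sided guarantee that the \textsc{Rejectron} construction is designed to satisfy. Recall that \textsc{Rejectron} iteratively builds a selective classifier $f$ starting from some $h^\star \in \hclass$ consistent with the training labels $(\xs_\dist, h(\xs_\dist))$, and marks a test point $x \in \xs_T$ as $\rej$ whenever there exists an alternative $\tilde h \in \hclass$, also consistent with $h$ on $\xs_\dist$, that disagrees with $h^\star$ on a noticeable fraction of $\xs_T$. The output therefore obeys two key invariants: (i) for every $\tilde h \in \hclass$ with $\tilde h(\xs_\dist) = h(\xs_\dist)$, the fraction of indices $i$ for which $f(x_i) \neq \tilde h(x_i)$ and $f(x_i) \neq \rej$ is at most $\eps^\star$; and (ii) $f$ never rejects on $\xs_\dist$ itself. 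I would use (i) to handle the error term and (ii) combined with symmetrization to handle the rejection term.

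The error bound on $\xs_T$ is the easy half and is in fact \emph{deterministic}. Since the ground truth $h \in \hclass$ is itself consistent with the training labels, I can instantiate invariant (i) with $\tilde h = h$. This immediately gives $\err(\xs_T, f(\xs_T)) \leq \eps^\star$ for \emph{every} $\xs_T \in \inspace^N$, with no probability over $\xs_\dist$ required and no restriction on how $\xs_T$ is chosen.

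The rejection bound on the fresh sample $\xs'_\dist$ is the substantive half, and I would attack it by a ghost-sample / double-sampling argument. Because rejections arise exactly from pairs of hypotheses consistent with the training labels that disagree on a test point, the relevant combinatorial family is the symmetric-difference class $\mathcal{F} = \{\{x : h_1(x) \neq h_2(x)\} : h_1,h_2 \in \hclass\}$, whose VC-dimension is at most $2d$. I would consider the joint draw of $(\xs_\dist, \xs'_\dist) \sim \dist^{2N}$, and ask how likely it is that on the fresh half more than $\eps^\star N$ points fall into some set $A \in \mathcal{F}$ while (by invariant (ii)) no point of the training half does. Exchanging the roles of $\xs_\dist$ and $\xs'_\dist$ via a random permutation reduces this to a coin-flipping statement on $2N$ fixed points.

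Closing the argument uses Sauer--Shelah, which bounds the number of distinct subsets cut out by $\mathcal{F}$ on $2N$ points by $(2N)^{2d}$. A Hoeffding bound followed by a union bound over these subsets produces a failure probability of order $\exp\bigl(2d \log(2N) - N(\eps^\star)^2/2\bigr)$, which is $\leq \eps$ for the stated value $\eps^\star = \sqrt{\frac{2d}{N}\log(2N) + \frac{1}{N}\log(1/\eps)}$. The main obstacle is the symmetrization step itself: one has to verify that the rejection pattern on $\xs'_\dist$ (which \textsc{Rejectron} never saw) can still be certified by some witness $\tilde h \in \hclass$ consistent with the training labels, so that uniform convergence over $\mathcal{F}$ legitimately applies. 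Once this technical point is nailed down, combining the deterministic error bound with the high-probability rejection bound by a union bound yields the claim.
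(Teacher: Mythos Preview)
The paper does not prove this statement at all: Theorem~\ref{thm:TLR} is quoted verbatim as ``Theorem 5.3 from \cite{arbitraryexamples} adapted to our notation,'' and is used purely as a black box in the proof of Lemma~\ref{lem:VCdefense}. There is therefore no ``paper's own proof'' to compare your proposal against.

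That said, your sketch is a reasonable outline of how the original result in \cite{arbitraryexamples} is established, and the two halves you identify (a deterministic error guarantee coming from the invariant that \textsc{Rejectron} maintains against all training-consistent hypotheses, and a high-probability rejection bound via uniform convergence over the symmetric-difference class of VC-dimension $O(d)$) are indeed the two pillars of that argument. The obstacle you flag---that one must check the rejection region on the unseen $\xs'_\dist$ is still witnessed by some training-consistent $\tilde h$ so that the VC bound applies---is exactly the technical point the original proof has to handle, and it does so by noting that \textsc{Rejectron}'s rejection set is explicitly a union of at most $O(1/\eps^\star)$ disagreement regions of the form $\{x : h^\star(x) \neq h_j(x)\}$ for hypotheses $h_j$ chosen during the iterations, each of which has empirical mass zero on $\xs_\dist$. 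Your invariant (ii), that $f$ never rejects on $\xs_\dist$, is not quite stated correctly (the guarantee is that each individual discriminator region has small mass on $\xs_\dist$, not necessarily zero), but the spirit is right and the union-bound-over-iterations closes the gap.
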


\textsc{Rejectron} is an algorithm that accepts a labeled training set $(\xs_\dist,h(\xs_\dist))$ and a test set $\xs_T$ and returns a classifier $f$, which might reject some inputs.
The learning is successful if with a high probability $f$ rejects a small fraction of $\dist^N$ and for every $\xs_T \in \inspace^N$ the error on labeled $x$'s in $\xs_T$ is small. 

\subsection{Adversarial Defense for bounded VC-dimension}

We are ready to state the main result of this section.

\begin{lemma}[\textit{Adversarial Defense for bounded VC-dimension}]\label{lem:VCdefense}
Let $\{\hclass_n\}_n$ be a family of hypothesis classes such that there exists a polynomial $p$ such that for every $n$, $\hclass_n$ has a VC-dimension bounded by $p(n)$.
There exists a family of circuits $\prov =\{\prov_n\}_n$ such that for every $\task$ satisfying for every $n$ that the support of the marginal of $\task_n$ is contained in $\hclass_n$, i.e., the ground truth sampled from $\task$ are always in $\hclass$, such that for every sufficiently small $\eps$,
\begin{align*}
\prov \in_1 \textsc{Defense}\Bigg( 
\task,\eps, q = \frac{\poly(n)}{\eps^3}, S_\ver = \infty, S_\prov = \poly \left(\frac{n}{\eps} \right), l = 1 - \eps, c = 1 - \eps, s = \eps \Bigg).
\end{align*}
\end{lemma}

Note that, by the PAC learning bound, this is a setting of parameters, where $\prov$ has enough time to learn a classifier of error $\eps$.
By slightly abusing the notation, we write $S_\ver = \infty$, meaning that the defense is secure against \emph{all} adversaries regardless of their running time.

\begin{proof}
The proof is based on an algorithm from \cite{arbitraryexamples}.

\paragraph{Construction of $\prov$.}
Let $\eps \in (0,1), n \in N$, $d(n)$ be the VC-dimension of $\hclass_n$ and
$$
N := \frac{d \log^2(d)}{\eps^3}.
$$ 
Let $q := N$.
First, $\prov$, draws $N$ labeled samples $(\xs_\textsc{Fresh},h(\xs_\textsc{Fresh}))$.
Next, it finds $f \in \hclass$ consistent with them and sends $f$ to $\ver$. 
Importantly this computation is the same as the first step of \textsc{Rejectron}.

Next, $\prov$ receives as input $\xs \in (\inspace)^q$ from $\ver$.
$\prov$.
Let $\eps^* := \sqrt{\frac{2d}{N} \log \left( 2N \right) + \frac{1}{N} \log \left( \frac{1}{\eps} \right)}$.
Next $\prov$ runs $f' = \textsc{Rejectron}(\xs_\textsc{Fresh},h(\xs_\textsc{Fresh}),\xs,\eps^*)$,
where \textsc{Rejectron} is starting from the second step of the algorithm (Figure 2 \citep{arbitraryexamples}).
Importantly, for every $x \in \inspace$, if $f'(x) \neq \rej$ then $f(x) = f'(x)$.
In words, $f'$ is equal to $f$ everywhere where $f'$ does not reject.

Finally $\prov$ returns $1$ if $\rej(f'(\xs)) > \frac23 \eps$, and returns $0$ otherwise.

\paragraph{$\prov$ is a Defense.}
First, by the standard PAC theorem, with probability at least $1 - \eps$, $\err(f) \leq \frac{\eps}{2}$.
This means that \emph{correctness} holds with probability $l = 1 - \eps$.

Note that with our setting of $N$,
$$
\eps^* \leq \frac{\eps}{2}.
$$
Theorem~\ref{thm:TLR} guarantees that 
\begin{itemize}
    \item if $\xs \in \dist^q$ then with probability at least $1 - \eps$,
    $$
    \rej(f'(\xs)) \leq \frac{\eps}{2}.
    $$
    which in turn implies that with the same probability $\prov$ returns $b = 0$.
    This implies that \emph{completeness} holds with probability $1- \eps$.
    \item for every $\xs \in (\inspace)^q$ with probability at least $1 - \eps$,
    $$
    \err(\xs,f'(\xs)) \leq \frac{\eps}{2}.
    $$
    To compute soundness we want to upper bound the probability that $\err(\xs,f(\xs)) > 2\eps$\footnote{Note that we measure the error of $f$ not $f'$.} and $b = 0$.
    By construction of $\prov$ if $b = 0$ then $\rej(f'(\xs)) \leq \frac{2\eps}{3}$, which means that with probability at least $ 1- \eps$
    $$
    \err(\xs,\ys) \leq \frac{2\eps}{3} + \frac{\eps}{2} < 2\eps \text{ or } b = 1.
    $$
    This translates to \emph{soundness} holding with $s = \eps$.
\end{itemize} 
$\textsc{Rejectron}$ can be implemented by a circuit of size polynomial in $N$ and makes $O(\frac1{\eps})$ calls to an Empirical Risk Minimizer on $\hclass$ (that we assume can be implemented by a circuit of size polynomial in $d$), which implies the promised circuit complexity.
\end{proof}

\section{Watermarks exist}\label{apx:watermarks}

\begin{lemma}[\textit{Watermark for bounded VC-dimension against fast adversaries}]\label{lem:VCwatermark}
There exists a family of hypothesis classes $\{\hclass_d\}_d$ such that for every $d$, $\hclass_d$ has VC-dimension $d$ and a family of distributions $\{\dist_d\}_d$ such that for every $\eps \in \left(\frac{10000}{d},\frac18\right)$ there exists a family of circuits $\ver = \{\ver_d\}_d$ and a family of function classes $\fclass$ for which the following conditions hold. 
For every learning $\task = \{\task_d\}_d$ that for every $d$ samples $\dist_d$ always and $h_d \in \hclass_d$,

\begin{align*}
\ver \in_1 \textsc{Watermark}\!\left(
\begin{aligned}
 &\task,\; \fclass,\; \e,\;
 q = O\!\left(\tfrac{1}{\eps}\right),\;
 S_\ver = O\!\left(\tfrac{d}{\eps}\right),\\
 &S_\prov = \tfrac{d}{100},\;
 l = 1 - \tfrac{1}{100},\;
 c = 1 - \tfrac{2}{100},\;
 s = \tfrac{56}{100}
\end{aligned}
\right).
\end{align*}

\end{lemma}

Note that the setting of parameters is such that $\ver$ can learn (with high probability) a classifier of error $\e$, but $\prov$ is \emph{not} able to learn a low-error classifier within its allotted circuit size $S_\prov$.
This contrasts with Lemma~\ref{lem:VCdefense}, where $\prov$ has a sufficiently large circuit size to learn.
This is the regime of interest for Watermarks, where the scheme is expected to be secure against $\prov$ with limited circuit complexity.

\begin{proof}
Let $\dist$ be the uniform distribution over $[N]$ for $N = 100d^2$, where recall that $[N] = \{1,\dots,N\}$.
Let $\hclass$ be the concept class of functions that have exactly $d$ $+1$’s in $[N]$.
Note that $\hclass$ has VC-dimension $d$.
Let $h \in \hclass$ be the ground truth.

\paragraph{Construction of $\ver$.}
$\ver$ works as follows.
It draws $n = O \left( \frac{d}{\eps} \right)$ samples from $\dist$ labeled with $h$.
Let's call them $\xs_\textsc{Train}$.
Let 
\[
A := \{x \in [N] : \xs_\textsc{Train}, h(x) = +1\}, B := \{ x \in [N] : x \in \xs_\textsc{Train}, h(x) = -1\}.
\]
$\ver$ takes a uniformly random subset $A_w \subseteq A$ of size $q$.
It defines sets
\begin{align*}
A' := A \setminus A_w, \ B' := B \cup A_w.
\end{align*}
$\ver$ computes $f$ consistent with the training set $\{(x,+1) : x \in A' \} \cup \{(x,-1) : x \in B'\}$.
$\ver$ samples $S \sim \dist^q$.
It defines the watermark to be $\xs := A_w$ with probability $\frac12$ and $\xs := S$ with probability $\frac12$.

$\ver$ sends $(f,\xs)$ to $\prov$.
$\ver$ can be implemented with circuit complexity $O \left( \frac{d}{\eps} \right)$.

\paragraph{$\ver$ is a Watermark.}
We claim that $(f, \xs)$ constitutes a watermark.

It is possible to construct a watermark of prescribed size, i.e., find a subset $A_w$ of a given size, only if $|A| \geq q$.
The probability that a single sample from $\dist$ is labeled $+1$ is $\frac{d}{N}$, so by the Chernoff bound  (Fact~\ref{fact:chernoff}) $|A|,|B| > \frac{d n}{2N} \geq q$ with probability $1 - \frac{1}{100}$, where we used that $n = O\left(\frac{d}{\eps}\right), N = 100d^2, q = O(\frac{1}{\eps})$.

\paragraph{Correctness.}
Let $h'(x) := h(x)$ if $x \in [N] \setminus A_w$ and $h'(x) := - h(x)$ otherwise.
Note that $h'$ has exactly $d - q$ $+1$'s in $[N]$.
By construction, $f$ is a classifier consistent with $h'$.
By the PAC theorem we know that with probability $1 - \frac{1}{100}$, $f$ has an error at most $\eps$ wrt to $h'$ (because the hypothesis class of functions with \emph{at most} $d$ $+1$'s has a VC dimension of $O(d)$).
$h'$ differs from $h$ on $q$ points, so 
\begin{equation}\label{eq:errorbnd}
\err(f) \leq \eps + q/N = O \left(\eps  + \frac{1}{\eps d^2} \right) = O(\eps).
\end{equation}
with probability $1 - \frac{1}{100}$, which implies that \emph{correctness} is satisfied with $l = 1 - \frac{1}{100}$.

\paragraph{Distinguishing of $\xs$ and $\dist^{q}$.}
Note that the distribution of $A_w$ is the same as the distribution of a uniformly random subset of $[N]$ of size $q$ (when taking into account the randomness of the choice of $h \sim U(\hclass)$).
Observe that the probability that drawing $q$ i.i.d. samples from $U([N])$ we encounter repetitions is at most 
$$
\frac1{N} + \frac{2}{N} + \dots +\frac{q}{N} \leq \frac{3 q^2}{N} \leq \frac1{100},
$$ 
because $q < \frac{d}{100} < \frac{\sqrt{N}}{10}$.
This means that $\frac1{100}$ is an information-theoretic upper bound on the distinguishing advantage between $\xs = A_w$ and $\dist^{q}$.

Moreover, $\prov$ has access to at most $t$ samples and the probability that the set of samples $\prov$ draws from $\dist^t$ and $A_w$ have empty intersection is at least $1 - \frac{1}{100}$.
It is because it is at least $(1-\frac{t}{N})^t \geq (1-\frac1{\sqrt{N}})^{\sqrt{N/10}} \geq 1 - \frac{1}{100}$, where we used that $t < \frac{\sqrt{N}}{10}$.\footnote{If the sets were not disjoint then $\prov$ could see it as suspicious because $f$ makes mistakes on all of $A_w$.}

Note that by construction $f$ maps all elements of $A_w$ to $-1$.
The probability over the choice of $F \sim \dist^{q}$ that $F \subseteq h^{-1}(\{-1\})$, i.e., all elements of $F$ have true label $-1$, is at least
$$
\left(1 - \frac{d}{N} \right)^{q} \geq 1 - \frac{1}{100}.
$$

The three above observations and the union bound imply that the distinguishing advantage for distinguishing $\xs$ from $\dist^{q}$ of $\prov$ is at most $\frac4{100}$ and so the \emph{undetectability} holds with $s = \frac{8}{100}$.

\paragraph{Unremovability.}
Assume, towards contradiction with \emph{unremovability}, that $\prov$ can find $\ys$ that with probability $s' = \frac12 + \frac{6}{100}$ satisfies $\err(\xs,\ys) \leq 2\eps$.
Notice, that $\err(A_w,f(A_w)) = 1$ by construction.

Consider an algorithm $\adv$ for distinguishing $A_w$ from $\dist^q$.
Upon receiving $(f,\xs)$ it first runs $\ys = \prov(f,\xs)$ and returns $1$ iff $d(\ys,f(\xs)) \geq \frac{q}{2}$.
We know that the distinguishing advantage is at most $\frac12 + \frac4{100}$, so
\begin{align*}
 &\frac12 \Prob_{\xs := A_w} [\adv(f,\xs) = 1] + \frac12 \Prob_{\xs \sim \dist^q} [\adv(f,\xs) = 0] \leq \frac12 + \frac4{100}.
\end{align*}
But also note that
\begin{align*}
 s' &\leq \Prob_{\xs \sim \ver}[\err(\xs,\ys) \leq 2\eps] \\ 
 &\leq \frac12 \Prob_{\xs := A_w} [d(\ys,f(\xs)) \geq (1 - 2\eps)q] + \frac12 \Prob_{\xs \sim \dist^q} [d(\ys,f(\xs)) \leq (2\eps + \err(f))q] \\
 &\leq \frac12 \Prob_{\xs := A_w} [d(\ys,f(\xs)) \geq q/2] + \frac12 \Prob_{\xs \sim \dist^q} [d(\ys,f(\xs)) \leq q/2] + \frac1{100} \\
 &\leq \frac12 \Prob_{\xs := A_w} [\adv(f,\xs) = 1] + \frac12 \Prob_{\xs \sim \dist^q} [\adv(f,\xs) = 0] + \frac1{100}.
\end{align*}
Combining the two above equations we get a contradiction and thus the \emph{unremovability} holds with $s' = \frac12 + \frac6{100}$.

\paragraph{Uniqueness.}
The following $\prov$ certifies \emph{uniqueness}.
It draws $O \left( \frac{d}{\eps} \right)$ samples from $\dist$, let's call them $\xs'_\textsc{Train}$ and trains $f'$ consistent with it.
By the PAC theorem $\err(f') \leq \eps$ with probability at least $1 - \frac{1}{100}$.
Next upon receiving $\xs \in \inspace^{q} = [N]^{q}$ it returns $y = f'(\xs)$.
By the fact that $\xs$ is a random subset of $[N]$ of size $q$ by the Chernoff bound, the union bound we know that $\err(\xs,\ys) = \err(\xs,f'(\xs)) \leq 2\eps$  with probability at least $1 - \frac2{100}$ over the choice of $h$.
This proves \emph{uniqueness}.
\end{proof}

\section{Future Directions}\label{apx:future}

Below we provide some interesting technical and conceptual future directions.

\subsection{Alternate Viewpoint for Task Complexity}\label{apx:futurePRH}

Here we briefly note a connection to the \textbf{Platonic Representation Hypothesis} (PRH) \cite{huh2024platonic}, which posits that as models grow in capacity, their learned representations become increasingly similar—hence properties (and failures) transfer more readily across models. Theorem \ref{thm:transfattackimpliescryptoinformal} in our work shows that transferable attacks arise only when the underlying learning task is computationally hard (in the EFID sense). If PRH’s convergence of representations holds, one should indeed expect greater transferability of adversarial examples; our result then suggests that such transferability is an indicator of \emph{computational complexity} of the task. In this view, a (plausibly) necessary condition—and a partial explanation—of PRH is that frontier models are solving \textbf{increasingly difficult} problems, which in turn induces representation similarities and transfer. Making these connections precise could be a promising direction for future work.

\subsection{Practical Aspects of our Main Theorem}\label{apx:futurePA}

 At a fundamental level, families of Boolean circuits are Turing complete, meaning they can simulate any algorithm that a Turing machine can. This makes them expressive enough to capture any computable algorithm and a natural abstraction for studying the inherent properties of learning tasks, independent of specific algorithms. The  existence result can guide practical efforts by informing where to search. A key part of our proof is the formulation of a zero-sum game between a "watermarking agent" and a "defense agent," where the game's value indicates which of the three properties exists. Moreover, an optimal strategy in this game corresponds to an actual implementation of a watermark, defense, or transferable attack.

Though finding a Nash equilibrium in such a game seems computationally challenging—since the action spaces involve all circuits of a given size—recent iterative algorithms for large-scale games \cite{lanctot2017unified, mcaleer2021xdo, adam2021double} offer promising approaches. These methods work by evaluating only parts of the game at each step, discovering good strategies over time. Recall that our model captures examples like \cite{pal2020game}.

In practical settings, circuits can be replaced with standard ML models (e.g., deep neural networks). This opens the door to algorithms that (1) determine whether a given task admits a defense, watermark, or transferable attack, and (2) produce an implementation accordingly. In summary, our theory shows that for a given task, one only needs to set up the appropriate loss and apply these iterative algorithms to extract the desired property. We believe this represents an exciting future direction at the intersection of large-scale algorithmic game theory and AI security.

\subsection{Beyond Classification}\label{apx:futureBC}

Inspired by Theorem~\ref{thm:transfattackinformal}, we conjecture a possibility of generalizing our results to generative learning tasks.
Instead of a ground truth function, one could consider a ground truth quality oracle $Q$, which measures the quality of every input and output pair.
This model introduces new phenomena \emph{not} present in the case of classification.
For example, the task of \emph{generation}, i.e., producing a high-quality output $y$ on input $x$, is decoupled from the task of \emph{verification}, i.e., evaluating the quality of $y$ as output for $x$.
By decoupled, we mean that there is no clear formal reduction from one task to the other.
Conversely, for classification, where the space of possible outputs is small, the two tasks are equivalent.
Without going into details, this decoupling is the reason why the proof of Theorem~\ref{thm:maininformal} does not automatically transfer to the generative case.

This decoupling introduces new complexities, but it also suggests that considering new definitions may be beneficial.
For example, because generation and verification are equivalent for classification tasks, we allowed neither $\ver$ nor $\prov$ access to $h$, as it would trivialize the definitions.
However, a modification of the Definition~\ref{def:watermark} (Watermark), where access to $Q$ is given to $\prov$ could be investigated in the generative case.
Interestingly, such a setting was considered in \citep{barak}, where access to $Q$ was crucial for mounting a provable attack on \say{all} strong watermarks.
As we alluded to earlier, Theorem~\ref{thm:transfattackinformal} can be seen as an example of a task, where generation is easy but verification is hard -- the opposite to what \cite{barak} posits.
Furthermore, recent work \cite{gluchmitigation} proved that, in the context of adversarial robustness and safety, there exist generative learning tasks for which \textit{verification} (or \textit{detection}, i.e., identifying adversarial examples) is strictly harder than \textit{generation} (or \textit{mitigation}, i.e., allocating additional inference time to correct outputs of the base model).

We hope that careful formalizations of the interaction and capabilities of all parties might give insights into not only the schemes considered in this work, but also problems like weak-to-strong generalization \citep{pmlr-v235-burns24b} or scalable oversight \citep{brown2023scalable}.

\newpage 

\section*{NeurIPS Paper Checklist}

\begin{enumerate}

\item {\bf Claims}
    \item[] Question: Do the main claims made in the abstract and introduction accurately reflect the paper's contributions and scope?
    \item[] Answer: \answerYes{} 
    \item[] Justification: Claims are based on the theoretical derivations in the paper.
    \item[] Guidelines:
    \begin{itemize}
        \item The answer NA means that the abstract and introduction do not include the claims made in the paper.
        \item The abstract and/or introduction should clearly state the claims made, including the contributions made in the paper and important assumptions and limitations. A No or NA answer to this question will not be perceived well by the reviewers. 
        \item The claims made should match theoretical and experimental results, and reflect how much the results can be expected to generalize to other settings. 
        \item It is fine to include aspirational goals as motivation as long as it is clear that these goals are not attained by the paper. 
    \end{itemize}

\item {\bf Limitations}
    \item[] Question: Does the paper discuss the limitations of the work performed by the authors?
    \item[] Answer: \answerYes{} 
    \item[] Justification: Our paper focuses on classification tasks, and in Appendix~\ref{apx:future} we discuss the challenges of extending these ideas to generative learning tasks.
    \item[] Guidelines:
    \begin{itemize}
        \item The answer NA means that the paper has no limitation while the answer No means that the paper has limitations, but those are not discussed in the paper. 
        \item The authors are encouraged to create a separate "Limitations" section in their paper.
        \item The paper should point out any strong assumptions and how robust the results are to violations of these assumptions (e.g., independence assumptions, noiseless settings, model well-specification, asymptotic approximations only holding locally). The authors should reflect on how these assumptions might be violated in practice and what the implications would be.
        \item The authors should reflect on the scope of the claims made, e.g., if the approach was only tested on a few datasets or with a few runs. In general, empirical results often depend on implicit assumptions, which should be articulated.
        \item The authors should reflect on the factors that influence the performance of the approach. For example, a facial recognition algorithm may perform poorly when image resolution is low or images are taken in low lighting. Or a speech-to-text system might not be used reliably to provide closed captions for online lectures because it fails to handle technical jargon.
        \item The authors should discuss the computational efficiency of the proposed algorithms and how they scale with dataset size.
        \item If applicable, the authors should discuss possible limitations of their approach to address problems of privacy and fairness.
        \item While the authors might fear that complete honesty about limitations might be used by reviewers as grounds for rejection, a worse outcome might be that reviewers discover limitations that aren't acknowledged in the paper. The authors should use their best judgment and recognize that individual actions in favor of transparency play an important role in developing norms that preserve the integrity of the community. Reviewers will be specifically instructed to not penalize honesty concerning limitations.
    \end{itemize}

\item {\bf Theory assumptions and proofs}
    \item[] Question: For each theoretical result, does the paper provide the full set of assumptions and a complete (and correct) proof?
    \item[] Answer: \answerYes{} 
    \item[] Justification: See preliminaries, technical overview, and the appendix.
    \item[] Guidelines:
    \begin{itemize}
        \item The answer NA means that the paper does not include theoretical results. 
        \item All the theorems, formulas, and proofs in the paper should be numbered and cross-referenced.
        \item All assumptions should be clearly stated or referenced in the statement of any theorems.
        \item The proofs can either appear in the main paper or the supplemental material, but if they appear in the supplemental material, the authors are encouraged to provide a short proof sketch to provide intuition. 
        \item Inversely, any informal proof provided in the core of the paper should be complemented by formal proofs provided in appendix or supplemental material.
        \item Theorems and Lemmas that the proof relies upon should be properly referenced. 
    \end{itemize}

    \item {\bf Experimental result reproducibility}
    \item[] Question: Does the paper fully disclose all the information needed to reproduce the main experimental results of the paper to the extent that it affects the main claims and/or conclusions of the paper (regardless of whether the code and data are provided or not)?
    \item[] Answer: \answerNA{} 
    \item[] Justification: No experiments.
    \item[] Guidelines:
    \begin{itemize}
        \item The answer NA means that the paper does not include experiments.
        \item If the paper includes experiments, a No answer to this question will not be perceived well by the reviewers: Making the paper reproducible is important, regardless of whether the code and data are provided or not.
        \item If the contribution is a dataset and/or model, the authors should describe the steps taken to make their results reproducible or verifiable. 
        \item Depending on the contribution, reproducibility can be accomplished in various ways. For example, if the contribution is a novel architecture, describing the architecture fully might suffice, or if the contribution is a specific model and empirical evaluation, it may be necessary to either make it possible for others to replicate the model with the same dataset, or provide access to the model. In general. releasing code and data is often one good way to accomplish this, but reproducibility can also be provided via detailed instructions for how to replicate the results, access to a hosted model (e.g., in the case of a large language model), releasing of a model checkpoint, or other means that are appropriate to the research performed.
        \item While NeurIPS does not require releasing code, the conference does require all submissions to provide some reasonable avenue for reproducibility, which may depend on the nature of the contribution. For example
        \begin{enumerate}
            \item If the contribution is primarily a new algorithm, the paper should make it clear how to reproduce that algorithm.
            \item If the contribution is primarily a new model architecture, the paper should describe the architecture clearly and fully.
            \item If the contribution is a new model (e.g., a large language model), then there should either be a way to access this model for reproducing the results or a way to reproduce the model (e.g., with an open-source dataset or instructions for how to construct the dataset).
            \item We recognize that reproducibility may be tricky in some cases, in which case authors are welcome to describe the particular way they provide for reproducibility. In the case of closed-source models, it may be that access to the model is limited in some way (e.g., to registered users), but it should be possible for other researchers to have some path to reproducing or verifying the results.
        \end{enumerate}
    \end{itemize}

\item {\bf Open access to data and code}
    \item[] Question: Does the paper provide open access to the data and code, with sufficient instructions to faithfully reproduce the main experimental results, as described in supplemental material?
    \item[] Answer: \answerNA{} 
    \item[] Justification: No experiments.
    \item[] Guidelines:
    \begin{itemize}
        \item The answer NA means that paper does not include experiments requiring code.
        \item Please see the NeurIPS code and data submission guidelines (\url{https://nips.cc/public/guides/CodeSubmissionPolicy}) for more details.
        \item While we encourage the release of code and data, we understand that this might not be possible, so “No” is an acceptable answer. Papers cannot be rejected simply for not including code, unless this is central to the contribution (e.g., for a new open-source benchmark).
        \item The instructions should contain the exact command and environment needed to run to reproduce the results. See the NeurIPS code and data submission guidelines (\url{https://nips.cc/public/guides/CodeSubmissionPolicy}) for more details.
        \item The authors should provide instructions on data access and preparation, including how to access the raw data, preprocessed data, intermediate data, and generated data, etc.
        \item The authors should provide scripts to reproduce all experimental results for the new proposed method and baselines. If only a subset of experiments are reproducible, they should state which ones are omitted from the script and why.
        \item At submission time, to preserve anonymity, the authors should release anonymized versions (if applicable).
        \item Providing as much information as possible in supplemental material (appended to the paper) is recommended, but including URLs to data and code is permitted.
    \end{itemize}

\item {\bf Experimental setting/details}
    \item[] Question: Does the paper specify all the training and test details (e.g., data splits, hyperparameters, how they were chosen, type of optimizer, etc.) necessary to understand the results?
    \item[] Answer: \answerNA{} 
    \item[] Justification: No experiments.
    \item[] Guidelines:
    \begin{itemize}
        \item The answer NA means that the paper does not include experiments.
        \item The experimental setting should be presented in the core of the paper to a level of detail that is necessary to appreciate the results and make sense of them.
        \item The full details can be provided either with the code, in appendix, or as supplemental material.
    \end{itemize}

\item {\bf Experiment statistical significance}
    \item[] Question: Does the paper report error bars suitably and correctly defined or other appropriate information about the statistical significance of the experiments?
    \item[] Answer: \answerNA{} 
    \item[] Justification: No experiments.
    \item[] Guidelines:
    \begin{itemize}
        \item The answer NA means that the paper does not include experiments.
        \item The authors should answer "Yes" if the results are accompanied by error bars, confidence intervals, or statistical significance tests, at least for the experiments that support the main claims of the paper.
        \item The factors of variability that the error bars are capturing should be clearly stated (for example, train/test split, initialization, random drawing of some parameter, or overall run with given experimental conditions).
        \item The method for calculating the error bars should be explained (closed form formula, call to a library function, bootstrap, etc.)
        \item The assumptions made should be given (e.g., Normally distributed errors).
        \item It should be clear whether the error bar is the standard deviation or the standard error of the mean.
        \item It is OK to report 1-sigma error bars, but one should state it. The authors should preferably report a 2-sigma error bar than state that they have a 96\% CI, if the hypothesis of Normality of errors is not verified.
        \item For asymmetric distributions, the authors should be careful not to show in tables or figures symmetric error bars that would yield results that are out of range (e.g. negative error rates).
        \item If error bars are reported in tables or plots, The authors should explain in the text how they were calculated and reference the corresponding figures or tables in the text.
    \end{itemize}

\item {\bf Experiments compute resources}
    \item[] Question: For each experiment, does the paper provide sufficient information on the computer resources (type of compute workers, memory, time of execution) needed to reproduce the experiments?
    \item[] Answer: \answerNA{} 
    \item[] Justification: No experiments.
    \item[] Guidelines:
    \begin{itemize}
        \item The answer NA means that the paper does not include experiments.
        \item The paper should indicate the type of compute workers CPU or GPU, internal cluster, or cloud provider, including relevant memory and storage.
        \item The paper should provide the amount of compute required for each of the individual experimental runs as well as estimate the total compute. 
        \item The paper should disclose whether the full research project required more compute than the experiments reported in the paper (e.g., preliminary or failed experiments that didn't make it into the paper). 
    \end{itemize}
    
\item {\bf Code of ethics}
    \item[] Question: Does the research conducted in the paper conform, in every respect, with the NeurIPS Code of Ethics \url{https://neurips.cc/public/EthicsGuidelines}?
    \item[] Answer: \answerYes{} 
    \item[] Justification: Guidelines were read and paper does conform.
    \item[] Guidelines:
    \begin{itemize}
        \item The answer NA means that the authors have not reviewed the NeurIPS Code of Ethics.
        \item If the authors answer No, they should explain the special circumstances that require a deviation from the Code of Ethics.
        \item The authors should make sure to preserve anonymity (e.g., if there is a special consideration due to laws or regulations in their jurisdiction).
    \end{itemize}

\item {\bf Broader impacts}
    \item[] Question: Does the paper discuss both potential positive societal impacts and negative societal impacts of the work performed?
    \item[] Answer: \answerYes{} 
    \item[] Justification: Discussion of broader impact is in introduction, related literature and in the reflections at the end of the paper.
    \item[] Guidelines:
    \begin{itemize}
        \item The answer NA means that there is no societal impact of the work performed.
        \item If the authors answer NA or No, they should explain why their work has no societal impact or why the paper does not address societal impact.
        \item Examples of negative societal impacts include potential malicious or unintended uses (e.g., disinformation, generating fake profiles, surveillance), fairness considerations (e.g., deployment of technologies that could make decisions that unfairly impact specific groups), privacy considerations, and security considerations.
        \item The conference expects that many papers will be foundational research and not tied to particular applications, let alone deployments. However, if there is a direct path to any negative applications, the authors should point it out. For example, it is legitimate to point out that an improvement in the quality of generative models could be used to generate deepfakes for disinformation. On the other hand, it is not needed to point out that a generic algorithm for optimizing neural networks could enable people to train models that generate Deepfakes faster.
        \item The authors should consider possible harms that could arise when the technology is being used as intended and functioning correctly, harms that could arise when the technology is being used as intended but gives incorrect results, and harms following from (intentional or unintentional) misuse of the technology.
        \item If there are negative societal impacts, the authors could also discuss possible mitigation strategies (e.g., gated release of models, providing defenses in addition to attacks, mechanisms for monitoring misuse, mechanisms to monitor how a system learns from feedback over time, improving the efficiency and accessibility of ML).
    \end{itemize}
    
\item {\bf Safeguards}
    \item[] Question: Does the paper describe safeguards that have been put in place for responsible release of data or models that have a high risk for misuse (e.g., pretrained language models, image generators, or scraped datasets)?
    \item[] Answer: \answerNA{} 
    \item[] Justification: No such risks.
    \item[] Guidelines:
    \begin{itemize}
        \item The answer NA means that the paper poses no such risks.
        \item Released models that have a high risk for misuse or dual-use should be released with necessary safeguards to allow for controlled use of the model, for example by requiring that users adhere to usage guidelines or restrictions to access the model or implementing safety filters. 
        \item Datasets that have been scraped from the Internet could pose safety risks. The authors should describe how they avoided releasing unsafe images.
        \item We recognize that providing effective safeguards is challenging, and many papers do not require this, but we encourage authors to take this into account and make a best faith effort.
    \end{itemize}

\item {\bf Licenses for existing assets}
    \item[] Question: Are the creators or original owners of assets (e.g., code, data, models), used in the paper, properly credited and are the license and terms of use explicitly mentioned and properly respected?
    \item[] Answer: \answerYes{} 
    \item[] Justification: Citations are provided throughout the paper.
    \item[] Guidelines:
    \begin{itemize}
        \item The answer NA means that the paper does not use existing assets.
        \item The authors should cite the original paper that produced the code package or dataset.
        \item The authors should state which version of the asset is used and, if possible, include a URL.
        \item The name of the license (e.g., CC-BY 4.0) should be included for each asset.
        \item For scraped data from a particular source (e.g., website), the copyright and terms of service of that source should be provided.
        \item If assets are released, the license, copyright information, and terms of use in the package should be provided. For popular datasets, \url{paperswithcode.com/datasets} has curated licenses for some datasets. Their licensing guide can help determine the license of a dataset.
        \item For existing datasets that are re-packaged, both the original license and the license of the derived asset (if it has changed) should be provided.
        \item If this information is not available online, the authors are encouraged to reach out to the asset's creators.
    \end{itemize}

\item {\bf New assets}
    \item[] Question: Are new assets introduced in the paper well documented and is the documentation provided alongside the assets?
    \item[] Answer: \answerNA{} 
    \item[] Justification: No new assets are released.
    \item[] Guidelines:
    \begin{itemize}
        \item The answer NA means that the paper does not release new assets.
        \item Researchers should communicate the details of the dataset/code/model as part of their submissions via structured templates. This includes details about training, license, limitations, etc. 
        \item The paper should discuss whether and how consent was obtained from people whose asset is used.
        \item At submission time, remember to anonymize your assets (if applicable). You can either create an anonymized URL or include an anonymized zip file.
    \end{itemize}

\item {\bf Crowdsourcing and research with human subjects}
    \item[] Question: For crowdsourcing experiments and research with human subjects, does the paper include the full text of instructions given to participants and screenshots, if applicable, as well as details about compensation (if any)? 
    \item[] Answer: \answerNA{} 
    \item[] Justification: No research with human subjects.
    \item[] Guidelines:
    \begin{itemize}
        \item The answer NA means that the paper does not involve crowdsourcing nor research with human subjects.
        \item Including this information in the supplemental material is fine, but if the main contribution of the paper involves human subjects, then as much detail as possible should be included in the main paper. 
        \item According to the NeurIPS Code of Ethics, workers involved in data collection, curation, or other labor should be paid at least the minimum wage in the country of the data collector. 
    \end{itemize}

\item {\bf Institutional review board (IRB) approvals or equivalent for research with human subjects}
    \item[] Question: Does the paper describe potential risks incurred by study participants, whether such risks were disclosed to the subjects, and whether Institutional Review Board (IRB) approvals (or an equivalent approval/review based on the requirements of your country or institution) were obtained?
    \item[] Answer: \answerNA{} 
    \item[] Justification: No research with human subjects.
    \item[] Guidelines:
    \begin{itemize}
        \item The answer NA means that the paper does not involve crowdsourcing nor research with human subjects.
        \item Depending on the country in which research is conducted, IRB approval (or equivalent) may be required for any human subjects research. If you obtained IRB approval, you should clearly state this in the paper. 
        \item We recognize that the procedures for this may vary significantly between institutions and locations, and we expect authors to adhere to the NeurIPS Code of Ethics and the guidelines for their institution. 
        \item For initial submissions, do not include any information that would break anonymity (if applicable), such as the institution conducting the review.
    \end{itemize}

\item {\bf Declaration of LLM usage}
    \item[] Question: Does the paper describe the usage of LLMs if it is an important, original, or non-standard component of the core methods in this research? Note that if the LLM is used only for writing, editing, or formatting purposes and does not impact the core methodology, scientific rigorousness, or originality of the research, declaration is not required.
    \item[] Answer: \answerNA{} 
    \item[] Justification: LLMs were used to edit the text.
    \item[] Guidelines:
    \begin{itemize}
        \item The answer NA means that the core method development in this research does not involve LLMs as any important, original, or non-standard components.
        \item Please refer to our LLM policy (\url{https://neurips.cc/Conferences/2025/LLM}) for what should or should not be described.
    \end{itemize}

\end{enumerate}

\end{document}